\documentclass{article}

% if you need to pass options to natbib, use, e.g.:
%     \PassOptionsToPackage{numbers, compress}{natbib}
% before loading neurips_2022

% ready for submission
\usepackage[final]{neurips_2022}
\usepackage[toc,page,header]{appendix}
\usepackage{minitoc}

% to compile a preprint version, e.g., for submission to arXiv, add add the
% [preprint] option:
%     \usepackage[preprint]{neurips_2022}

% to compile a camera-ready version, add the [final] option, e.g.:
%     \usepackage[final]{neurips_2022}

% to avoid loading the natbib package, add option nonatbib:
%    \usepackage[nonatbib]{neurips_2022}

\usepackage[utf8]{inputenc} % allow utf-8 input
\usepackage[T1]{fontenc}    % use 8-bit T1 fonts
\usepackage{hyperref}       % hyperlinks
\usepackage{url}            % simple URL typesetting
\usepackage{booktabs}       % professional-quality tables
\usepackage{amsfonts}       % blackboard math symbols
\usepackage{nicefrac}       % compact symbols for 1/2, etc.
\usepackage{microtype}      % microtypography
\usepackage{xcolor}         % colors
\usepackage{amsmath}
\usepackage{amssymb}
\usepackage{mathtools}
\usepackage{amsthm}
\usepackage{multirow}
\usepackage{tablefootnote}
\usepackage{wrapfig}
\usepackage{graphicx}
\newtheorem{theorem}{Theorem}[section]
\newtheorem{proposition}[theorem]{Proposition}
\newtheorem{lemma}[theorem]{Lemma}
\newtheorem{corollary}[theorem]{Corollary}
\theoremstyle{definition}
\newtheorem{definition}[theorem]{Definition}

\theoremstyle{remark}

\usepackage[linesnumbered,ruled]{algorithm2e}

\DeclareMathOperator*{\argmin}{arg\,min}

\title{Formulating Robustness Against Unforeseen Attacks}

% The \author macro works with any number of authors. There are two commands
% used to separate the names and addresses of multiple authors: \And and \AND.
%
% Using \And between authors leaves it to LaTeX to determine where to break the
% lines. Using \AND forces a line break at that point. So, if LaTeX puts 3 of 4
% authors names on the first line, and the last on the second line, try using
% \AND instead of \And before the third author name.

\author{%
  Sihui Dai \\
%  Electrical and Computer Engineering\\
  Princeton University\\
%  Princeton, NJ 08540 \\
  \texttt{sihuid@princeton.edu} \\
  \And
  Saeed Mahloujifar \\
%  Electrical and Computer Engineering\\
  Princeton University\\
%  Princeton, NJ 08540 \\
  \texttt{sfar@princeton.edu} \\
  \And
  Prateek Mittal \\
%  Electrical and Computer Engineering\\
  Princeton University\\
%  Princeton, NJ 08540 \\
  \texttt{pmittal@princeton.edu} \\
  % examples of more authors
  % \And
  % Coauthor \\
  % Affiliation \\
  % Address \\
  % \texttt{email} \\
  % \AND
  % Coauthor \\
  % Affiliation \\
  % Address \\
  % \texttt{email} \\
  % \And
  % Coauthor \\
  % Affiliation \\
  % Address \\
  % \texttt{email} \\
  % \And
  % Coauthor \\
  % Affiliation \\
  % Address \\
  % \texttt{email} \\
}

\newcommand{\mysubsection}[1]{\subsection{#1}}

\begin{document}

\maketitle

\doparttoc % Tell to minitoc to generate a toc for the parts
\faketableofcontents % Run a fake tableofcontents command for the partocs

\begin{abstract}
Existing defenses against adversarial examples such as adversarial training typically assume that the adversary will conform to a specific or known threat model, such as $\ell_p$ perturbations within a fixed budget. In this paper, we focus on the scenario where there is a mismatch in the threat model assumed by the defense during training, and the actual capabilities of the adversary at test time. We ask the question: if the learner trains against a specific ``source" threat model, when can we expect robustness to generalize to a stronger unknown ``target" threat model during test-time? Our key contribution is to formally define the problem of learning and generalization with an unforeseen adversary, which helps us reason about the increase in adversarial risk from the conventional perspective of a known adversary. Applying our framework, we derive a generalization bound which relates the generalization gap between source and target threat models to variation of the feature extractor, which measures the expected maximum difference between extracted features across a given threat model. Based on our generalization bound, we propose variation regularization (VR) which reduces variation of the feature extractor across the source threat model during training. We empirically demonstrate that using VR can lead to improved generalization to unforeseen attacks during test-time, and combining VR with perceptual adversarial training \citep{laidlaw2020perceptual} achieves state-of-the-art robustness on unforeseen attacks.  Our code is publicly available at  \url{https://github.com/inspire-group/variation-regularization}.
\end{abstract}

%\part{} % Start the document part
%\parttoc % Insert the document TOC

%\vspace{-5pt}
\section{Introduction}
%\vspace{-3pt}
\label{sec:intro}
Neural networks have impressive performance on a variety of datasets \citep{lecun1998gradient, he2015rectifiers, KrizhevskySH17, EveringhamGWWZ10} but can be fooled by imperceptible perturbations known as adversarial examples \citep{szegedy2013intriguing}.  The conventional paradigm to mitigate this threat often assumes that the adversary generates these examples using some known threat model, primarily $\ell_p$ balls of specific radius \citep{cohen2019certified, zhang2020towards, madry2017towards}, and evaluates the performance of defenses based on this assumption. This assumption, however, is unrealistic in practice. In general, the learner does not know exactly what perturbations the adversary will apply during test-time.

To bridge the gap between the setting of robustness studied in current adversarial ML research and robustness in practice, we study the problem of \textit{learning with an unforeseen adversary}.  In this problem, the learner has access to adversarial examples from a proxy ``source" threat model but wants to be robust against a more difficult ``target" threat model used by the adversary during test-time. We ask the following questions:
\begin{enumerate}
\item \textit{When can we expect robustness on the source threat model to generalize to the true unknown target threat model used by the adversary? }

\item \textit{How can we design a learning algorithm that reduces the drop in robustness from source threat model to target threat model?}
\end{enumerate} 
%For example, consider a scenario where a defender trains a model using adversarial training with $\ell_p$ perturbations of size $\epsilon_1$.  
%in adversarial training.  
%During test-time, what happens if the adversary can generate attacks with an $\ell_p$ budget with size $\epsilon_2>\epsilon_1$ or use a completely different threat model such a spatially transformed adversary \cite{XiaoZ0HLS18}.  In these settings, how much degradation in robustness would we expect and what properties of the model influences the amount of degradation?

%We formulate learning under a mismatch in training and test-time threat models as the problem of \textit{learning with an unforeseen adversary} and define the objective of the defender in terms of adversarial risk.  
To address the first question, we introduce \emph{unforeseen adversarial generalizability} which provides a framework for reasoning about what types of learning algorithms produce models that generalize well to unforeseen attacks.  Based on this framework, we derive a generalization bound which relates the difference in adversarial risk across source and target threat models to a quantity we call variation: the expected maximum difference between extracted features across a given threat model.

Our bound addresses the second question; it suggests that learning algorithms that bias towards models with small variation across the source threat model exhibit smaller drop in robustness to particular unforeseen attacks. Thus, we propose variation regularization (VR) to improve robustness to unforeseen attacks.  We then empirically demonstrate that when combined with adversarial training, VR improves generalization to unforeseen attacks during test-time across multiple datasets and architectures.
Our contributions are as follows:

\textbf{We formally define the problem of learning with an unforeseen adversary with respect to adversarial risk.}  We make the case that one way of learning with an unforeseen adversary is to ensure that the gap between the empirical adversarial risk measured on the source adversary and the expected adversarial risk on the target adversary remains small.  To this end, we define \textit{unforeseen adversarial generalizability} which provides a framework for understanding under what conditions we would expect small generalization gap.

\textbf{Under our framework for generalizability, we derive a generalization bound for generalization across threat models.}  Our bound relates the generalization gap to a quantity we define as variation, the expected maximum difference between extracted features across a given threat model.  We demonstrate that under certain conditions, we can decrease this upper bound \textit{while only using information about the source threat model}.

\textbf{Using our bound, we propose a regularization term which we call variation regularization (VR).}  We incorporate this regularization term into adversarial training and perceptual adversarial training \citep{laidlaw2020perceptual}, leading to learning algorithm that we call AT-VR and PAT-VR respectively. We find that VR can lead to improved robustness on unforeseen attacks across datasets such as CIFAR-10, CIFAR-100, and ImageNette over adversarial training without VR.  Additionally, PAT-VR achieves state-of-the-art (SOTA) robust accuracy on LPIPS-based attacks, improving over PAT by 21\% and SOTA robust accuracy on a union of $\ell_{\infty}$, $\ell_2$, spatially transformed \citep{XiaoZ0HLS18}, and recolor attacks \citep{LaidlawF19}.
%\vspace{-5pt}
\section{Related Works}
%\vspace{-3pt}
\label{sec:rel_works}
\textbf{Adversarial examples and defenses} Previous studies have shown that neural networks can be fooled by perturbations known as adversarial examples, which are imperceptible to humans but cause NNs to predict incorrectly with high confidence \citep{szegedy2013intriguing}.  These adversarial examples can be generated by various threat models including $\ell_p$ perturbations, spatial transformations \citep{XiaoZ0HLS18}, recoloring \citep{LaidlawF19}, and broader threat models such as fog and snow distortions \citep{kang2019robustness}.  While many defenses have been proposed, most defenses provide guarantees for specific threat models \citep{cohen2019certified, zhang2020towards, Croce020, yang2020randomized, zhang2020black} or use knowledge of the threat model during training \citep{madry2017towards, zhang2019theoretically, wu2020adversarial, TB19, MainiWK20}.  Adversarial training is a popular defense framework in which a model is trained using adversarial examples generated by a particular threat model, such as $\ell_2$ or $\ell_{\infty}$ attacks \citep{madry2017towards, zhang2019theoretically, wu2020adversarial}.  Prior works have also extended adversarial training to defend against unions of attack types such as unions of $\ell_p$-balls \citep{MainiWK20, TB19} and against stronger adversaries more aligned to human perception \citep{laidlaw2020perceptual}.

\textbf{Bounds for Learning with Adversarial Examples} 
An interesting body of work studies generalization bounds for specific attacks \cite{cullina2018pac,attias2019improved,montasser2019vc,raghunathan2019adversarial,chen2020more,diakonikolas2019nearly,yu2021understanding,diochnos2019lower}. In particular, they study generalization in the setting where the learning algorithm minimizes the adversarial risk on the training set and hopes to generalize to same adversary during test-time.  \citet{montasser2021adversarially} provide bounds for the problem of generalizing to an unknown adversary with oracle access to that adversary during training.  Our work differs since we study generalization and provide bounds under the setting in the learner only has access to samples from a weaker adversary than present at test-time.

\textbf{``Unforeseen" attacks and defenses } While several prior works have studied ``unforeseen" attacks \citep{kang2019robustness, stutz2020confidence, laidlaw2020perceptual, jin2020manifold}, these works are empirical works, and the term ``unforeseen attack" has not been formally defined.  \citet{kang2019robustness} first used the term ``unforeseen attack" when proposing a set of adversarial threat models including Snow, Fog, Gabor, and JPEG to evaluate how well defenses can generalize from $\ell_{\infty}$ and $\ell_2$ to broader threat models.  \citet{stutz2020confidence} and \citet{chen2022revisiting} propose adversarial training based techniques with a mechanism for abstaining on certain inputs to improve generalization from training on $\ell_{\infty}$ to \textit{stronger} attacks including $\ell_p$ attacks of larger norm.  Other defenses against ``unforeseen attacks" consider them to be attacks that are not used during training, but not necessarily stronger than those used in training.  For instance, \citet{laidlaw2020perceptual} propose using LPIPS \citep{zhang2018lpips}, a perceptually aligned image distance metric, to generate adversarial examples during training.  They demonstrate that by training using adversarial examples using this distance metric, they can achieve robustness against a variety of adversarial threat models including $\ell_{\infty}$, $\ell_2$, recoloring, and spatial transformation.  However, the LPIPS attack is the strongest out of all threat models tested and contains a large portion of those threat models.  To resolve these differences in interpretation of ``unforeseen attack", we provide a formal definition of learning with an unforeseen adversary.

\textbf{Domain Generalization} A problem related to generalizing to unforeseen attacks is the problem of domain generalization under covariate shift.  In the domain generalization problem, the learner has access to multiple training distributions and has the goal of generalizing to an unknown test distribution.  \citep{albuquerque2019generalizing} demonstrate that when the test distribution lies within a convex hull of the training distributions, learning is feasible. \citep{ye2021towards} propose a theoretical framework for domain generalization in which they derive a generalization bound in terms of the variation of features across training distributions.  We focus on the problem of generalizing to unforeseen adversaries and demonstrate that a generalization bound in terms of variation of features across the training threat model exists.

%\vspace{-5pt}
\section{Adversarial Learning with an Unforeseen Adversary}
%\vspace{-3pt}
\textbf{Notations} We use $\mathcal{D} = (\mathcal{X}, \mathcal{Y})$ to denote the data distribution and $D_m$ to denote a dataset formed by $m$ iid samples from $\mathcal{D}$.  We use $X$ to denote the support of $\mathcal{X}$. To match learning theory literature, we will refer to the defense as a learning algorithm $\mathcal{A}$, which takes the adversarial threat model and training data as inputs and outputs the learned classifier ($\hat{f} = \mathcal{A}(S, D_m)$ where $S$ is the threat model). We use $\mathcal{F}$ to denote the function class that $\mathcal{A}$ is applied over and $\mathcal{A}(S, D_m) \in \mathcal{F}$. $\mathcal{F} = \mathcal{G} \circ \mathcal{H}$ denotes a function class where $\forall f \in \mathcal{F}$, $f = g \circ h$ where $g \in \mathcal{G}$, $h \in \mathcal{H}$.

\label{sec:formulation}
In this section, we will define what constitutes an unforeseen attack and the learner's goal in the presence of unforeseen attacks.  We then introduce unforeseen adversarial generalizability which provides a framework for reasoning about what types of learning algorithms give models that generalize well to unforeseen adversaries.

\mysubsection{Formulating Learning with an Unforeseen Adversary}
To formulate adversarial learning with an unforeseen adversary, we begin by defining threat model and adversarial risk.  We will then use these definitions to explain the goal of the learner in the presence of an unforeseen adversary.
%\saeed{Let's have a running example of $\ell_2$ attacks so that the reader can contextualize all these definitions.}

\begin{definition}[Threat Model] The \textit{threat model} is defined by a neighborhood function $N(\cdot): X \rightarrow \{0,1\}^{X}$.  For any input $x \in X$, $N(x)$ contains $x$. 
\end{definition}
\begin{definition}[Expected and Empirical Adversarial Risk] We define \textit{expected adversarial risk} for a model $f$ with respect to a threat model $N$ as
 $L_N(f) = \mathbb{E}_{(x, y) \sim \mathcal{D}} \max_{x' \in N(x)} \ell(f(x'), y)$
where $\ell$ is a loss function.  In practice, we generally do not have access to the true data distribution $\mathcal{D}$, but have $m$ iid samples $\{(x_i, y_i)\}_{i=1}^m$.  We can approximate $L_N(f)$ with the \textit{empirical adversarial risk} defined as 
$\hat{L}_N(f) = \frac{1}{m}\sum_{i=1}^m \max_{x_i' \in N(x_i)} \ell(f(x_i'), y_i)$
\end{definition}

In adversarial learning, the learner's goal is to find a function $f \in \mathcal{F}$ that minimizes $L_T$ where $T$ threat model used by the adversary. We call $T$ the \textit{target threat model}.  We call the threat model that the learner has access to during training the \textit{source threat model}.  We divide the adversarial learning problem into 2 cases, learning with a foreseen adversary and learning with an unforeseen adversary.  To distinguish between these 2 cases, we first define the subset operation for threat models.

\begin{definition}[Threat Model Subset and Superset]  We call a threat model $S$ a subset of another threat model $T$ (and $T$ a superset of $S$) if $S(x) \subseteq T(x)$ almost everywhere in $\mathcal{X}$.  We denote this as $S \subseteq T$ (or $T \supseteq S$).  If $S(x) \subset T(x)$ almost everywhere in $\mathcal{X}$, then we call $S$ a strict subset of $T$ (and $T$ a strict superset of $S$) and denote this as $S \subset T$ (or $T \supset S)$. 
\end{definition}

\textbf{Learning with a Foreseen Adversary} In learning with a foreseen adversary, the target threat model $T$ is a subset of the source threat model $S$ ($T \subseteq S$).  The learner has access to $S$ and a dataset $D_m$ of $m$ iid samples from the data distribution $\mathcal{D}$.  The learner would like to use a learning algorithm $\mathcal{A}$ for which $f = \mathcal{A}(S, D_m)$ achieves $L_T(f) < \epsilon$ for some small $\epsilon > 0$. The learner cannot compute $L_T(f)$, but can compute $\hat{L}_S(f) \ge \hat{L}_T(f)$.  This setting of learning with a foreseen adversary represents when the adversary is weaker than assumed by the learner and since $L_S(f) \ge L_T(f)$, which means that as long as the learner can achieve $L_S(f) < \epsilon$, then they are guaranteed that $L_T(f) < \epsilon$.

\textbf{Learning with an Unforeseen Adversary} In learning with an unforeseen adversary, the target threat model $T$ is a strict superset of the source threat model $S$ ($T \supset S$).  In this setting, we call $T$ an unforeseen adversary.  The learner has access to $S$ and a dataset $D_m$ of $m$ iid samples from the data distribution $\mathcal{D}$.  The learner would like to use a learning algorithm $\mathcal{A}$ for which $f = \mathcal{A}(S, D_m)$ achieves $L_T(f) < \epsilon$ for some small $\epsilon > 0$.  This setting of learning with an unforeseen adversary represents when the adversary is strictly stronger than assumed by the learner.  Compared to learning with a foreseen adversary, this problem is more difficult since $L_S(f)$ may not be reflective of $L_T(f)$.  By construction $L_T(f) \ge L_S(f)$, but it is unclear how much larger $L_T(f)$ is. When can we guarantee that $L_T(f)$ is close to $L_S(f)$?  We will address this question in the Section \ref{sec:generalizability_def} when we define threat model generalizability and Section \ref{sec:gen_bound} when we provide a bound for $L_T(f) - L_S(f)$.

\mysubsection{Formulating Generalizability with an Unforeseen Adversary}
\label{sec:generalizability_def}
How should we define $\mathcal{A}$ that performs well against an unforeseen adversary?  One way is to have $f = \mathcal{A}(S, D_m)$ achieves small $\hat{L}_S(f)$ (which can be measured by $\mathcal{A}$) while ensuring that $\hat{L}_S(f)$ is close to $L_T(f)$.  This leads us to the following definition for generalization gap.

\begin{definition}[Generalization Gap] For threat models $S$ and $T$, the generalization gap is defined as $L_T(f) - \hat{L}_S(f)$.  We observe that
\begin{equation*}
  L_T(f) - \hat{L}_S(f) = \underbrace{L_T(f) - L_S(f)}_{\text{threat model generalization gap}} + \underbrace{L_S(f) - \hat{L}_S(f)}_{\text{sample generalization gap}}
\end{equation*}
We note that in the special case of learning with a foreseen adversary, $L_T(f) - L_S(f) \le 0$, so 
$L_T(f) - \hat{L}_S(f) \le L_S(f) - \hat{L}_S(f)$
and bounding the generalization gap be achieved by bounding the sample generalization gap, which has been studied by prior works \citep{attias2019improved,raghunathan2019adversarial,chen2020more,yu2021understanding}.
\end{definition}

%For a target threat model $T$ to be robustly generalizable from $S$ from function class $\mathcal{F}$, 
We would like to ensure that the generalization gap is small with high probability.  We can achieve this by ensuring that both the sample generalization gap and threat model generalization gap are small.  This leads us to define \textit{robust sample generalizability} and \textit{threat model generalizability} which describe conditions necessary for us to expect the respective generalization gaps to be small.  We then combine these generalizability definitions and define \textit{unforeseen adversarial generalizability} which describes the conditions necessary for a learning algorithm to be able to generalize to unforeseen attacks.

\begin{definition}[Robust Sample Generalizability] \label{def:sample_generalizability} A learning algorithm $\mathcal{A}$ robustly $(\epsilon(\cdot), \delta)$-sample generalizes across function class $\mathcal{F}$ on threat model $S$ where $\epsilon: \mathbb{N} \to \mathbb{R}^+$, if for any distribution $\mathcal{D}$
% there exists function $m_{\mathcal{F}}:(0,1)^2 \rightarrow \mathbb{N}$ such that 
when running $\mathcal{A}$ on 
$m$ iid samples $D_m$ from $\mathcal{D}$, we have
\begin{equation*}
    \mathbb{P}[L_S(\mathcal{A}(S, D_m)) \le \hat{L}_S(\mathcal{A}(S, D_m)) + \epsilon(m)] \ge 1 - \delta
\end{equation*}
\end{definition}

Definition \ref{def:sample_generalizability} implies that any learning algorithm that $(\epsilon(\cdot), \delta)$-robustly sample generalizes across our chosen hypothesis class $\mathcal{F}$ with $\epsilon(m) << 1,\delta << 1$, we can achieve small sample generalization gap with high probability.

We now define generalizability for the threat model generalization gap.  %Since the threat model generalization gap depends only on $S, T$, and the output of the learning algorithm, we propose understanding threat model generalizability in terms of a score function $p_S: \mathcal{F} \times N \to \mathbb{R}^+$ where $N$ denotes the space of all possible threat models.  Intuitively, $p_S(f, T)$ tells how ``bad" a specific function $f$ generalizes to the threat model $T$.

\begin{definition}[Threat Model Generalizability] \label{def:tm_generalizability} Let $S$ be the source threat model used by the learner.  %Let score function $p_S: \mathcal{F} \times N \to \mathbb{R}^+$, where $N$ is the space of all possible threat models, be a function such that for all $f \in \mathcal{F}$
%\begin{equation*}
%    L_T(f) \le L_S(f) + p_S(f, T)
%\end{equation*}
A learning algorithm $\mathcal{A}$ $(\epsilon(\cdot, \cdot), 
\delta)$-robustly generalizes to target threat model $T$ where $\epsilon: T \times \mathbb{N} \to \mathbb{R}^+ \cup \{\infty\}$ and $\delta \in [0,1]$ if for any data distribution $\mathcal{D}$ and any training dataset $D_m$ with $m$ iid samples from $\mathcal{D}$, we have: %$p_S(\mathcal{A}(S, D_m), T) \le \epsilon(T,m)$.
$$\mathbb{P}[L_T(\mathcal{A}(S, D_m)) \le L_S(\mathcal{A}(S, D_m)) + \epsilon(T,m)] \ge 1 - \delta$$
\end{definition}
%\saeed{
%\begin{definition}[Threat Model Generalizability] \label{def:tm_generalizability} A learning algorithm $\mathcal{A}$ robustly generalizes to unforeseen attacks  according to a score function $p\colon T \to \mathbb{R}$, on a data distribution $\mathcal{D}$, if for all threat models $T$ we have
%\begin{equation*}
%    L_T(\mathcal{A}(\mathcal{D})) \le L(\mathcal{A}(\mathcal{D})) + p(T)
%\end{equation*}
%\end{definition}
%}

We note that the Definition \ref{def:tm_generalizability} considers generalization to a given $T$, which does not fully account for the unknown nature of $T$, since from the learner's perspective, the learner does not know which threat model it wants $L_T$ to be small for.  We address this in the following definition where we combine Definitions \ref{def:sample_generalizability} and \ref{def:tm_generalizability} and define generalizability to unforeseen adversarial attacks.

\begin{definition}[Unforeseen Adversarial Generalizability]\label{def:uagen} A learning algorithm $\mathcal{A}$ on function class $\mathcal{F}$ with source adversary $S$, $(\epsilon(\cdot, \cdot),\delta)$-robustly generalizes to unforeseen threat models where $\epsilon: N \times \mathbb{N} \to \mathbb{R}^+ \cup \{\infty\}$ if there exists $\epsilon_1, \epsilon_2$ with $\epsilon_1(m) + \epsilon_2(T, m) \le \epsilon(T, m)$ such that $\mathcal{A}$ robustly $(\epsilon_1, \delta)$-sample generalizes and $(\epsilon_2, \delta)$-robustly generalizes to \textit{any} threat model $T$. 
% (ie. $\forall T~ p_S(\mathcal{A}(S, D_m), T) \le \epsilon_2(T,m)$).
\end{definition}

We remark that in Definition \ref{def:uagen}, $\epsilon$ is a function of $T$, which accounts for differences in difficulty of possible target threat models.  Ideally, we would like $\epsilon(T, m)$ at sufficiently large $m$ to be small across a set of reasonable threat models $T$ (ie. imperceptible perturbations) and expect it to be large (and possibly infinite) for difficult or unreasonable threat models (ie. unbounded perturbations).
%\vspace{-5pt}
\section{A Generalization Bound for Unforeseen Attacks}
%\vspace{-3pt}
\label{sec:gen_bound}
While prior works have proposed bounds on sample generalization gap  \citep{attias2019improved,raghunathan2019adversarial,chen2020more,yu2021understanding}, to the best of our knowledge, prior works have not provided bounds on threat model generalization gap.  In this section, we demonstrate that we can bound the threat model generalization gap in terms of a quantity we define as variation, the expected maximum difference across features learned by the model across the target threat model.  We then show that with the existence of an expansion function, which relates source variation to target variation, any learning algorithm which with high probability outputs a model with small source variation can achieve small threat model generalization gap.

\mysubsection{Relating generalization gap to variation}
We now consider function classes  of the form $\mathcal{F} = \mathcal{G} \circ \mathcal{H}$ where $\forall g \in \mathcal{G}, g: \mathbb{R}^d \to \mathbb{R}^K$ is a top level classifier into $K$ classes and $\forall h \in \mathcal{H}, h: \mathcal{X} \rightarrow \mathbb{R}^d$ is a $d$-dimensional feature extractor.  Since the top classifier $g$ is fixed for a function $f$, if $h(\hat{x})_i, i \in [1...d]$ fluctuates a lot across the threat model $\hat{x} \in T(x)$, then the adversary can manipulate this feature to cause misclassification.  The relation between features and robustness has been analyzed by prior works such as \citep{ilyas2019adversarial, tsipras2019robustness, TB19}.  We now demonstrate that we can bound the threat model generalization gap in terms of a measure of the fluctuation of $h$ across $T$, which we call \textit{variation}.

\begin{definition}[Variation]
The variation of a feature vector $h(\cdot):\mathcal{X} \rightarrow \mathbb{R}^d$ across a threat model $N$ is given by 
$$\mathcal{V}(h, N) =\mathbb{E}_{(x,y) \sim \mathcal{D}} \max_{x_1,x_2 \in N(x)} ||h(x_1) - h(x_2)||_2$$%\saeed{should we relate this back to the score function?}
\end{definition}

%\begin{theorem}[Variation-Based Threat Model Generalization Bound] \label{generalization_bound_part1}
%Let $S$ denote the source threat model and $\mathcal{D}$ denote the data distribution.  Let $\mathcal{F} = \mathcal{G} \circ \mathcal{H}$.  Let $\mathcal{G}$ be a class of Lipschitz classifiers, where the Lipschitz constant is upper bounded by $\sigma_{\mathcal{G}}$.  Let $\ell$ be a $\rho$-Lipschitz loss function with respect to the 2-norm.  Then, for any target threat model $T$ with $S \subseteq T$:
%$$L_T(f) - L_S(f) \le \rho \sigma_{\mathcal{G}} \mathcal{V}(h, T)$$
%\end{theorem}

\begin{theorem}[Variation-Based Threat Model Generalization Bound] \label{generalization_bound_part1}
Let $S$ denote the source threat model and $\mathcal{D}$ denote the data distribution.  Let $\mathcal{F} = \mathcal{G} \circ \mathcal{H}$ where $\mathcal{G}$ is a class of Lipschitz classifiers with Lipschitz constant upper bounded by $\sigma_{\mathcal{G}}$. Let the loss function be $\rho$-Lipschitz. % Then, for any $f \in \mathcal{F}$ and unforeseen target threat model $T$:
%$$L_T(f) - L_S(f) \le \rho \sigma_{\mathcal{G}} \mathcal{V}(h, T)$$
Consider a learning algorithm $\mathcal{A}$ over $\mathcal{F}$ and denote $f = \mathcal{A}(S, D_m) = g \circ h$.  If with probability $1-\delta$ over the randomness of $D_m$, $\mathcal{V}(h, T) \le \epsilon(T, m)$ where $\epsilon: T \times \mathbb{N} \to \mathbb{R}^+ \cup \{\infty\}$, then $\mathcal{A}$ ~($\rho \sigma_{\mathcal{G}}\epsilon(T, m), \delta)$-robustly generalizes from $S$ to $T$.
\end{theorem}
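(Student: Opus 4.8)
The plan is to peel off the probabilistic part immediately and then establish a purely deterministic, per-example inequality by chaining the two Lipschitz hypotheses. Let $f = \mathcal{A}(S, D_m) = g \circ h$ and let $E$ be the event (over the draw of $D_m$) that $\mathcal{V}(h, T) \le \epsilon(T, m)$; by assumption $\mathbb{P}[E] \ge 1 - \delta$. It therefore suffices to prove the deterministic statement that for \emph{any} fixed $f = g\circ h$ with $g$ being $\sigma_{\mathcal{G}}$-Lipschitz,
\[
  L_T(f) - L_S(f) \le \rho\,\sigma_{\mathcal{G}}\,\mathcal{V}(h, T),
\]
because then on $E$ we get $L_T(f) \le L_S(f) + \rho\sigma_{\mathcal{G}}\epsilon(T,m)$, which is exactly the inequality required by Definition~\ref{def:tm_generalizability} for $(\rho\sigma_{\mathcal{G}}\epsilon(T,m),\delta)$-robust generalization from $S$ to $T$.

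For the deterministic bound, fix $(x,y)$ in the support of $\mathcal{D}$. Since $\ell(\cdot, y)$ is $\rho$-Lipschitz and $g$ is $\sigma_{\mathcal{G}}$-Lipschitz, the composition $z \mapsto \ell(g(z), y)$ is $\rho\sigma_{\mathcal{G}}$-Lipschitz on the feature space, so for all $x_1, x_2$,
\[
  \bigl|\ell(f(x_1), y) - \ell(f(x_2), y)\bigr| \le \rho\,\sigma_{\mathcal{G}}\,\|h(x_1) - h(x_2)\|_2 .
\]
Let $x_S^\star \in S(x)$ attain $\max_{x' \in S(x)} \ell(f(x'), y)$ and $x_T^\star \in T(x)$ attain $\max_{x' \in T(x)} \ell(f(x'), y)$. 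Because $S \subseteq T$ we have $x_S^\star \in T(x)$ as well, hence $\|h(x_T^\star) - h(x_S^\star)\|_2 \le \max_{x_1, x_2 \in T(x)} \|h(x_1) - h(x_2)\|_2$, and combining this with the displayed Lipschitz inequality gives the per-example bound
\[
  \max_{x' \in T(x)} \ell(f(x'), y) - \max_{x' \in S(x)} \ell(f(x'), y) \le \rho\,\sigma_{\mathcal{G}} \max_{x_1, x_2 \in T(x)} \|h(x_1) - h(x_2)\|_2 .
\]
Taking $\mathbb{E}_{(x,y)\sim\mathcal{D}}$ of both sides and recognizing the right-hand side as $\rho\sigma_{\mathcal{G}}\mathcal{V}(h,T)$ yields $L_T(f) - L_S(f) \le \rho\sigma_{\mathcal{G}}\mathcal{V}(h,T)$, and the theorem follows by restricting to $E$.

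I do not expect a deep obstacle: the argument is a Lipschitz chain composed with a conditioning step. The only points needing mild care are (i) that $S(x)$ and $T(x)$ need not be compact and $\ell$ need not be continuous, so the maximizers $x_S^\star, x_T^\star$ may fail to exist --- this is handled routinely by replacing them with $\eta$-approximate maximizers and sending $\eta \to 0$, or by phrasing all the $\max$'s as $\sup$'s; and (ii) bookkeeping of norms, namely that the norm on the feature space used in the definition of $\mathcal{V}$ must coincide with the domain norm for which $g$ is $\sigma_{\mathcal{G}}$-Lipschitz, and the codomain norm of $g$ must coincide with the norm for which $\ell(\cdot,y)$ is $\rho$-Lipschitz; once these are fixed consistently, the chain above goes through verbatim. (Note that we only used that $x_S^\star \in T(x)$, which also follows directly from the threat-model axiom $x \in S(x) \cap T(x)$ by instead routing the comparison through the clean input $x$, so the bound does not actually require $S \subseteq T$.)
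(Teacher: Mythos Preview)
Your proposal is correct and follows essentially the same route as the paper: a deterministic per-example bound obtained by chaining the $\rho$-Lipschitzness of $\ell$ with the $\sigma_{\mathcal{G}}$-Lipschitzness of $g$, then using $S\subseteq T$ to replace the mixed $\max$ by $\max_{x_1,x_2\in T(x)}\|h(x_1)-h(x_2)\|_2$, and finally taking expectations and conditioning on the high-probability event. Your additional remarks on $\sup$ versus $\max$, norm bookkeeping, and the weakening of the $S\subseteq T$ hypothesis (routing through the clean point $x$) are valid refinements that the paper's proof does not spell out.
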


%\saeed{
%\begin{corollary}[Score function generalization for any learning algorithm]\label{cor:scoreA} Let $\mathcal{A}$ be an arbitrary learning algorithm that outputs a model $f(x)=w h(x) +b $ for some $h\in H$ and $w\in W$. Then, for any $\rho$-lipschitz loss function $L$, $\mathcal{A}$ robustly generalizes to unforeseen threat models according to score function  
% $$p_{\mathcal{A}}(T) = \sup_{w\in W,h\in H} \rho \sigma_{max}(w)\mathcal{V}(h,T).$$ 
%\end{corollary}
%calculating $p_{\mathcal{A}}$ for all $T$s is impossible. In the next section, we show how choosing a good ``source'' threat model as representative can help us to generalize to a large set of attacks. 
%}
Theorem \ref{generalization_bound_part1} shows we can bound the threat model generalization gap between any source $S$ and unforeseen adversary $T$ in terms of variation across $T$.  With regards to Definition \ref{def:tm_generalizability}, Theorem \ref{generalization_bound_part1} suggests that any learning algorithm over $\mathcal{F}$ that with high probability outputs models with low variation on the target threat model can generalize well to that target.

\mysubsection{Relating source and target variation}
Since the learning algorithm $\mathcal{A}$ cannot use information from $T$, it is unclear how to define such $\mathcal{A}$ that achieves small $\mathcal{V}(h, T)$.  We address this problem by introducing the notion of an expansion function, which relates the source variation (which can be computed by the learner) to target variation.

\begin{definition}[Expansion Function for Variation \citep{ye2021towards}]
\label{def:exp} A function $s: \mathbb{R}^+ \cup \{0\} \rightarrow \mathbb{R}^+ \cup \{0, +\infty\}$ is an expansion function relating variation across source threat model $S$ to target threat model $T$ if the following properties hold:
\vspace{-10pt}
\begin{enumerate}
\itemsep0em 
    \item $s(\cdot)$ is monotonically increasing and $s(x) \ge x, \forall x \ge 0$
    \item $\lim_{x\rightarrow 0^+}s(x) = s(0) = 0$
    \item For all $h$ that can be modeled by function class $\mathcal{F}$, $s(\mathcal{V}(h, S)) \ge \mathcal{V}(h, T)$
\end{enumerate}
\end{definition}
When an expansion function for variation from $S$ to $T$ exists, then we can bound the threat model generalization gap in terms of variation on $S$.  This follows from Theorem \ref{generalization_bound_part1} and Definition \ref{def:exp}.

\begin{corollary}[Source Variation-Based Threat Model Generalization Bound] \label{generalization_bound}
Let $S$ denote the source threat model and $\mathcal{D}$ denote the data distribution.  Let $\mathcal{F} = \mathcal{G} \circ \mathcal{H}$ where $\mathcal{G}$ is a class of Lipschitz classifiers with Lipschitz constant upper bounded by $\sigma_{\mathcal{G}}$. Let the loss function be $\rho$-Lipschitz. Let $T$ be any unforeseen threat model for which an expansion function $s$ from $S$ to $T$ exists. 
Consider a learning algorithm $\mathcal{A}$ over $\mathcal{F}$ and denote $f = \mathcal{A}(S, D_m) = g \circ h$.  If with probability $1-\delta$ over the randomness of $D_m$, $s(\mathcal{V}(h, S)) \le \epsilon(T, m)$ where $\epsilon: T \times \mathbb{N} \to \mathbb{R}^+ \cup \{\infty\}$, then $\mathcal{A}$ ~($\rho \sigma_{\mathcal{G}}\epsilon(T, m), \delta)$-robustly generalizes from $S$ to $T$.
\end{corollary}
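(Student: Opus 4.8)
The plan is to derive Corollary~\ref{generalization_bound} as an almost immediate consequence of Theorem~\ref{generalization_bound_part1} together with the defining properties of an expansion function in Definition~\ref{def:exp}. The only real work is to transfer the high-probability bound on the \emph{source} variation into a high-probability bound on the \emph{target} variation, after which Theorem~\ref{generalization_bound_part1} applies verbatim.

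First I would fix the data distribution $\mathcal{D}$ and the sample size $m$, run $\mathcal{A}$ on $D_m$, and write $f = \mathcal{A}(S, D_m) = g \circ h$ with $g \in \mathcal{G}$ and $h \in \mathcal{H}$. By hypothesis, there is an event $E$ over the randomness of $D_m$ with $\mathbb{P}[E] \ge 1 - \delta$ on which $s(\mathcal{V}(h, S)) \le \epsilon(T, m)$. Next I would invoke property~3 of Definition~\ref{def:exp}: since $h$ is realizable in $\mathcal{F}$, we have $\mathcal{V}(h, T) \le s(\mathcal{V}(h, S))$ \emph{deterministically}, i.e.\ for every outcome of $D_m$ and not just on $E$. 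Chaining these two inequalities shows that on the event $E$ we get $\mathcal{V}(h, T) \le \epsilon(T, m)$, and therefore $\mathbb{P}[\mathcal{V}(h, T) \le \epsilon(T, m)] \ge \mathbb{P}[E] \ge 1 - \delta$.

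At this point the hypothesis of Theorem~\ref{generalization_bound_part1} is met with exactly the same function $\epsilon(T, m)$ and the same $\delta$: with probability $1 - \delta$ over $D_m$, the learned feature extractor $h$ satisfies $\mathcal{V}(h, T) \le \epsilon(T, m)$. Applying Theorem~\ref{generalization_bound_part1} directly then gives that $\mathcal{A}$ is $(\rho \sigma_{\mathcal{G}} \epsilon(T, m), \delta)$-robustly generalizing from $S$ to $T$, which is precisely the conclusion of the corollary.

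There is essentially no obstacle in this argument; the one point that needs a little care is the probabilistic bookkeeping. Property~3 of the expansion function is a pointwise (non-probabilistic) inequality valid for any realizable $h$, so composing it with the given $1-\delta$ event introduces no union bound and no loss in the confidence parameter. It is also worth remarking that monotonicity and $s(0)=0$ (properties~1 and~2 of Definition~\ref{def:exp}) are not invoked in this derivation itself — they matter only for interpreting $\epsilon(T,m)$ as small once the source variation is driven toward zero, which is the content of the surrounding discussion rather than of the corollary's proof.
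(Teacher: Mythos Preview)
Your proposal is correct and follows exactly the approach the paper indicates: the paper simply states that the corollary ``follows from Theorem~\ref{generalization_bound_part1} and Definition~\ref{def:exp},'' and your argument makes that inference explicit by using property~3 of the expansion function to convert the high-probability source-variation bound into a high-probability target-variation bound, then invoking Theorem~\ref{generalization_bound_part1}. Your remark that properties~1 and~2 of Definition~\ref{def:exp} are unused in the derivation itself is also accurate.
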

%\begin{proof} The proof follows from applying the definition of expansion function to Theorem \ref{generalization_bound_part1}.
%\end{proof}

Corollary \ref{generalization_bound} allows us to relate generalization across threat models of a model $f=g\circ h$ to $s(\mathcal{V}(h, S))$ instead of $\mathcal{V}(h, T)$. While this expression is still dependent on the target threat model $T$ (since $s$ is dependent on $T$), we can reduce $s(\mathcal{V}(h, S))$ \textit{without knowledge of} $T$ due to the monotonicity of the expansion function.  Thus, provided that an expansion function exists, we can use techniques such as regularization in order to ensure that our learning algorithm actively chooses models with low source variation.  This result leads to the question: when does the expansion function exist?

\mysubsection{When does the expansion function exist?}\label{sec:exp_fun}
We now demonstrate a few cases in which the expansion function exists or does not exist. We begin by providing basic examples of source threat models $S$ and target threat models $T$ 
% for which an expansion function exists or does not exist 
without constraints on function class.
\begin{proposition} When $S=T$, an expansion function $s$ exists and is given by $s(x) = x$.
\end{proposition}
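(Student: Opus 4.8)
The plan is simply to verify that the candidate function $s(x) = x$ satisfies each of the three defining properties of an expansion function from $S$ to $T$ (Definition \ref{def:exp}), using the hypothesis $S = T$ only in the third property.

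First I would check property 1: the identity map $s(x) = x$ is manifestly monotonically increasing on $\mathbb{R}^+ \cup \{0\}$, and trivially $s(x) = x \ge x$ for all $x \ge 0$. Next, for property 2, continuity of the identity at $0$ gives $\lim_{x \to 0^+} s(x) = 0 = s(0)$. Finally, for property 3, I would observe that since $S = T$ as threat models, the neighborhood functions agree almost everywhere, so for any feature extractor $h$ realizable by $\mathcal{H}$ we have $\mathcal{V}(h, S) = \mathbb{E}_{(x,y)\sim\mathcal{D}} \max_{x_1,x_2 \in S(x)} \|h(x_1) - h(x_2)\|_2 = \mathbb{E}_{(x,y)\sim\mathcal{D}} \max_{x_1,x_2 \in T(x)} \|h(x_1) - h(x_2)\|_2 = \mathcal{V}(h, T)$; hence $s(\mathcal{V}(h,S)) = \mathcal{V}(h,S) = \mathcal{V}(h,T) \ge \mathcal{V}(h,T)$, as required.

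There is essentially no obstacle here: the statement is immediate once the definition is unwound, and the only mild point worth spelling out is that ``$S = T$'' should be read as equality of the neighborhood functions (up to a null set), so that the maxima defining the two variations coincide pointwise and the expectations are therefore equal. I would keep the write-up to these three line-item verifications.
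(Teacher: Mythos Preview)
Your proposal is correct and is exactly the natural argument: the paper states this proposition without proof, as it follows immediately from checking the three conditions of Definition~\ref{def:exp} against the identity map, which is precisely what you do. There is nothing to add or change.
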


\begin{proposition} \label{rem:clean} Let $S = \{x\}$, and $T$ be a threat model such that $S \subset T$.  Then, for all feature extractors $h$, we have that $V(h, S) = 0$ while $V(h, T)$ can be greater than 0.  In this case, no expansion function exists such that $s(V(h, S)) \ge V(h, T)$.
\end{proposition}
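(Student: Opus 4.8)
The plan is to directly exhibit the obstruction: under the trivial source threat model the source variation is identically zero, while the target variation can be strictly positive, and then the normalization property of any expansion function forces a contradiction.

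First I would compute $\mathcal{V}(h,S)$. Since $S(x)=\{x\}$ for every $x$, the inner maximization $\max_{x_1,x_2\in S(x)}\|h(x_1)-h(x_2)\|_2$ ranges only over the pair $(x,x)$ and equals $\|h(x)-h(x)\|_2=0$. Taking the expectation over $(x,y)\sim\mathcal{D}$ yields $\mathcal{V}(h,S)=0$ for every feature extractor $h$, with no assumption on the function class.

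Next I would show that $\mathcal{V}(h,T)$ need not vanish. Because $S\subset T$ strictly, for $\mathcal{X}$-almost every $x$ the set $T(x)$ contains some point $x'\neq x$. Picking any $h$ that is non-constant on such a pair (any reasonable feature extractor separates two distinct inputs), we obtain $\max_{x_1,x_2\in T(x)}\|h(x_1)-h(x_2)\|_2\ge\|h(x)-h(x')\|_2>0$ on a set of positive probability, hence $\mathcal{V}(h,T)>0$.

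Finally I would argue by contradiction. Suppose an expansion function $s$ relating $S$ to $T$ exists. By the normalization property $\lim_{x\to 0^+}s(x)=s(0)=0$ in Definition \ref{def:exp}, and by the domination property of $s$ applied to the $h$ above, $0=s(0)=s(\mathcal{V}(h,S))\ge\mathcal{V}(h,T)>0$, a contradiction. The only point worth stating carefully is the existence of a feature extractor with $\mathcal{V}(h,T)>0$: this is guaranteed as soon as $\mathcal{H}$ contains one map that is non-constant across some $T(x)$, which holds for any non-degenerate function class; if instead every $h\in\mathcal{H}$ were constant, the conclusion would be vacuous since then $\mathcal{V}(h,T)=0$ as well. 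I do not anticipate a real obstacle here — the argument is a short unwinding of the definitions, and the ``main difficulty,'' such as it is, amounts to being precise about the quantifier on $h$.
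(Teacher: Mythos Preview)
Your argument is correct and is precisely the intended unwinding of the definitions: $\mathcal{V}(h,S)=0$ because $S(x)=\{x\}$, $\mathcal{V}(h,T)>0$ for any non-constant $h$, and the normalization $s(0)=0$ in Definition~\ref{def:exp} yields the contradiction. The paper itself does not give a formal proof of this proposition---it is presented as a basic illustrative example and left to the reader---so there is nothing further to compare; your write-up matches what the paper implicitly expects, and your closing remark about the degenerate case where every $h\in\mathcal{H}$ is constant even anticipates the paper's subsequent observation that constraining $\mathcal{F}$ to constant feature extractors would restore the trivial expansion function $s(x)=x$.
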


While we did not consider a constrained function class in the previous two settings, the choice of function class can also impact the existence of an expansion function.  For instance, in the setting of Proposition \ref{rem:clean}, if we constrain $\mathcal{F}$ to only use feature extractors with a constant output, then the expansion function $s(x) = x$ is valid.  We now consider the case where our function class $\mathcal{F}$ uses linear feature extractors and derive expansion functions for $\ell_p$ adversaries.

\begin{theorem}[Linear feature extractors with $\ell_p$ threat model ($p \in \mathbb{N} \cup +\infty$)]\label{thm:linear} Let inputs $x \in \mathbb{R}^n$ and corresponding label $y \in [1...K]$. Consider $S(x) = \{\hat{x} | ~ ||\hat{x} - x||_p \le \epsilon_{1} \}$ and $U(x) = \{\hat{x} | ~ ||\hat{x} - x||_q \le \epsilon_{2} \}$ with $p,q \in \mathbb{N}^+, p,q > 0$. Define target threat model $T(x) = S(x) \cup U(x)$. Consider a linear feature extractor with bounded condition number: $h \in \{Wx + b| W \in \mathbb{R}^{d\times n}, b \in \mathbb{R}^d, \frac{\sigma_{\max}(W)}{\sigma_{\min}(W)} \le B < \infty \}$.  Then, an expansion function exists and is linear.
\end{theorem}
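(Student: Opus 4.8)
The plan is to compute the variation of a linear feature extractor across each of the three threat models in closed form, reducing everything to matrix operator norms, and then to take the expansion function to be a scalar multiple of the identity. Throughout I assume $\epsilon_1>0$ (otherwise $S$ is the trivial threat model $S(x)=\{x\}$ and Proposition~\ref{rem:clean} shows no expansion function can exist).

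First I would record that for $h(x)=Wx+b$ the bias cancels, $h(x_1)-h(x_2)=W(x_1-x_2)$, and that the difference set of an $\ell_r$ ball is again a centred $\ell_r$ ball: $\{x_1-x_2 : x_1,x_2\in S(x)\}=\{v:\|v\|_p\le 2\epsilon_1\}$, independently of $x$. Hence $\max_{x_1,x_2\in S(x)}\|h(x_1)-h(x_2)\|_2 = 2\epsilon_1\,\|W\|_{p\to2}$, where $\|W\|_{p\to2}:=\max_{\|v\|_p\le1}\|Wv\|_2$, and since this is constant in $x$ the outer expectation is vacuous, giving $\mathcal{V}(h,S)=2\epsilon_1\|W\|_{p\to2}$ and likewise $\mathcal{V}(h,U)=2\epsilon_2\|W\|_{q\to2}$. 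For the union $T(x)=S(x)\cup U(x)$ I would split the inner maximum according to whether the two points both lie in $S(x)$, both in $U(x)$, or one in each: the first two cases give $2\epsilon_1\|W\|_{p\to2}$ and $2\epsilon_2\|W\|_{q\to2}$, and in the mixed case writing $x_1-x_2=(x_1-x)+(x-x_2)$ and applying the triangle inequality bounds $\|W(x_1-x_2)\|_2$ by $\epsilon_1\|W\|_{p\to2}+\epsilon_2\|W\|_{q\to2}\le 2\max(\epsilon_1\|W\|_{p\to2},\epsilon_2\|W\|_{q\to2})$. Combined with the trivial lower bound $\mathcal{V}(h,T)\ge\max(\mathcal{V}(h,S),\mathcal{V}(h,U))$ coming from $S,U\subseteq T$, this yields $\mathcal{V}(h,T)=\max(\mathcal{V}(h,S),\mathcal{V}(h,U))=2\max(\epsilon_1\|W\|_{p\to2},\epsilon_2\|W\|_{q\to2})$.

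Next I would relate $\|W\|_{q\to2}$ to $\|W\|_{p\to2}$ \emph{uniformly} over the function class. Since all $\ell_r$ norms on $\mathbb{R}^n$ are equivalent, there is a constant $\gamma=\gamma(p,q,n)\ge1$ with $\|v\|_p\le\gamma\|v\|_q$ for every $v$; applying this to $W$ on the unit $\ell_q$-ball (if $\|v\|_q\le1$ then $\|v/\gamma\|_p\le1$, so $\|Wv\|_2\le\gamma\|W\|_{p\to2}$) gives $\|W\|_{q\to2}\le\gamma\|W\|_{p\to2}$ for every $W$. Alternatively, one may route this through the singular values: $\|W\|_{p\to2}$ and $\|W\|_{q\to2}$ are each sandwiched between dimension-dependent multiples of $\sigma_{\min}(W)$ and $\sigma_{\max}(W)$, and then $\sigma_{\max}(W)\le B\,\sigma_{\min}(W)$ converts an upper bound on $\|W\|_{q\to2}$ into one on $\|W\|_{p\to2}$ — this is where the bounded-condition-number hypothesis enters. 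Finally I would set $s(x)=c\,x$ with $c=\max\!\big(1,\tfrac{\epsilon_2}{\epsilon_1}\gamma\big)$: this is linear, monotonically increasing, satisfies $s(x)\ge x$ and $s(0)=0=\lim_{x\to0^+}s(x)$, and for every admissible $h$ we get $s(\mathcal{V}(h,S))=\max(2\epsilon_1\|W\|_{p\to2},\,2\epsilon_2\gamma\|W\|_{p\to2})\ge\max(2\epsilon_1\|W\|_{p\to2},\,2\epsilon_2\|W\|_{q\to2})=\mathcal{V}(h,T)$, so all three conditions of Definition~\ref{def:exp} hold and $s$ is the desired linear expansion function.

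The main obstacle is the uniformity in the second step: one needs $\|W\|_{q\to2}\le\gamma\|W\|_{p\to2}$ with a \emph{single} constant valid for every admissible $W$, which is exactly what the structural restriction on the feature extractor (finite dimensionality, or, as stated, bounded condition number) buys, and one has to be careful tracking the dimension dependence of $\gamma$ across the regimes $p,q\le2$, $p,q\ge2$, and $p\le2\le q$. A secondary point worth stating carefully is the closed-form for $\mathcal{V}(h,T)$ on the union threat model: the identity $\mathcal{V}(h,T)=\max(\mathcal{V}(h,S),\mathcal{V}(h,U))$ uses linearity of $h$ in an essential way (via the triangle-inequality step on the mixed pair) and is false for general feature extractors.
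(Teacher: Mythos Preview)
Your argument is correct and in fact sharper than the paper's. The paper proceeds by separately \emph{bounding} the source and target variations through the singular values of $W$: an upper bound $\mathcal{V}(h,T)\le 2\epsilon\, c_1(n,p,q)\,\sigma_{\max}(W)$ (Lemmas~\ref{linear_upper} and~\ref{lemma_p2q}) and a lower bound $\mathcal{V}(h,S)\ge 2\epsilon_1\, c_2(n,p)\,\sigma_{\min}(W)$ (Lemma~\ref{linear_lower}), and then invokes the condition-number hypothesis $\sigma_{\max}(W)/\sigma_{\min}(W)\le B$ to stitch them together into a linear expansion function whose slope carries a factor of $B$ (Lemmas~\ref{cor:larger_rad} and~\ref{cor:lp2lq}). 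You instead compute the variations \emph{exactly} as $2\epsilon_1\|W\|_{p\to2}$ and $2\max(\epsilon_1\|W\|_{p\to2},\epsilon_2\|W\|_{q\to2})$ and then pass directly between the two operator norms via the equivalence $\|v\|_p\le\gamma(p,q,n)\|v\|_q$ on $\mathbb{R}^n$. This buys you two things: the exact identity $\mathcal{V}(h,T)=\max(\mathcal{V}(h,S),\mathcal{V}(h,U))$ for linear $h$ (the paper only has an inequality), and a slope $c=\max(1,\epsilon_2\gamma/\epsilon_1)$ that is independent of $W$ --- in particular, your primary route shows that the bounded-condition-number assumption is not actually needed for the conclusion. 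The singular-value alternative you sketch is precisely the paper's argument, so you have both routes covered; just be explicit that your $\gamma$ equals $n^{\max(0,\,1/p-1/q)}$, so that the case split you flag at the end is trivial.
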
%\saeed{I guess using your calculations we can extend this to the union of finitely many $l_p$ threat models?}
Theorem \ref{thm:linear} demonstrates that in the case of a linear feature extractor a linear expansion function exists for any data distribution from a source $\ell_p$ adversary to a union of $\ell_p$ adversaries.  This result suggests that with a function class using linear feature extractors, we can improve generalization to $\ell_p$ balls with larger radii by using a learning algorithm that biases towards models with small $\mathcal{V}(h, S)$.  We demonstrate this in Appendix \ref{app:gauss} where we experiment with linear models on Gaussian data.  We also provide visualizations of expansion function for a nonlinear model (ResNet-18) on CIFAR-10 in Section \ref{sec:cif_exp_fun}.
%\vspace{-5pt}
\section{Adversarial Training with Variation Regularization}
%\vspace{-3pt}
\label{sec:exp}
%\begin{figure*}[t]
%    \centering
%    \includegraphics[width=0.47\textwidth]{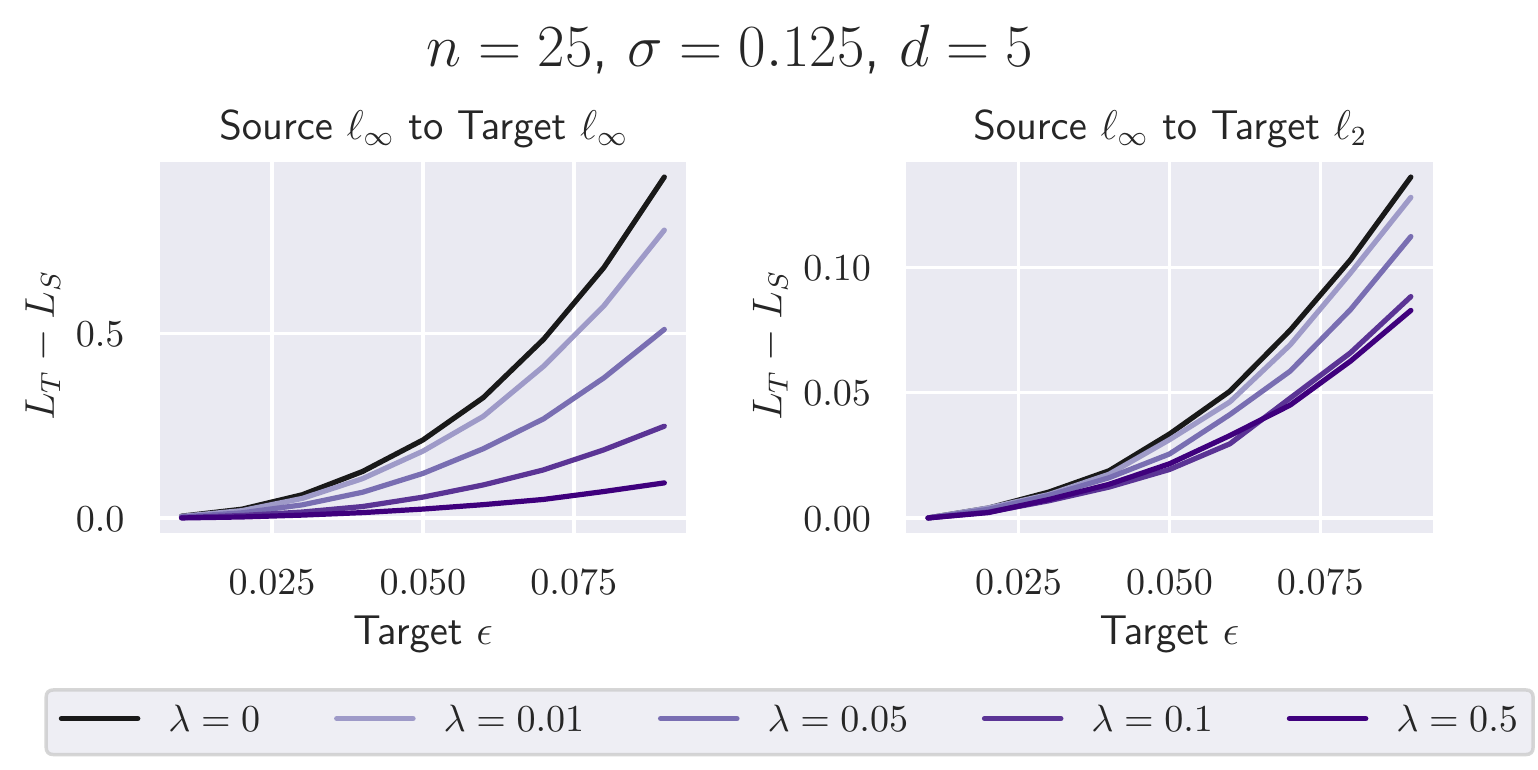}
%    \includegraphics[width=0.47\textwidth]{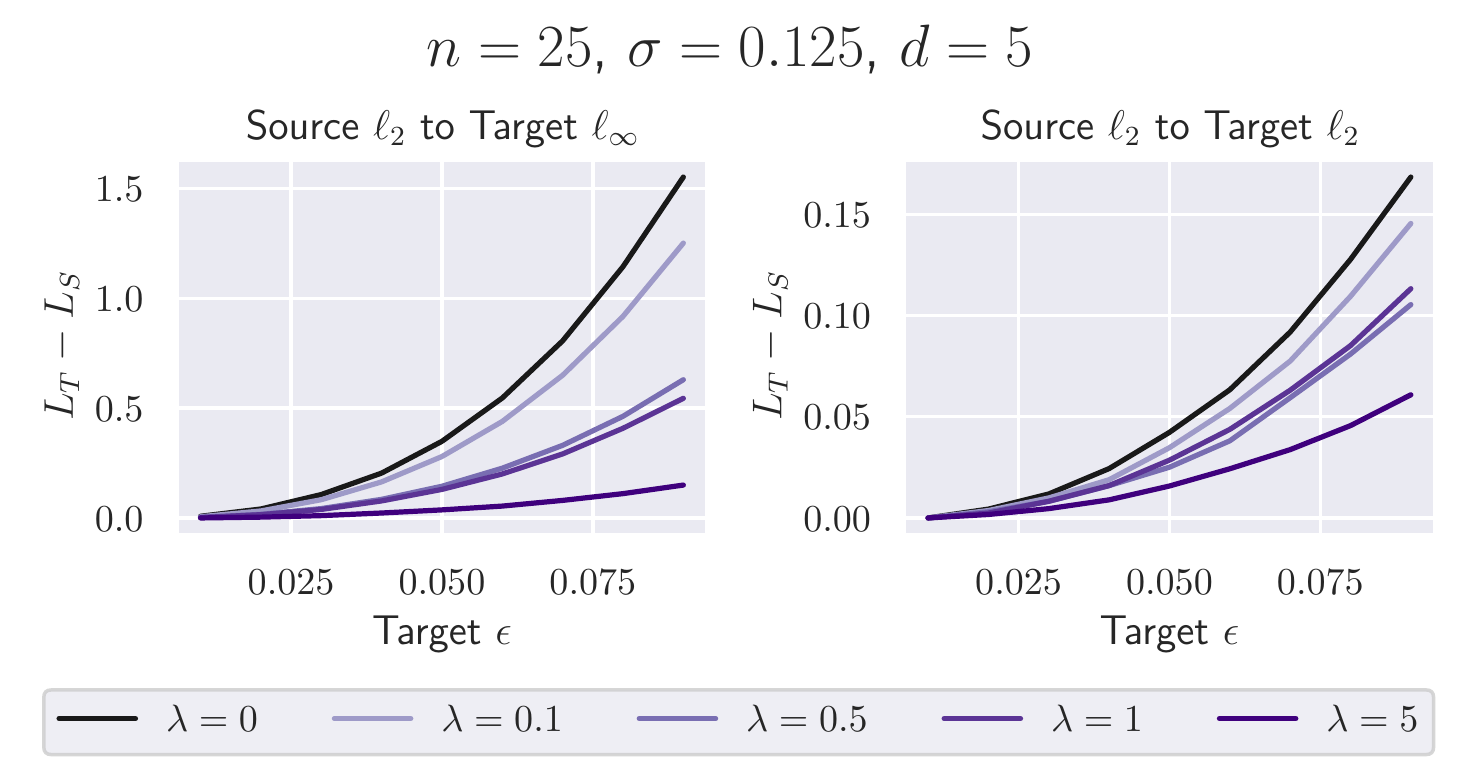}
%        \vspace{-10pt}
%    \caption{Threat model generalization gap of linear models on Gaussian data trained with varied VR strength $\lambda$ measured on adversarial examples of generated by target $\ell_p, p=\{\infty, 2\}$ perturbations with radius $\epsilon$.  The generalization gap is measured with respect to cross entropy loss. All models are trained with source $\ell_p, p=\{\infty, 2\}$ radius of 0.01.}
%    \vspace{-10pt}
%    \label{fig:gauss_gen}
%\end{figure*}
Our generalization bound from Corollary \ref{generalization_bound} suggests that learning algorithms that bias towards small source variation can improve generalization to other threat models when an expansion function exists.  In this section, we propose adversarial training with variation regularization (AT-VR) to improve generalization to unforeseen adversaries and evaluate the performance of AT-VR on multiple datasets and model architectures.

\mysubsection{Adversarial training with variation regularization}
To integrate variation into AT, we consider the following training objective:
\begin{equation*}
\label{AT-VR}
    \min_{f \in \mathcal{F}, f = g \circ h} \frac{1}{n} \sum_{i=1}^n [ \underbrace{\max_{x' \in S(x_i)} \ell(f(x'), y_i)}_{\text{empirical adversarial risk}} + \lambda \underbrace{\max_{x', x'' \in S(x_i)} ||h(x') - h(x'')||_2}_{\text{empirical variation}} ]
\end{equation*}
where $\lambda \ge 0$ is the regularization strength.  For the majority of our experiments in the main text, we use the objective of PGD-AT \citep{madry2017towards} as the approximate empirical adversarial risk.  We note that this can be replaced with other forms of AT such as TRADES \citep{zhang2019theoretically}. We can approximate empirical variation by using gradient-based methods.  For example, when $N(x)$ is a $\ell_p$ ball around $x$, we compute the variation term by using PGD to simultaneously optimize over $x_1$ and $x_2$.  We discuss methods for computing variation for other source threat models in Appendix \ref{app:additional_sources}.

\mysubsection{Experimental Setup}
\label{setup}
We investigate the performance of training neural networks with AT-VR on image data for a variety of datasets, architectures, source threat models, and target threat models.  We also combine VR with perceptual adversarial training (PAT) \citep{laidlaw2020perceptual}, the current state-of-the-art for unforeseen robustness, which uses a source threat model based on LPIPS \citep{zhang2018lpips} metric.

\textbf{Datasets} We train models on CIFAR-10, CIFAR-100, \citep{krizhevsky2009learning} and ImageNette \citep{imagenette}.  ImageNette is a 10-class subset of ImageNet \citep{deng2009imagenet}.

\textbf{Model architecture} On CIFAR-10, we train ResNet-18 \citep{he2016deep}, WideResNet(WRN)-28-10 \citep{zagoruyko2016wide}, and VGG-16 \citep{simonyan2015vgg} architectures.  On ImageNette, we train ResNet-18 \citep{he2016deep}.  For PAT-VR, we use ResNet-50.  For all architectures, we consider the feature extractor $h$ to consist of all layers of the NN and the top level classifier $g$ to be the identity function.  We include experiments for when we consider $h$ to be composed of all layers before the fully connected layers in Appendix \ref{app:add_im_features}.

\textbf{Source threat models } Across experiments with AT-VR, we consider 2 different source threat models: $\ell_{\infty}$ perturbations with radius $\frac{8}{255}$ and $\ell_2$ perturbations with radius 0.5.  For PAT-VR, we use LPIPS computed from an AlexNet model \citep{KrizhevskySH17} trained on CIFAR-10. We provide additional details about training procedure in Appendix \ref{app:exp_setup}.  We also provide results for additional source threat models such as StAdv and Recolor in Appendix \ref{app:additional_sources}.

\textbf{Target threat models } We evaluate AT-VR on a variety of target threat models including, $\ell_p$ adversaries ($\ell_{\infty}$, $\ell_2$, and $\ell_1$ adversaries), spatially transformed adversary (StAdv) \citep{XiaoZ0HLS18}, and Recolor adversary \citep{LaidlawF19}.  For StAdv and Recolor threat models, we use the original bounds from \citep{XiaoZ0HLS18} and \citep{LaidlawF19} respectively.  For all other threat models, we specify the bound ($\epsilon$) within the figures in this section.  We also provide evaluations on additional adversaries including Wasserstein, JPEG, elastic, and LPIPS-based attacks in Appendix \ref{app:additional_adv} for CIFAR-10 ResNet-18 models.

\textbf{Baselines }We remark that we are studying the setting where the \textit{learner has already chosen a source threat model} and during testing the model is evaluated on a \textit{strictly larger unknown} target.  Because of this, for AT-VR experiments, we use standard PGD-AT \citep{madry2017towards} (AT-VR with $\lambda=0$) as a baseline.  For PAT-VR experiments, we use PAT (PAT-VR with $\lambda=0$) as a baseline.  We note that VR can be combined with other training techniques such as TRADES \citep{zhang2019theoretically} and provide results in Appendix \ref{app:trades-VR}.

\begin{table}[]
    \centering
    %\footnotesize
\fontsize{8}{9}\selectfont
    \begin{tabular}{@{}cccc|cc|cccc|c@{}}
    \hline
         & & & & & & \multicolumn{4}{c|}{Union with Source} &\\
         \cline{7-10}
         Dataset & Architecture & Source & $\lambda$ & Clean & Source & $\ell_{\infty}$ & $\ell_2$ & StAdv & Re- & Union \\
         & & & & acc & acc &$\epsilon=\frac{12}{255}$& $\epsilon=1$ & &color & all \\
         \hline
         
         CIFAR-10 & ResNet-18 & $\ell_{2}$ & 0 & \textbf{88.49}& 66.65 & 6.44 & 34.72 & 0.76 & 66.52 & 0.33\\
         CIFAR-10 & ResNet-18 & $\ell_{2}$ & 1& 85.21 & \textbf{67.38} & \textbf{13.43} & \textbf{40.74} & \textbf{34.40} & \textbf{67.30} & \textbf{11.77} \\
        
         \hline
         CIFAR-10 & ResNet-18 & $\ell_{\infty}$ & 0 & \textbf{82.83} & 47.47 & 28.09 & \textbf{24.94} & 4.38 &  47.47 &  2.48 \\
         CIFAR-10 & ResNet-18 & $\ell_{\infty}$ & 0.5& 72.91 & \textbf{48.84} & \textbf{33.69} & 24.38 &\textbf{18.62} &\textbf{48.84 }& \textbf{12.59}\\
         
         \hline
         CIFAR-10 & WRN-28-10 & $\ell_{\infty}$ & 0 &  \textbf{85.93}& 49.86 & 28.73 & 20.89 & 2.28 & 49.86 & 1.10 \\
         CIFAR-10 & WRN-28-10 & $\ell_{\infty}$ & 0.7 & 72.73 & \textbf{49.94} & \textbf{35.11 }& \textbf{22.30} & \textbf{25.33 }& \textbf{49.94} & \textbf{14.72 }\\
         \hline
         CIFAR-10 & VGG-16 & $\ell_{\infty}$ & 0 & \textbf{79.67} & 44.36 & 26.14 & 30.82 & 7.31 & 44.36 & 4.35\\
         CIFAR-10 & VGG-16 & $\ell_{\infty}$ & 0.1 & 77.80 & \textbf{45.42} & \textbf{28.41} & \textbf{32.08} & \textbf{10.57} & \textbf{45.42} & \textbf{6.83} \\
                 \hline
         ImageNette & ResNet-18 & $\ell_2$ &  0 &\textbf{88.94} & \textbf{84.99} & 0.00 & 79.08 & 1.27 & 72.15  & 0.00 \\
         ImageNette & ResNet-18 & $\ell_2$ & 1 & 85.22 & 83.08 & \textbf{9.53} & \textbf{80.43} & \textbf{18.04} & \textbf{75.26} & \textbf{6.80}\\

         \hline
         
         ImageNette & ResNet-18 & $\ell_{\infty}$ & 0 & \textbf{80.56} & 49.63 & 32.38 & 49.63 & 34.27 & 49.63 & 25.68\\
         ImageNette & ResNet-18 & $\ell_{\infty}$ & 0.1 & 78.01 & \textbf{50.80} & \textbf{35.57} & \textbf{50.80} & \textbf{42.37} & \textbf{50.80} &\textbf{31.82}  \\
         
         \hline
         CIFAR-100 & ResNet-18 & $\ell_{2}$ & 0 & \textbf{60.92} & 36.01 & 3.98 & 16.90 & 1.80 & 34.87 & 0.40  \\
         CIFAR-100 & ResNet-18 & $\ell_{2}$  & 0.75 & 51.53 & \textbf{38.26} & \textbf{11.47} & \textbf{25.65} & \textbf{5.12} & \textbf{36.96} & \textbf{3.11}\\

         \hline
         CIFAR-100 & ResNet-18 & $\ell_{\infty}$ & 0 & \textbf{54.94} & 22.74 & 12.61 & 14.40 & 3.99 & 22.71 &  2.42 \\
         CIFAR-100 & ResNet-18 & $\ell_{\infty}$  & 0.2 & 48.97 & \textbf{25.04} &\textbf{16.48}  & \textbf{15.82}  & \textbf{4.96} & \textbf{24.95} & \textbf{3.48} \\
         \hline
    \end{tabular}
    \vspace{3pt}
    \caption{Robust accuracy of various models trained at different strengths of VR applied on logits on various threat models.  $\lambda=0$ represents the baseline (standard AT).  The ``source acc" column reports the accuracy on the source attack ($\ell_{\infty}, \epsilon=\frac{8}{255}$ or $\ell_2, \epsilon=0.5$).  For each individual threat model, we evaluate accuracy on a union with the source threat model.  The union all column reports the accuracy on the union across all listed threat models.}
    \vspace{-20pt}
    \label{tab:logits_acc}
\end{table}

\mysubsection{Performance of AT-VR across different imperceptible target threat models} \label{sec:impactofvar}
We first investigate the impact of AT-VR on robust accuracy across different target threat models that are strictly larger than the source threat model used for training.  To enforce this, we evaluate robust accuracy on a target threat model that is the union of the source with a different threat model. For $\ell_{\infty}$ and $\ell_2$ attacks, we measure accuracy using AutoAttack \citep{Croce020}, which reports the lowest robust accuracy out of 4 different attacks: APGD-CE, APGD-T, FAB-T, and Square.  For $\ell_{\infty}$ and $\ell_2$ threat models, we use radius $\epsilon = \frac{12}{255}$ and $\epsilon=1$ for evaluating unforeseen robustness.  We report clean accuracy, source accuracy (robust accuracy on the source threat model), and robust accuracy across various targets in Table \ref{tab:logits_acc}.
 We present results with additional strengths of VR in Appendix \ref{app:additional_lam}.

\textbf{AT-VR improves robust accuracy on broader target threat models. }  We find that overall across datasets, architecture, and source threat model, using AT-VR improves robust accuracy on unforeseen targets but trades off clean accuracy. For instance, we find that on CIFAR-10, our ResNet-18 model using VR improves robustness on the union of all attacks from 2.48\% to 12.59\% for $\ell_\infty$ source and from 0.33\% to 11.77\% for $\ell_2$.  The largest improvement we observe is a 33.64\% increase in robust accuracy for the ResNet-18 CIFAR-10 with $\ell_2$ source model on the StAdv target.

\textbf{AT-VR maintains accuracy on the source compared to standard AT, but trades off clean accuracy. }We find that AT-VR is able to maintain similar source accuracy in comparson to standard PGD AT but consistently trades off clean accuracy.  For example, for WRN-28-10 on CIFAR-10, we find that source accuracy increases slightly with VR (from 49.86\% to 49.94\%), but clean accuracy drops from 85.93\% to 72.73\%.  In Appendix \ref{app:additional_lam}, where we provide results on additional values of regularization strength ($\lambda$), \textit{we find that increasing $\lambda$ generally trades off clean accuracy but improves union accuracy}.  We hypothesize that this tradeoff occurs because VR enforces the decision boundary to be smooth, which may prevent the model from fitting certain inputs well.

\mysubsection{State-of-the-art performance with PAT-VR}
We now combine variation regularization with PAT.  We present results in Table \ref{tab:pat_vr}.

\begin{table}[ht]
    \centering
    \fontsize{8}{9}\selectfont
    \begin{tabular}{@{}cc|c|cccc|c|cc@{}}
    \hline
    Source &$\lambda$ & Clean & $\ell_{\infty}$ & $\ell_2$ & StAdv & Re- & Union & PPGD & LPA \\
          $\epsilon$ & & acc  &$\epsilon=\frac{8}{255}$& $\epsilon=1$ & &color & &  & \\
         
         \hline
         %Self-bounded \tablefootnote{\label{note1}Values taken from \citet{laidlaw2020perceptual}} &0 & 82.4& 30.2 & 34.9 & 46.4 & 71.0  & 21.9 & 13.1 & 2.1\\
         %Self-bounded &0.05& 81.69 & 27.50 & 31.20 & 41.60 & 77.99 & 18.61 & 8.36 & 24.67\\
         %Self-bounded &0.1 & 83.5 & 28.1 & 30.2 & 19.4 & 73.0 & 7.4 & 31.0 & 4.3\\
         %Self-bounded &0.3& 82.5 & 18.3 & 18.3  & 45.2 & 77.8 & 13.7 & 39.9 & 17.3 \\
         %\hline
         
         %AlexNet &0 & best &44.05 & 11.71 & 13.36 & 27.13 & 34.28 & 10.73 & 17.29 & 6.52 \\
         %AlexNet &0.1 & best & 85.96 & 31.08 & 37.06 & 40.35 & 79.93 &  22.57 & 43.81 & 18.00\\
         %AlexNet &0.3 & best & 70.63 & 30.10 & 34.76 & 18.55 & 55.79 & 14.00 & 24.18 & 8.39\\
         \hline
         
         %AlexNet \textsuperscript{\ref{note1}} &0 & 71.6 & 28.7 & 33.3 & 64.5 & 67.5 & 27.8 & 26.6 & 9.8\\
         0.5  & 0 & 86.6 & \textbf{38.8} & \textbf{44.3} & 5.8 & 60.8 & 2.1 & 16.2 & 2.2\\
         0.5 & 0.05 & \textbf{86.9} & 34.9 & 40.6 & 9.4 & 64.6 & 3.7 & 21.9 & 2.2\\
         0.5 & 0.1  & 85.1 & 31.4 & 37.1 & \textbf{44.9} & \textbf{80.5} & \textbf{24.9} & \textbf{48.7} & \textbf{29.7} \\
         %AlexNet &0.3 & 82.9 & 16.0 & 20.9 & 32.2 & 75.1 & 11.9 &32.9 & 11.5\\
         \hline
         
         1 \tablefootnote{\label{note1}Values taken from \citet{laidlaw2020perceptual}} & 0 & 71.6 & 28.7 & 33.3 & \textbf{64.5} & 67.5 & 27.8 & 26.6 & 9.8\\
         1 & 0.05 & 72.1 & \textbf{29.5}& 34.8 & 59.6 & 69.7 & 28.2 & 56.7 & 18.5\\
         1 & 0.1  & \textbf{72.5} & 29.4 & \textbf{35.1} & 61.8 & \textbf{70.7} & \textbf{28.8} & \textbf{56.9} & \textbf{30.8} \\
         %AlexNet &0.3 & 82.9 & 16.0 & 20.9 & 32.2 & 75.1 & 11.9 &32.9 & 11.5\\
         \hline
    \end{tabular}
        \vspace{3pt}

    \caption{Robust accuracy of ResNet-50 models trained using AlexNet-based PAT-VR with $\epsilon=0.5$ and $\epsilon=1$.  $\lambda=0$ corresponds to standard PAT.  The union column reports the accuracy obtained on the union of $\ell_{\infty}$, $\ell_2$, StAdv, and Recolor adversaries.  The PPGD and LPA columns report robust accuracy under AlexNet-based PPGD and LPA attacks with $\epsilon=0.5$.}
    \label{tab:pat_vr}
    \vspace{-20pt}
\end{table}

\textbf{PAT-VR achieves state-of-the-art robust accuracy on AlexNet-based LPIPS attacks (PPGD and LPA).} \citet{laidlaw2020perceptual} observed that LPA attacks are the strongest perceptual attacks, and that standard AlexNet-based PAT with source $\epsilon=1$ can only achieve 9.8\% robust accuracy on LPA attacks with $\epsilon=0.5$.  In comparison, we find that applying variation regularization can significantly improve over performance on LPIPS attacks.  In fact, using variation regularization strength $\lambda=0.1$ while training with $\epsilon=0.5$ can achieve 29.7\% robust accuracy on LPA, while training with $\lambda=0.1$ and $\epsilon=1$ improves LPA accuracy to 30.8\%.  

\textbf{PAT-VR achieves state-of-the-art union accuracy across $\ell_{\infty}$, $\ell_2$, StAdv, and Recolor attacks.} We observe that as regularization strength $\lambda$ increases, union accuracy also increases.  For source $\epsilon=0.5$, we find that union accuracy increases from 2.1\% without variation regularization to 24.9\% with variation regularization at $\lambda=0.1$.  For source $\epsilon=1$, we observe a 1\% increase in union accuracy from $\lambda=0$ to $\lambda=0.1$.  However, this comes at a trade-off with accuracy on specific threat models.  For example, when training with $\epsilon=0.5$, we find that variation regularization at $\lambda=0.1$ trades off accuracy on $\ell_\infty$ and $\ell_2$ sources (7.4\% and 7.2\% drop in robust accuracy respectively), but improves robust accuracy on StAdv attacks from 5.8\% to 44.9\%.  Meanwhile, for $\epsilon=1$, we find that at $\lambda=0.1$, variation regularization trades off accuracy on StAdv to improve accuracy across $\ell_{\infty}$, $\ell_2$, and Recolor threat models.

\textbf{Unlike AT-VR, PAT-VR maintains clean accuracy in comparison to PAT.} We find that PAT-VR generally does not trade off additional clean accuracy in comparison to PAT. In some cases (at source $\epsilon=1$), increasing variation regularization strength can even improve clean accuracy.
\mysubsection{Influence of AT-VR on threat model generalization gap across perturbation size}

\begin{figure}[h]
    \centering
    \includegraphics[width=0.9\textwidth]{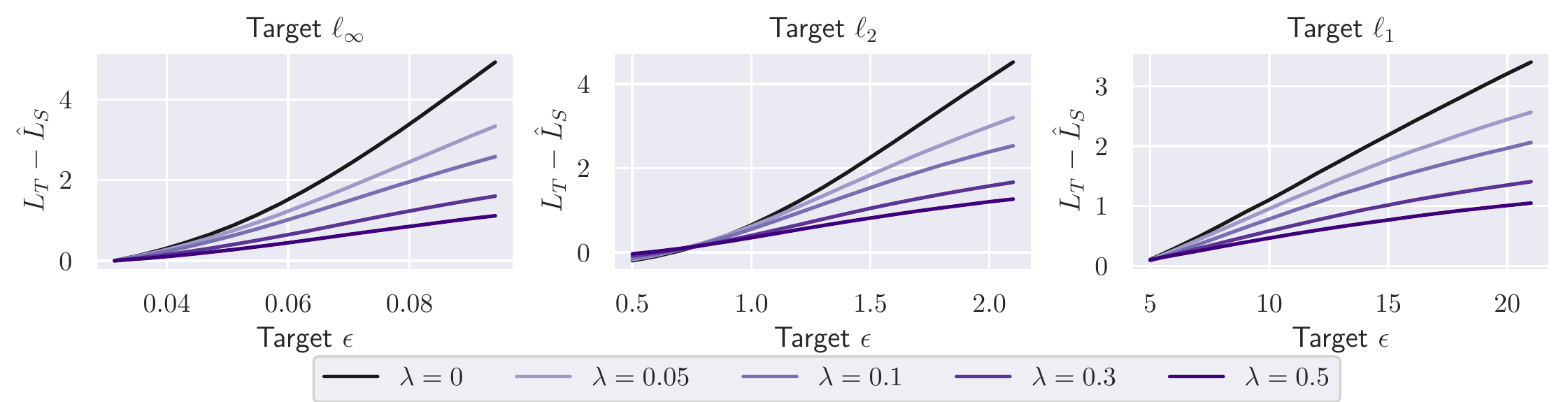}
    \caption{
    Threat model generalization gap of ResNet-18 models on CIFAR-10 trained using AT-VR at regularization strength $\lambda$ measured on target $\ell_p, p=\{\infty, 2, 1\}$ adversarial examples with radius $\epsilon$.  The generalization gap is measured with respect to cross entropy loss. All models are trained with source $\ell_\infty$ perturbations of radius $\frac{8}{255}$. We find that increasing VR strength decreases the generalization gap across $\epsilon$.}
    \label{fig:cif_gen}
    \vspace{-5pt}
\end{figure}

In Section \ref{sec:impactofvar}, we observed that AT-VR improves robust accuracy on a variety of unforeseen target threat models at the cost of clean accuracy.  This suggests that AT-VR makes the change in adversarial loss on more difficult threat models increase more gradually.  In this section, we experimentally verify this by plotting the gap between source and target losses (measured via cross entropy) across different perturbation strengths $\epsilon$ for $\ell_p$ threat models ($p \in \{\infty, 2, 1\}$) for ResNet-18 models trained on CIFAR-10.  We present results for models using AT-VR with $\ell_{\infty}$ source attacks in Figure \ref{fig:cif_gen}.  For these experiments, we generate adversarial examples using APGD from AutoAttack \citep{croce2020reliable}.  We also provide corresponding plots for $\ell_2$ source attacks in Appendix \ref{app:add_lp}.

We find that AT-VR consistently reduces the gap between source and target losses on $\ell_p$ attacks across different target perturbation strengths $\epsilon$. \textit{We observe that this gap decreases as regularization increases across target threat models}.  This suggests that VR can reduce the generalization gap across threat models, making the loss measured on the source threat model better reflect the loss measured on the target threat model, which matches our results from Corollary \ref{generalization_bound}.

\mysubsection{Visualizing the expansion function}\label{sec:cif_exp_fun}
The effectiveness of AT-VR suggests that an expansion function exists between across the different imperceptible threat models tested.  In this section, we visualize the expansion function between $\ell_{\infty}$ and $\ell_2$ source and target pairs for ResNet-18 models on CIFAR-10.  We train a total of 15 ResNet-18 models using PGD-AT with and without VR on $\ell_2$ and $\ell_{\infty}$ source threat models.  We evaluate variation on models saved every 10 epochs during training along with the model saved at epoch with best performance, leading to variation computation on a total of 315 models for each source threat model.  

 \begin{wrapfigure}{r}{0.5\textwidth}
     \centering
     \includegraphics[width=0.45\textwidth]{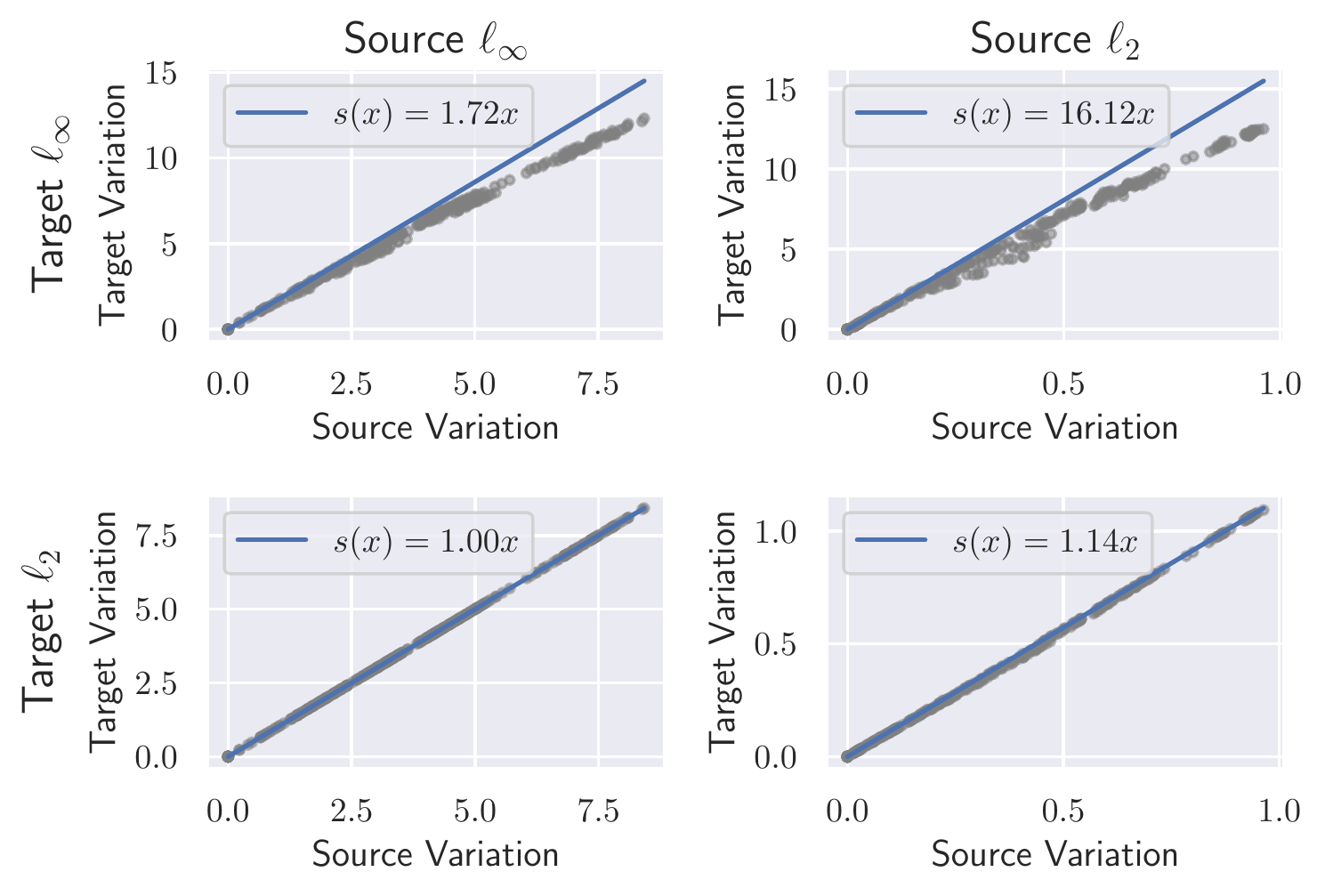}
     \caption{Plots of minimum linear expansion function $s$ shown in blue computed on 315 adversarially trained ResNet-18 models.  Each grey point represents variation measured on the source and target pair.  Variation is computed on the logits.  The two columns represent the source adversary ($\ell_{\infty}$ and $\ell_2$ respectively).  The two rows represent the target adversary ($\ell_{\infty}$ and $\ell_2$ respectively).}
     \label{fig:cifar_exp}
     \vspace{-15pt}
 \end{wrapfigure}
 
We consider 4 cases: (1) $\ell_\infty$ source with $\epsilon = \frac{8}{255}$ to $\ell_{\infty}$ target with $\epsilon = \frac{16}{255}$, (2) $\ell_\infty$ source with $\epsilon = \frac{8}{255}$ to a target consisting of the union of the source with an $\ell_2$ threat model with radius $0.5$, (3) $\ell_2$ source with $\epsilon = 0.5$ to a target consisting of the union of the source with an $\ell_\infty$ threat model with $\epsilon = \frac{8}{255}$, and (4) $\ell_2$ source with $\epsilon = 0.5$ to $\ell_2$ target with $\epsilon=1$. 
In cases (2) and (3), since the target is the union of $\ell_p$ balls, we approximate the variation of the union by taking the maximum variation across both $\ell_p$ balls.  We plot the measured source vs target variation along with the minimum linear expansion function $s$ in Figure \ref{fig:cifar_exp}.

We find that in all cases the distribution of source vs target variation is sublinear, and we can upper bound this distribution with a linear expansion function with relatively small slope.  Recall our finding in Theorem \ref{thm:linear} that for linear models there exists a linear expansion function across $\ell_p$ norms.  We hypothesize that this property also appears for ResNet-18 models because neural networks are piecewise linear.

%Interestingly, we note that applying VR generally does not decrease robust accuracy on the source (and in many cases improves upon source accuracy).  However, we do find that VR can lead to large trade off in clean accuracy.  Understanding this tradeoff is an interesting direction for future work.
\vspace{-5pt}
\section{Discussion, Limitations, and Conclusion}
%\vspace{-3pt}
\label{sec:conclusion}
We highlight a limitation in adversarial ML research: the lack of understanding of how robustness degrades when a mismatch in source and target threat models occurs. Our work takes steps toward addressing this problem by formulating the problem of learning with an unforeseen adversary and providing a framework for reasoning about generalization under this setting.  With this framework, we derive a bound for threat model generalization gap in terms of variation and use this bound to design an algorithm, adversarial training with variation regularization (AT-VR).  We highlight several limitations of our theoretical results: (1) the bounds provided can be quite loose and may not be good predictors of unforeseen loss, (2) while we show that an expansion function between $\ell_p$ balls exists for linear models, it is unclear if that is the case for neural networks.  Additionally, we highlight several limitations of AT-VR: (1) its success depends on the existence of an expansion function, (2) VR trades off additional clean accuracy and increases computational complexity of training. Further research on improving source threat models and the accuracy and efficiency of adversarial training algorithms can improve the performance of AT-VR.  Finally, we note that in some applications, such as defending against website fingerprinting \citep{rahman2020mockingbird} and bypassing facial recognition based surveillance \citep{shan2020fawkes},  adversarial examples are used for good, so improving robustness against adversarial examples may consequently hurt these applications.

\begin{ack}
We would like to thank Tianle Cai, Peter Ramadge, and Vincent Poor for their feedback on this work. This work was supported in part by the National Science Foundation under grants CNS-1553437 and CNS-1704105, the ARL’s Army Artificial Intelligence Innovation Institute (A2I2), the Office of Naval Research Young Investigator Award, the Army Research Office Young Investigator Prize, Schmidt DataX award, and Princeton E-ffiliates Award.  This material is also based upon work supported by the National Science Foundation Graduate Research Fellowship under Grant No. DGE-2039656.  Any opinions, findings, and conclusions or recommendations expressed in this material are those of the author(s) and do not necessarily reflect the views of the National Science Foundation.
\end{ack}

\bibliography{refs}

\begin{thebibliography}{49}
\providecommand{\natexlab}[1]{#1}
\providecommand{\url}[1]{\texttt{#1}}
\expandafter\ifx\csname urlstyle\endcsname\relax
  \providecommand{\doi}[1]{doi: #1}\else
  \providecommand{\doi}{doi: \begingroup \urlstyle{rm}\Url}\fi

\bibitem[Albuquerque et~al.(2019)Albuquerque, Monteiro, Darvishi, Falk, and
  Mitliagkas]{albuquerque2019generalizing}
Albuquerque, I., Monteiro, J., Darvishi, M., Falk, T.~H., and Mitliagkas, I.
\newblock Generalizing to unseen domains via distribution matching.
\newblock \emph{arXiv preprint arXiv:1911.00804}, 2019.

\bibitem[Attias et~al.(2019)Attias, Kontorovich, and
  Mansour]{attias2019improved}
Attias, I., Kontorovich, A., and Mansour, Y.
\newblock Improved generalization bounds for robust learning.
\newblock In \emph{Algorithmic Learning Theory}, pp.\  162--183. PMLR, 2019.

\bibitem[Carlini \& Wagner(2017)Carlini and Wagner]{carlini2017towards}
Carlini, N. and Wagner, D.
\newblock Towards evaluating the robustness of neural networks.
\newblock In \emph{2017 ieee symposium on security and privacy (sp)}, pp.\
  39--57. IEEE, 2017.

\bibitem[Chen et~al.(2022)Chen, Raghuram, Choi, Wu, Liang, and
  Jha]{chen2022revisiting}
Chen, J., Raghuram, J., Choi, J., Wu, X., Liang, Y., and Jha, S.
\newblock Revisiting adversarial robustness of classifiers with a reject
  option.
\newblock In \emph{The AAAI-22 Workshop on Adversarial Machine Learning and
  Beyond}, 2022.
\newblock URL \url{https://openreview.net/forum?id=UiF3RTES7pU}.

\bibitem[Chen et~al.(2020)Chen, Min, Zhang, and Karbasi]{chen2020more}
Chen, L., Min, Y., Zhang, M., and Karbasi, A.
\newblock More data can expand the generalization gap between adversarially
  robust and standard models.
\newblock In \emph{International Conference on Machine Learning}, pp.\
  1670--1680. PMLR, 2020.

\bibitem[Cohen et~al.(2019)Cohen, Rosenfeld, and Kolter]{cohen2019certified}
Cohen, J.~M., Rosenfeld, E., and Kolter, J.~Z.
\newblock Certified adversarial robustness via randomized smoothing.
\newblock In Chaudhuri, K. and Salakhutdinov, R. (eds.), \emph{Proceedings of
  the 36th International Conference on Machine Learning, {ICML} 2019, 9-15 June
  2019, Long Beach, California, {USA}}, volume~97 of \emph{Proceedings of
  Machine Learning Research}, pp.\  1310--1320. {PMLR}, 2019.
\newblock URL \url{http://proceedings.mlr.press/v97/cohen19c.html}.

\bibitem[Croce \& Hein(2020{\natexlab{a}})Croce and Hein]{Croce020}
Croce, F. and Hein, M.
\newblock Provable robustness against all adversarial
  {\textdollar}l{\_}p{\textdollar}-perturbations for
  {\textdollar}p{\textbackslash}geq 1{\textdollar}.
\newblock In \emph{8th International Conference on Learning Representations,
  {ICLR} 2020, Addis Ababa, Ethiopia, April 26-30, 2020}. OpenReview.net,
  2020{\natexlab{a}}.
\newblock URL \url{https://openreview.net/forum?id=rklk\_ySYPB}.

\bibitem[Croce \& Hein(2020{\natexlab{b}})Croce and Hein]{croce2020reliable}
Croce, F. and Hein, M.
\newblock Reliable evaluation of adversarial robustness with an ensemble of
  diverse parameter-free attacks.
\newblock In \emph{International conference on machine learning}, pp.\
  2206--2216. PMLR, 2020{\natexlab{b}}.

\bibitem[Cullina et~al.(2018)Cullina, Bhagoji, and Mittal]{cullina2018pac}
Cullina, D., Bhagoji, A.~N., and Mittal, P.
\newblock Pac-learning in the presence of evasion adversaries.
\newblock \emph{arXiv preprint arXiv:1806.01471}, 2018.

\bibitem[Deng et~al.(2009)Deng, Dong, Socher, Li, Li, and
  Fei-Fei]{deng2009imagenet}
Deng, J., Dong, W., Socher, R., Li, L.-J., Li, K., and Fei-Fei, L.
\newblock Imagenet: A large-scale hierarchical image database.
\newblock In \emph{2009 IEEE conference on computer vision and pattern
  recognition}, pp.\  248--255. Ieee, 2009.

\bibitem[Diakonikolas et~al.(2019)Diakonikolas, Kane, and
  Manurangsi]{diakonikolas2019nearly}
Diakonikolas, I., Kane, D.~M., and Manurangsi, P.
\newblock Nearly tight bounds for robust proper learning of halfspaces with a
  margin.
\newblock \emph{arXiv preprint arXiv:1908.11335}, 2019.

\bibitem[Diochnos et~al.(2019)Diochnos, Mahloujifar, and
  Mahmoody]{diochnos2019lower}
Diochnos, D.~I., Mahloujifar, S., and Mahmoody, M.
\newblock Lower bounds for adversarially robust pac learning.
\newblock \emph{arXiv preprint arXiv:1906.05815}, 2019.

\bibitem[Everingham et~al.(2010)Everingham, Gool, Williams, Winn, and
  Zisserman]{EveringhamGWWZ10}
Everingham, M., Gool, L.~V., Williams, C. K.~I., Winn, J.~M., and Zisserman, A.
\newblock The pascal visual object classes {(VOC)} challenge.
\newblock \emph{Int. J. Comput. Vis.}, 88\penalty0 (2):\penalty0 303--338,
  2010.
\newblock \doi{10.1007/s11263-009-0275-4}.
\newblock URL \url{https://doi.org/10.1007/s11263-009-0275-4}.

\bibitem[Glorot \& Bengio(2010)Glorot and Bengio]{GlorotB10}
Glorot, X. and Bengio, Y.
\newblock Understanding the difficulty of training deep feedforward neural
  networks.
\newblock In Teh, Y.~W. and Titterington, D.~M. (eds.), \emph{Proceedings of
  the Thirteenth International Conference on Artificial Intelligence and
  Statistics, {AISTATS} 2010, Chia Laguna Resort, Sardinia, Italy, May 13-15,
  2010}, volume~9 of \emph{{JMLR} Proceedings}, pp.\  249--256. JMLR.org, 2010.
\newblock URL \url{http://proceedings.mlr.press/v9/glorot10a.html}.

\bibitem[He et~al.(2015)He, Zhang, Ren, and Sun]{he2015rectifiers}
He, K., Zhang, X., Ren, S., and Sun, J.
\newblock Delving deep into rectifiers: Surpassing human-level performance on
  imagenet classification.
\newblock In \emph{2015 {IEEE} International Conference on Computer Vision,
  {ICCV} 2015, Santiago, Chile, December 7-13, 2015}, pp.\  1026--1034. {IEEE}
  Computer Society, 2015.
\newblock \doi{10.1109/ICCV.2015.123}.
\newblock URL \url{https://doi.org/10.1109/ICCV.2015.123}.

\bibitem[He et~al.(2016)He, Zhang, Ren, and Sun]{he2016deep}
He, K., Zhang, X., Ren, S., and Sun, J.
\newblock Deep residual learning for image recognition.
\newblock In \emph{Proceedings of the IEEE Conference on Computer Vision and
  Pattern Recognition}, pp.\  770--778, 2016.

\bibitem[Howard()]{imagenette}
Howard, J.
\newblock Imagenette.
\newblock URL \url{https://github.com/fastai/imagenette/}.

\bibitem[Ilyas et~al.(2019)Ilyas, Santurkar, Engstrom, Tran, and
  Madry]{ilyas2019adversarial}
Ilyas, A., Santurkar, S., Engstrom, L., Tran, B., and Madry, A.
\newblock Adversarial examples are not bugs, they are features.
\newblock \emph{Advances in neural information processing systems}, 32, 2019.

\bibitem[Jin \& Rinard(2020)Jin and Rinard]{jin2020manifold}
Jin, C. and Rinard, M.
\newblock Manifold regularization for locally stable deep neural networks.
\newblock \emph{arXiv preprint arXiv:2003.04286}, 2020.

\bibitem[Kang et~al.(2019)Kang, Sun, Hendrycks, Brown, and
  Steinhardt]{kang2019robustness}
Kang, D., Sun, Y., Hendrycks, D., Brown, T., and Steinhardt, J.
\newblock Testing robustness against unforeseen adversaries.
\newblock \emph{arXiv preprint arXiv:1908.08016}, 2019.

\bibitem[Kannan et~al.(2018)Kannan, Kurakin, and
  Goodfellow]{kannan2018adversarial}
Kannan, H., Kurakin, A., and Goodfellow, I.
\newblock Adversarial logit pairing.
\newblock \emph{arXiv preprint arXiv:1803.06373}, 2018.

\bibitem[Krizhevsky et~al.(2009)Krizhevsky, Hinton,
  et~al.]{krizhevsky2009learning}
Krizhevsky, A., Hinton, G., et~al.
\newblock Learning multiple layers of features from tiny images.
\newblock 2009.

\bibitem[Krizhevsky et~al.(2017)Krizhevsky, Sutskever, and
  Hinton]{KrizhevskySH17}
Krizhevsky, A., Sutskever, I., and Hinton, G.~E.
\newblock Imagenet classification with deep convolutional neural networks.
\newblock \emph{Commun. {ACM}}, 60\penalty0 (6):\penalty0 84--90, 2017.
\newblock \doi{10.1145/3065386}.
\newblock URL \url{http://doi.acm.org/10.1145/3065386}.

\bibitem[Laidlaw \& Feizi(2019)Laidlaw and Feizi]{LaidlawF19}
Laidlaw, C. and Feizi, S.
\newblock Functional adversarial attacks.
\newblock In Wallach, H.~M., Larochelle, H., Beygelzimer, A.,
  d'Alch{\'{e}}{-}Buc, F., Fox, E.~B., and Garnett, R. (eds.), \emph{Advances
  in Neural Information Processing Systems 32: Annual Conference on Neural
  Information Processing Systems 2019, NeurIPS 2019, December 8-14, 2019,
  Vancouver, BC, Canada}, pp.\  10408--10418, 2019.
\newblock URL
  \url{https://proceedings.neurips.cc/paper/2019/hash/6e923226e43cd6fac7cfe1e13ad000ac-Abstract.html}.

\bibitem[Laidlaw et~al.(2021)Laidlaw, Singla, and Feizi]{laidlaw2020perceptual}
Laidlaw, C., Singla, S., and Feizi, S.
\newblock Perceptual adversarial robustness: Defense against unseen threat
  models.
\newblock In \emph{9th International Conference on Learning Representations,
  {ICLR} 2021, Virtual Event, Austria, May 3-7, 2021}. OpenReview.net, 2021.
\newblock URL \url{https://openreview.net/forum?id=dFwBosAcJkN}.

\bibitem[LeCun et~al.(1998)LeCun, Bottou, Bengio, and
  Haffner]{lecun1998gradient}
LeCun, Y., Bottou, L., Bengio, Y., and Haffner, P.
\newblock Gradient-based learning applied to document recognition.
\newblock \emph{Proceedings of the IEEE}, 86\penalty0 (11):\penalty0
  2278--2324, 1998.

\bibitem[Madry et~al.(2018)Madry, Makelov, Schmidt, Tsipras, and
  Vladu]{madry2017towards}
Madry, A., Makelov, A., Schmidt, L., Tsipras, D., and Vladu, A.
\newblock Towards deep learning models resistant to adversarial attacks.
\newblock In \emph{6th International Conference on Learning Representations,
  {ICLR} 2018, Vancouver, BC, Canada, April 30 - May 3, 2018, Conference Track
  Proceedings}. OpenReview.net, 2018.
\newblock URL \url{https://openreview.net/forum?id=rJzIBfZAb}.

\bibitem[Maini et~al.(2020)Maini, Wong, and Kolter]{MainiWK20}
Maini, P., Wong, E., and Kolter, J.~Z.
\newblock Adversarial robustness against the union of multiple perturbation
  models.
\newblock In \emph{Proceedings of the 37th International Conference on Machine
  Learning, {ICML} 2020, 13-18 July 2020, Virtual Event}, volume 119 of
  \emph{Proceedings of Machine Learning Research}, pp.\  6640--6650. {PMLR},
  2020.
\newblock URL \url{http://proceedings.mlr.press/v119/maini20a.html}.

\bibitem[Montasser et~al.(2019)Montasser, Hanneke, and Srebro]{montasser2019vc}
Montasser, O., Hanneke, S., and Srebro, N.
\newblock Vc classes are adversarially robustly learnable, but only improperly.
\newblock In \emph{Conference on Learning Theory}, pp.\  2512--2530. PMLR,
  2019.

\bibitem[Montasser et~al.(2021)Montasser, Hanneke, and
  Srebro]{montasser2021adversarially}
Montasser, O., Hanneke, S., and Srebro, N.
\newblock Adversarially robust learning with unknown perturbation sets.
\newblock In Belkin, M. and Kpotufe, S. (eds.), \emph{Conference on Learning
  Theory, {COLT} 2021, 15-19 August 2021, Boulder, Colorado, {USA}}, volume 134
  of \emph{Proceedings of Machine Learning Research}, pp.\  3452--3482. {PMLR},
  2021.
\newblock URL \url{http://proceedings.mlr.press/v134/montasser21a.html}.

\bibitem[Raghunathan et~al.(2019)Raghunathan, Xie, Yang, Duchi, and
  Liang]{raghunathan2019adversarial}
Raghunathan, A., Xie, S.~M., Yang, F., Duchi, J.~C., and Liang, P.
\newblock Adversarial training can hurt generalization.
\newblock \emph{arXiv preprint arXiv:1906.06032}, 2019.

\bibitem[Rahman et~al.(2020)Rahman, Imani, Mathews, and
  Wright]{rahman2020mockingbird}
Rahman, M.~S., Imani, M., Mathews, N., and Wright, M.
\newblock Mockingbird: Defending against deep-learning-based website
  fingerprinting attacks with adversarial traces.
\newblock \emph{IEEE Transactions on Information Forensics and Security},
  16:\penalty0 1594--1609, 2020.

\bibitem[Shan et~al.(2020)Shan, Wenger, Zhang, Li, Zheng, and
  Zhao]{shan2020fawkes}
Shan, S., Wenger, E., Zhang, J., Li, H., Zheng, H., and Zhao, B.~Y.
\newblock Fawkes: Protecting privacy against unauthorized deep learning models.
\newblock In \emph{29th USENIX Security Symposium (USENIX Security 20)}, pp.\
  1589--1604, 2020.

\bibitem[Simonyan \& Zisserman(2015)Simonyan and Zisserman]{simonyan2015vgg}
Simonyan, K. and Zisserman, A.
\newblock Very deep convolutional networks for large-scale image recognition.
\newblock In Bengio, Y. and LeCun, Y. (eds.), \emph{3rd International
  Conference on Learning Representations, {ICLR} 2015, San Diego, CA, USA, May
  7-9, 2015, Conference Track Proceedings}, 2015.
\newblock URL \url{http://arxiv.org/abs/1409.1556}.

\bibitem[Stutz et~al.(2020)Stutz, Hein, and Schiele]{stutz2020confidence}
Stutz, D., Hein, M., and Schiele, B.
\newblock Confidence-calibrated adversarial training: Generalizing to unseen
  attacks.
\newblock In \emph{Proceedings of the 37th International Conference on Machine
  Learning, {ICML} 2020, 13-18 July 2020, Virtual Event}, volume 119 of
  \emph{Proceedings of Machine Learning Research}, pp.\  9155--9166. {PMLR},
  2020.
\newblock URL \url{http://proceedings.mlr.press/v119/stutz20a.html}.

\bibitem[Szegedy et~al.(2014)Szegedy, Zaremba, Sutskever, Bruna, Erhan,
  Goodfellow, and Fergus]{szegedy2013intriguing}
Szegedy, C., Zaremba, W., Sutskever, I., Bruna, J., Erhan, D., Goodfellow,
  I.~J., and Fergus, R.
\newblock Intriguing properties of neural networks.
\newblock In Bengio, Y. and LeCun, Y. (eds.), \emph{2nd International
  Conference on Learning Representations, {ICLR} 2014, Banff, AB, Canada, April
  14-16, 2014, Conference Track Proceedings}, 2014.
\newblock URL \url{http://arxiv.org/abs/1312.6199}.

\bibitem[Tram{\`e}r \& Boneh(2019)Tram{\`e}r and Boneh]{TB19}
Tram{\`e}r, F. and Boneh, D.
\newblock Adversarial training and robustness for multiple perturbations.
\newblock In \emph{Conference on Neural Information Processing Systems
  (NeurIPS)}, 2019.
\newblock URL \url{https://arxiv.org/abs/1904.13000}.

\bibitem[Tsipras et~al.(2019)Tsipras, Santurkar, Engstrom, Turner, and
  Madry]{tsipras2019robustness}
Tsipras, D., Santurkar, S., Engstrom, L., Turner, A., and Madry, A.
\newblock Robustness may be at odds with accuracy.
\newblock In \emph{International Conference on Learning Representations},
  number 2019, 2019.

\bibitem[Wu et~al.(2020{\natexlab{a}})Wu, Xia, and Wang]{wu2020adversarial}
Wu, D., Xia, S., and Wang, Y.
\newblock Adversarial weight perturbation helps robust generalization.
\newblock In Larochelle, H., Ranzato, M., Hadsell, R., Balcan, M., and Lin, H.
  (eds.), \emph{Advances in Neural Information Processing Systems 33: Annual
  Conference on Neural Information Processing Systems 2020, NeurIPS 2020,
  December 6-12, 2020, virtual}, 2020{\natexlab{a}}.
\newblock URL
  \url{https://proceedings.neurips.cc/paper/2020/hash/1ef91c212e30e14bf125e9374262401f-Abstract.html}.

\bibitem[Wu et~al.(2020{\natexlab{b}})Wu, Wang, and Yu]{wu2020stronger}
Wu, K., Wang, A., and Yu, Y.
\newblock Stronger and faster wasserstein adversarial attacks.
\newblock In \emph{International Conference on Machine Learning}, pp.\
  10377--10387. PMLR, 2020{\natexlab{b}}.

\bibitem[Xiao et~al.(2018)Xiao, Zhu, Li, He, Liu, and Song]{XiaoZ0HLS18}
Xiao, C., Zhu, J., Li, B., He, W., Liu, M., and Song, D.
\newblock Spatially transformed adversarial examples.
\newblock In \emph{6th International Conference on Learning Representations,
  {ICLR} 2018, Vancouver, BC, Canada, April 30 - May 3, 2018, Conference Track
  Proceedings}. OpenReview.net, 2018.
\newblock URL \url{https://openreview.net/forum?id=HyydRMZC-}.

\bibitem[Yang et~al.(2020)Yang, Duan, Hu, Salman, Razenshteyn, and
  Li]{yang2020randomized}
Yang, G., Duan, T., Hu, J.~E., Salman, H., Razenshteyn, I., and Li, J.
\newblock Randomized smoothing of all shapes and sizes.
\newblock In \emph{International Conference on Machine Learning}, pp.\
  10693--10705. PMLR, 2020.

\bibitem[Ye et~al.(2021)Ye, Xie, Cai, Li, Li, and Wang]{ye2021towards}
Ye, H., Xie, C., Cai, T., Li, R., Li, Z., and Wang, L.
\newblock Towards a theoretical framework of out-of-distribution
  generalization.
\newblock \emph{CoRR}, abs/2106.04496, 2021.
\newblock URL \url{https://arxiv.org/abs/2106.04496}.

\bibitem[Yu et~al.(2021)Yu, Yang, Dobriban, Steinhardt, and
  Ma]{yu2021understanding}
Yu, Y., Yang, Z., Dobriban, E., Steinhardt, J., and Ma, Y.
\newblock Understanding generalization in adversarial training via the
  bias-variance decomposition.
\newblock \emph{arXiv preprint arXiv:2103.09947}, 2021.

\bibitem[Zagoruyko \& Komodakis(2016)Zagoruyko and
  Komodakis]{zagoruyko2016wide}
Zagoruyko, S. and Komodakis, N.
\newblock Wide residual networks.
\newblock In Wilson, R.~C., Hancock, E.~R., and Smith, W. A.~P. (eds.),
  \emph{Proceedings of the British Machine Vision Conference 2016, {BMVC} 2016,
  York, UK, September 19-22, 2016}. {BMVA} Press, 2016.
\newblock URL \url{http://www.bmva.org/bmvc/2016/papers/paper087/index.html}.

\bibitem[Zhang et~al.(2020{\natexlab{a}})Zhang, Ye, Gong, Zhu, and
  Liu]{zhang2020black}
Zhang, D., Ye, M., Gong, C., Zhu, Z., and Liu, Q.
\newblock Black-box certification with randomized smoothing: A functional
  optimization based framework.
\newblock \emph{Advances in Neural Information Processing Systems},
  33:\penalty0 2316--2326, 2020{\natexlab{a}}.

\bibitem[Zhang et~al.(2019)Zhang, Yu, Jiao, Xing, Ghaoui, and
  Jordan]{zhang2019theoretically}
Zhang, H., Yu, Y., Jiao, J., Xing, E.~P., Ghaoui, L.~E., and Jordan, M.~I.
\newblock Theoretically principled trade-off between robustness and accuracy.
\newblock In Chaudhuri, K. and Salakhutdinov, R. (eds.), \emph{Proceedings of
  the 36th International Conference on Machine Learning, {ICML} 2019, 9-15 June
  2019, Long Beach, California, {USA}}, volume~97 of \emph{Proceedings of
  Machine Learning Research}, pp.\  7472--7482. {PMLR}, 2019.
\newblock URL \url{http://proceedings.mlr.press/v97/zhang19p.html}.

\bibitem[Zhang et~al.(2020{\natexlab{b}})Zhang, Chen, Xiao, Gowal, Stanforth,
  Li, Boning, and Hsieh]{zhang2020towards}
Zhang, H., Chen, H., Xiao, C., Gowal, S., Stanforth, R., Li, B., Boning, D.~S.,
  and Hsieh, C.
\newblock Towards stable and efficient training of verifiably robust neural
  networks.
\newblock In \emph{8th International Conference on Learning Representations,
  {ICLR} 2020, Addis Ababa, Ethiopia, April 26-30, 2020}. OpenReview.net,
  2020{\natexlab{b}}.
\newblock URL \url{https://openreview.net/forum?id=Skxuk1rFwB}.

\bibitem[Zhang et~al.(2018)Zhang, Isola, Efros, Shechtman, and
  Wang]{zhang2018lpips}
Zhang, R., Isola, P., Efros, A.~A., Shechtman, E., and Wang, O.
\newblock The unreasonable effectiveness of deep features as a perceptual
  metric.
\newblock In \emph{2018 {IEEE} Conference on Computer Vision and Pattern
  Recognition, {CVPR} 2018, Salt Lake City, UT, USA, June 18-22, 2018}, pp.\
  586--595. Computer Vision Foundation / {IEEE} Computer Society, 2018.
\newblock \doi{10.1109/CVPR.2018.00068}.
\newblock URL
  \url{http://openaccess.thecvf.com/content\_cvpr\_2018/html/Zhang\_The\_Unreasonable\_Effectiveness\_CVPR\_2018\_paper.html}.

\end{thebibliography}
\bibliographystyle{icml2022}

%%%%%%%%%%%%%%%%%%%%%%%%%%%%%%%%%%%%%%%%%%%%%%%%%%%%%%%%%%%%
\section*{Checklist}

%%% BEGIN INSTRUCTIONS %%%
%The checklist follows the references.  Please
%read the checklist guidelines carefully for information on how to answer these
%questions.  For each question, change the default \answerTODO{} to \answerYes{},
%\answerNo{}, or \answerNA{}.  You are strongly encouraged to include a {\bf
%justification to your answer}, either by referencing the appropriate section of
%your paper or providing a brief inline description.  For example:
%\begin{itemize}
%  \item Did you include the license to the code and datasets? \answerYes{See Section~\ref{gen_inst}.}
%  \item Did you include the license to the code and datasets? \answerNo{The code and the data are proprietary.}
%  \item Did you include the license to the code and datasets? \answerNA{}
%\end{itemize}
%Please do not modify the questions and only use the provided macros for your
%answers.  Note that the Checklist section does not count towards the page
%limit.  In your paper, please delete this instructions block and only keep the
%Checklist section heading above along with the questions/answers below.
%%% END INSTRUCTIONS %%%

\begin{enumerate}

\item For all authors...
\begin{enumerate}
  \item Do the main claims made in the abstract and introduction accurately reflect the paper's contributions and scope?
    \answerYes{}
  \item Did you describe the limitations of your work?
    \answerYes{See Section \ref{sec:conclusion} for a discussion of limitations.}
  \item Did you discuss any potential negative societal impacts of your work?
    \answerYes{See Section \ref{sec:conclusion}.  In some applications, such as evading website fingerprinting, adversarial examples are helpful so improving defenses against them reduces their benefit in these applications.}
  \item Have you read the ethics review guidelines and ensured that your paper conforms to them?
    \answerYes{}
\end{enumerate}

\item If you are including theoretical results...
\begin{enumerate}
  \item Did you state the full set of assumptions of all theoretical results?
    \answerYes{All assumptions are specified in the theorem statements (See Theorem \ref{generalization_bound_part1} and Theorem \ref{thm:linear})}
        \item Did you include complete proofs of all theoretical results?
    \answerYes{See Appendix \ref{app:proof_gen}, \ref{app:proof_gen2}, \ref{app:proof_lin}}
\end{enumerate}

\item If you ran experiments...
\begin{enumerate}
  \item Did you include the code, data, and instructions needed to reproduce the main experimental results (either in the supplemental material or as a URL)?
    \answerYes{We provide our code in the supplemental material}
  \item Did you specify all the training details (e.g., data splits, hyperparameters, how they were chosen)?
    \answerYes{See Appendix \ref{app:exp_setup} for training details}
        \item Did you report error bars (e.g., with respect to the random seed after running experiments multiple times)?
    \answerYes{We provide error bars for ResNet-18 models on CIFAR-10 with $\ell_{\infty}$ source in Appendix \ref{app:seeds}, but not on other experiments because of the high computational cost of adversarial training.}
        \item Did you include the total amount of compute and the type of resources used (e.g., type of GPUs, internal cluster, or cloud provider)?
    \answerYes{See Appendix \ref{app:exp_setup}}
\end{enumerate}

\item If you are using existing assets (e.g., code, data, models) or curating/releasing new assets...
\begin{enumerate}
  \item If your work uses existing assets, did you cite the creators?
    \answerYes{We use publicly available datasets such as CIFAR-10, CIFAR-100, and ImageNette as well as existing network architectures.  These are cited in Section \ref{setup}}
  \item Did you mention the license of the assets?
    \answerYes{See Section \ref{setup}}
  \item Did you include any new assets either in the supplemental material or as a URL?
    \answerNA{}
  \item Did you discuss whether and how consent was obtained from people whose data you're using/curating?
    \answerNA{}
  \item Did you discuss whether the data you are using/curating contains personally identifiable information or offensive content?
    \answerNA{}
\end{enumerate}

\item If you used crowdsourcing or conducted research with human subjects...
\begin{enumerate}
  \item Did you include the full text of instructions given to participants and screenshots, if applicable?
    \answerNA{}
  \item Did you describe any potential participant risks, with links to Institutional Review Board (IRB) approvals, if applicable?
    \answerNA{}
  \item Did you include the estimated hourly wage paid to participants and the total amount spent on participant compensation?
    \answerNA{}
\end{enumerate}

\end{enumerate}

%%%%%%%%%%%%%%%%%%%%%%%%%%%%%%%%%%%%%%%%%%%%%%%%%%%%%%%%%%%%

\newpage
\appendix
\addcontentsline{toc}{section}{Appendix} % Add the appendix text to the document TOC
\part{Appendix} % Start the appendix part
\parttoc % Insert the appendix TOC

%\section{Broader Impact}
%Our work addresses a limitation of current approaches to robust machine learning, which is the inability of robust models to generalize well to threat models outside of those used during training. This limitation makes existing defenses not very useful in practice in safety-critical applications because they can still be easily bypassed by an adversary that uses a different threat model.  Our work takes steps toward addressing this problem through our proposed AT-VR, which improves robustness on unforeseen adversaries.  Since AT-VR uses no knowledge of the target threat model, AT-VR can also provide some robustness on future adversarial threat models that have not been formulated yet.  One detriment is that AT-VR trades off clean accuracy for improved generalization to unforeseen attacks, which may harm the performance of models in settings when an adversary is not present.

\section{Discussion of Related Works} \label{app:comparison}
\paragraph{Comparison to \citet{ye2021towards}}  \citet{ye2021towards} derive a generalization bound for the domain generalization problem in terms of variation across features.  They define variation as $\mathcal{V}_\rho(\phi, \mathcal{D}) = \max_{y\in \mathcal{Y}} \sup_{e,e' \in \mathcal{D}} \rho(\mathbb{P}(\phi(X_e) | y), \mathbb{P}(\phi(X_{e'})|y))$ where $\mathcal{D}$ is a set of training distributions, $\phi: \mathcal{X} \rightarrow \mathbb{R}$ is a function that maps the input to a 1-D feature, $\rho$ is a symmetric distance metric for distributions, and $X_e$ denotes inputs from domain $e$.  In comparison, we address the problem of learning with an unforeseen adversary and define unforeseen adversarial generalizability.  Our generalization bound using variation is an instantiation of our generalizability framework.   Additionally, we define a different measure of variation for threat models $\mathcal{V}(h, N) = \mathbb{E}_{(x, y) \sim \mathcal{D}} \max_{x_1, x_2 \in N(x)} ||h(x_1), -h(x_2)||_2$, which allows us to use it as regularization during training.

\textbf{Comparison to \citet{laidlaw2020perceptual}}  \citet{laidlaw2020perceptual} proposes training with perturbations of bounded LPIPS \citep{zhang2018lpips} since LPIPS metric is a better approximation of human perceptual distance than $\ell_p$ metrics.  In their proposed algorithm, perceptual adversarial training (PAT), they combine standard adversarial training with adversarial examples generated via their LPIPS bounded attack method.  In terms of terminology introduced in our paper, the \citet{laidlaw2020perceptual} improve the choice of source threat model while using an existing learning algorithm (adversarial training).  Meanwhile, our work takes the perspective of having a fixed source threat model and improving the learning algorithm.  This allows us to combine our approach with various source threat models including the attacks used by \citet{laidlaw2020perceptual} in PAT (see Appendix \ref{app:additional_sources}).

\textbf{Comparison to \citet{stutz2020confidence} and \citet{chen2022revisiting}} \citet{stutz2020confidence} and \citet{chen2022revisiting} address the problem of unforeseen attacks by adding a reject option in order to reject adversarial examples generated with a larger perturbation than used during training.  These techniques introduce a modified adversarial training objective that maximizes accuracy on perturbations within the source threat model and maximize rejection rate of large perturbations.  In comparison, we look at the problem of improving robustness on larger threat models instead of rejecting adversarial examples from larger threat models. Our algorithm AT-VR actively tries to find a robust model that minimizes our generalization bound without abstaining on any inputs.

\textbf{Comparison to \citet{Croce020}} \citet{Croce020} prove that certified robustness against $\ell_1$ and $\ell_{\infty}$ bounded perturbations implies certified robustness against attacks generated with any $\ell_p$ ball.  The size of this $\ell_p$ certified radius is the radius of the largest $\ell_p$ ball that can be contained in the convex hull of the $\ell_1$ and $\ell_{\infty}$ balls for which the model is certifiably robust.  In our work, we are primarily interested in empirical robustness on target threat models that are supersets of the source threat model used.  We demonstrate that by using variation regularization, we can improve robust performance on unforeseen threat models (including larger $\ell_p$ perturbations, StAdv, and Recolor) even when our learning algorithm optimizes for robust models on a single $\ell_p$ ball.

\textbf{Comparison to other forms of regularization for adversarial robustness } Prior works in adversarial training propose regularization techniques enforcing feature consistency to improve the performance of adversarial training.  For example, TRADES adversarial training\citep{zhang2019theoretically} uses a regularization term in the objective to reduce trade-off between clean accuracy and robust accuracy compared with PGD adversarial training.  This regularization term takes the form: $\lambda \max_{\hat{x} \in S(x)} \ell(f(x)f(\hat{x}))$.  Our variation regularization differs from TRADES since we regularize $\ell_2$ distance between extracted features.

Another regularization is adversarial logit pairing (ALP) \citep{kannan2018adversarial}.  This regularization is also $\ell_2$ based; in ALP, an adversarial example is first generated via $x' = \max_{\hat{x} \in S(x)} \ell(f(\hat{x}), y)$ and the $\ell_2$ distance between the logits of this adversarial example and the original image $\lambda||f(x') - f(x)||_2$ is added to the training objective.  ALP can be thought of as a technique to make the logits of adversarial examples close to the logits of clean images.  Variation regularization ($\lambda \max_{x', x''\in S(x)} ||f(x') - f(x'')||_2$) differs from ALP since it encourages the logits of any pair of images (not only adversarial examples) that lie within the source threat model to have similar features and does not use information about the label of the image.

\citet{jin2020manifold} propose a regularization technique motivated by concepts from manifold regularization.  Their regularization is computed with randomly sampled maximal perturbations $p \in \{\pm \epsilon\}^d$ and applied to standard training.  Their regularization includes 2 terms, one which regularizes the hamming distance between the ReLU masks of the network for $x - p$ and $x + p$ across inputs $x$, and the second which regularizes the squared $\ell_2$ distance between the network output on $x - p$ and $x + p$ ($||f(x + p) - f(x - p)||_2^2$).  They demonstrate that using both regularization terms with $\epsilon \in [2, 8]$ leads to robustness on $\ell_{\infty}$, $\ell_2$, and Wasserstein attacks.  In comparison, our variation regularization is motivated from the perspective of generalization across threat models.  We use use smaller values of $\epsilon$ in conjunction with adversarial training and regularize worst case $\ell_2$ distance between logits of any pair of examples within the source threat model.

%\begin{table}[ht]
%    \centering
%    \fontsize{8}{9}\selectfont
%    \begin{tabular}{@{}ccc|c|cccc|c|c@{}}
%    \hline
%    Method & Architecture &$\lambda$ & Clean & $\ell_{\infty}$ & $\ell_2$ & StAdv & Re- & Union \\
%          & & &acc  &$\epsilon=\frac{8}{255}$& $\epsilon=1$ & &color &   \\
         
%         \hline
%         \citep{jin2020manifold} & PreActResNet-18 &0 & 72.1& 36.8& 43.4& 28.4 &63.1& 21.2 & 9.1\\
%         AT-VR [$\ell_2, \epsilon=0.5$] (ours) & ResNet-18 & 1.0 & 85.2 & 34.7 & 40.8 & 32.4 & 62.7 & 25.0 & 4.3 \\
%         \citep{laidlaw2020perceptual} & ResNet-50 & 0 & 71.6 & 28.7 & 33.3 & 64.5 & 67.5 & 27.8 & 9.8\\
%         PAT-VR (ours) & ResNet-50 & 0.1  & 72.5 & 29.4 & 35.1 & 61.8 & 70.7 & \textbf{28.8} & \textbf{30.8} \\
         %$\ell_{\infy}, \epsilon=8/255$& ResNet-18 &0.5& 72.91 & 48.87 & 24.37 & 18.53 & 60.90 &  & 2.47 \\
         %Self-bounded &0.1 & 83.5 & 28.1 & 30.2 & 19.4 & 73.0 & 7.4 & 31.0 & 4.3\\
         %Self-bounded &0.3& 82.5 & 18.3 & 18.3  & 45.2 & 77.8 & 13.7 & 39.9 & 17.3 \\
%         \hline
         
%    \end{tabular}
%    \caption{Robust accuracy of AT-VR. The union column reports the accuracy obtained on the union of $\ell_{\infty}$, $\ell_2$, StAdv, and Recolor adversaries. 
%    }
%    \label{tab:comp_jin}
%\end{table}

%
\section{Proofs of Theorems and Analysis of Bounds}
\subsection{Proof of Theorem \ref{generalization_bound_part1}} \label{app:proof_gen}
\begin{proof}
By definition of expected adversarial risk, we have that for any $f \in \mathcal{F}$
$$L_T(f) - L_S(f) = \mathbb{E}_{(x, y) \sim \mathcal{D}} (\max_{x_1 \in T(x)} \ell(f(x_1), y) \\ - \max_{x_2 \in S(x)} \ell(f(x_2), y))$$
$$\le \mathbb{E}_{(x, y) \sim \mathcal{D}} \max_{x_1 \in T(x), x_2 \in S(x)} \left(\ell(f(x_1), y) - \ell(f(x_2), y)\right) $$
By $\rho$-Lipschitzness of $\ell$
$$\le \mathbb{E}_{(x, y) \sim \mathcal{D}}\max_{x_1 \in T(x), x_2 \in S(x)} \rho ||f(x_1) - f(x_2) ||_2$$
$$= \rho\mathbb{E}_{(x, y) \sim \mathcal{D}} \max_{x_1 \in T(x), x_2 \in S(x)} ||g(h(x_1)) - g(h(x_2)) ||_2$$
By $\sigma_{\mathcal{G}}$ Lipschitzness:
$$\le \rho \sigma_{\mathcal{G}} \mathbb{E}_{(x, y) \sim \mathcal{D}} \max_{x_1 \in T(x), x_2 \in S(x)}  ||h(x_1) - h(x_2) ||_2$$
Since $S \subseteq T$:
$$\le \rho\sigma_{\mathcal{G}}\mathbb{E}_{(x, y) \sim \mathcal{D}} \max_{x_1, x_2 \in T(x)}  ||h(x_1) - h(x_2) ||_2$$
$$ = \rho\sigma_{\mathcal{G}}\mathcal{V}(h, T)$$
Note that our learning algorithm $\mathcal{A}$ over $\mathcal{F}$ outputs a classifier with $\mathcal{V}(h, T) \le \epsilon(T, m)$ with probability $1-\delta$.  Combining this with the above bound gives:
$$\mathbb{P}[L_T(f) - L_S(f) \le \rho\sigma_{\mathcal{G}}\epsilon(T, m)] \ge 1 - \delta$$
\end{proof}

\subsection{Impact of choice of source and target}\label{app:proof_gen2}
In Theorem \ref{generalization_bound_part1}, we demonstrated a bound on threat model generalization gap that does not depend on source threat model, so this bound does not allow us to understand what types of targets are easier to generalize given a source threat model.  In this section, we will introduce a tighter bound in terms of Hausdorff distance, which takes both source and target threat model into account.
\begin{definition}[Directed Hausdorff Distance] Let $A, B \subset X$ and let $d: X \times X \rightarrow \mathbb{R}$ be a distance metric.  The Hausdorff distance from $A$ to $B$ based on $d$ is given by 
$$H_{d}(A, B) = \max_{x_1 \in A} \min_{x_2 \in B} d(x_1, x_2)$$
\end{definition}
Intuitively what this measures is, if we were to take every point in $A$ and project it to the nearest point in set $B$, what is the maximum distance projection?  We can then derive a generalization bound in terms of Hausdorff distance based on feature space distance.
\begin{theorem}[Threat Model Generalization Bound with Hausdorff Distance] \label{bound_hausdorff} Let $S$ denote the source threat model and $\mathcal{D}$ denote the data distribution.  Let $\mathcal{F} = \mathcal{G} \circ \mathcal{H}$.  Let $\mathcal{G}$ be a class of Lipschitz classifiers, where the Lipschitz constant is upper bounded by $\sigma_{\mathcal{G}}$.  Let $\ell$ be a $\rho$-Lipschitz loss function with respect to the 2-norm.  Then, for any target threat model $T$ with $S \subseteq T$.  Let $d_h :=  ||h(x_1) - h(x_2)||_2$ be the distance between feature extracted by the model.  Then,
$$L_T(f) - L_S(f) \le \rho \sigma_{\mathcal{G}} \mathbb{E}_{(x, y) \sim \mathcal{D}} [H_{d_h}(T(x), S(x))]$$
\end{theorem}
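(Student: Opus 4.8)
The plan is to mimic the proof of Theorem~\ref{generalization_bound_part1} in Appendix~\ref{app:proof_gen}, but to exploit the asymmetry between the two maxima more carefully. Instead of bounding the difference of maxima by a maximum over \emph{all} pairs $(x_1,x_2)\in T(x)\times S(x)$ (which forces the worst possible $x_2$), we first fix the maximizer of the target adversarial loss and then choose, \emph{depending on that point}, its nearest neighbor in $S(x)$ in feature space. This is precisely what turns $\mathcal{V}(h,T)$ into the smaller quantity $\mathbb{E}[H_{d_h}(T(x),S(x))]$.

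Concretely, fix $(x,y)$ and let $x_1^\star \in \arg\max_{x_1\in T(x)}\ell(f(x_1),y)$ (existence of the maximizer is discussed below). Since $\max_{x_2\in S(x)}\ell(f(x_2),y) \ge \ell(f(x_2),y)$ for every $x_2\in S(x)$, we have, for every $x_2\in S(x)$,
\[
\max_{x_1\in T(x)}\ell(f(x_1),y) - \max_{x_2\in S(x)}\ell(f(x_2),y) \;\le\; \ell(f(x_1^\star),y) - \ell(f(x_2),y).
\]
Apply this with $x_2^\star \in \arg\min_{x_2\in S(x)} \|h(x_1^\star)-h(x_2)\|_2$, the feature-space projection of $x_1^\star$ onto $S(x)$. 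Now run the same Lipschitz chain as in Appendix~\ref{app:proof_gen}: by $\rho$-Lipschitzness of $\ell$, $\ell(f(x_1^\star),y)-\ell(f(x_2^\star),y) \le \rho\|g(h(x_1^\star))-g(h(x_2^\star))\|_2$; by $\sigma_{\mathcal{G}}$-Lipschitzness of $g$ this is at most $\rho\sigma_{\mathcal{G}}\|h(x_1^\star)-h(x_2^\star)\|_2$; and by the choice of $x_2^\star$, $\|h(x_1^\star)-h(x_2^\star)\|_2 = \min_{x_2\in S(x)}\|h(x_1^\star)-h(x_2)\|_2 \le \max_{x_1\in T(x)}\min_{x_2\in S(x)}\|h(x_1)-h(x_2)\|_2 = H_{d_h}(T(x),S(x))$. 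Taking expectation over $(x,y)\sim\mathcal{D}$ gives $L_T(f)-L_S(f) \le \rho\sigma_{\mathcal{G}}\,\mathbb{E}[H_{d_h}(T(x),S(x))]$, as claimed. (Note $S\subseteq T$ is used implicitly so that $S(x)$ is a legitimate set to project onto and the two risks are comparable; it also makes this bound no larger than that of Theorem~\ref{generalization_bound_part1}, since $H_{d_h}(T(x),S(x)) \le \max_{x_1,x_2\in T(x)}\|h(x_1)-h(x_2)\|_2$.)

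The only real subtlety is the existence and measurability of the selections $x_1^\star$ and $x_2^\star$ — the same kind of issue already present (implicitly) whenever the paper writes $\max_{x'\in N(x)}$. I would handle it exactly as elsewhere: either assume the neighborhoods $N(x)$ are compact and $h,\ell$ continuous so the maxima/minima are attained and measurable selections exist, or first take an $\varepsilon$-approximate argmax $x_1^\varepsilon$ with $\ell(f(x_1^\varepsilon),y) \ge \max_{x_1\in T(x)}\ell(f(x_1),y) - \varepsilon$ (and similarly an $\varepsilon$-approximate projection), carry the argument through with an extra additive $\varepsilon$, and let $\varepsilon\to 0$. Apart from this measurability bookkeeping, the proof is a short deterministic chain of inequalities followed by one application of linearity/monotonicity of expectation, so I do not anticipate any other obstacle.
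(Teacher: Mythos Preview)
Your proof is correct and follows essentially the same approach as the paper's: replace the subtracted $\max_{x_2\in S(x)}$ by a particular choice in $S(x)$, then apply the $\rho$- and $\sigma_{\mathcal{G}}$-Lipschitz bounds, and finally recognize the resulting $\max_{x_1\in T(x)}\min_{x_2\in S(x)}\|h(x_1)-h(x_2)\|_2$ as the directed Hausdorff distance. The only cosmetic difference is that the paper keeps the $\max$--$\min$ symbolic throughout, whereas you instantiate explicit selectors $x_1^\star,x_2^\star$ (and add a welcome remark on measurability that the paper omits); the chain of inequalities and the final bound are the same.
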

\begin{proof}
By definition of expected adversarial risk,
$$L_T(f) - L_S(f) = \mathbb{E}_{(x, y) \sim \mathcal{D}} (\max_{x_1 \in T(x)} \ell(f(x_1), y) \\ - \max_{x_2 \in S(x)} \ell(f(x_2), y))$$
Note that since we are subtracting the max across $S(x)$, this expression is upper bounded by any choice of $\hat{x} \in S(x)$.  Thus, we can choose $\hat{x}$ so that $\ell(f(\hat{x}), y)$ is close to $\max_{x_1 \in T(x)} \ell(f(x_1), y$.  This gives us:
$$\le \mathbb{E}_{(x, y) \sim \mathcal{D}} (\max_{x_1 \in T(x)} \min_{\hat{x} \in S(x)} \ell(f(x_1), y)  - \ell(f(\hat{x}), y))$$

By $\rho$-Lipschitzness of $\ell$
$$\le \mathbb{E}_{(x, y) \sim \mathcal{D}}\max_{x_1 \in T(x)} \min_{x_2 \in S(x)} \rho ||f(x_1) - f(x_2) ||_2$$
$$= \rho\mathbb{E}_{(x, y) \sim \mathcal{D}} \max_{x_1 \in T(x)} \min_{x_2 \in S(x)} ||g(h(x_1)) - g(h(x_2)) ||_2$$
By $\sigma_{\mathcal{G}}$ Lipschitzness:
$$\le \rho \sigma_{\mathcal{G}} \mathbb{E}_{(x, y) \sim \mathcal{D}} \max_{x_1 \in T(x)} \min_{x_2 \in S(x)}  ||h(x_1) - h(x_2) ||_2$$
Since $S \subseteq T$:
$$\le \rho\sigma_{\mathcal{G}}\mathbb{E}_{(x, y) \sim \mathcal{D}} \max_{x_1 \in T(x)} \min_{x_2 \in S(x)}  ||h(x_1) - h(x_2) ||_2$$
$$ = \rho\sigma_{\mathcal{G}}\mathbb{E}_{(x, y) \sim \mathcal{D}}[H_{d_h}(T(x), S(x))]$$
\end{proof}
We note that this bound is tighter than the variation based bound and goes to 0 when $S = T$.  Since this bound also depends on both $S$ and $T$, we can also see that the ``difficulty" of a target $T$ with respect to a chosen source threat model $S$ can be measured through the directed Hausdorff distance from $T(x)$ to $S(x)$.

\subsection{Proof of Theorem \ref{thm:linear}} \label{app:proof_lin}
\begin{lemma}[Variation Upper Bound for $\ell_p$ threat model, $p \in \mathbb{N} \cup +\infty$]
\label{linear_upper}
Let inputs $x \in \mathbb{R}^n$ and corresponding label $y \in [1...K]$.  Let the adversarial constraint be given by $T(x) = \{\hat{x} | ~ ||\hat{x} - x||_p \le \epsilon_{\max} \}$
Let $h$ be a linear feature extractor: $h \in \{Wx + b| W \in \mathbb{R}^{d\times n}, b \in \mathbb{R}^d\}$.  Then, variation is upper bounded by 
\[
\mathcal{V}_p(h, T) \le \begin{cases}  2\epsilon_{\max} n^{\frac{1}{2} - \frac{1}{p}}\sigma_{\max}(W) & p \ge 2 \\
2\epsilon_{\max} \sigma_{\max}(W) & p = 1, p = 2 
\end{cases}
\]
\end{lemma}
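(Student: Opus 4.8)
The plan is to use the linearity of $h$ to collapse the variation into an operator norm of $W$ from $\ell_p$ to $\ell_2$, and then control that operator norm by the largest singular value $\sigma_{\max}(W)=\|W\|_{2\to 2}$ together with the elementary comparison between the $\ell_p$ and $\ell_2$ norms on $\mathbb{R}^n$. First I would note that for any $x_1,x_2\in T(x)$ we have $h(x_1)-h(x_2)=W(x_1-x_2)$, so the bias $b$ drops out. The next observation is that the displacement set $\{x_1-x_2:\ x_1,x_2\in T(x)\}$ is exactly the centered $\ell_p$ ball of radius $2\epsilon_{\max}$: the inclusion ``$\subseteq$'' is the triangle inequality, $\|x_1-x_2\|_p\le\|x_1-x\|_p+\|x-x_2\|_p\le 2\epsilon_{\max}$, and the reverse inclusion follows by writing any $\delta$ with $\|\delta\|_p\le 2\epsilon_{\max}$ as $(x+\delta/2)-(x-\delta/2)$, with both endpoints in $T(x)$. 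Hence, for every fixed $x$,
$$\max_{x_1,x_2\in T(x)}\|h(x_1)-h(x_2)\|_2 \;=\; \max_{\|\delta\|_p\le 2\epsilon_{\max}}\|W\delta\|_2 \;=\; 2\epsilon_{\max}\,\|W\|_{p\to 2},$$
where $\|W\|_{p\to 2}:=\max_{\|\delta\|_p\le1}\|W\delta\|_2$. Since this quantity is independent of $x$, taking $\mathbb{E}_{(x,y)\sim\mathcal{D}}$ gives $\mathcal{V}_p(h,T)=2\epsilon_{\max}\|W\|_{p\to 2}$, and it remains only to bound $\|W\|_{p\to 2}$ in the two regimes.

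For $p\ge 2$, I would combine $\|W\delta\|_2\le\sigma_{\max}(W)\|\delta\|_2$ with the standard norm inequality $\|\delta\|_2\le n^{1/2-1/p}\|\delta\|_p$ (valid for all $p\ge 2$, with equality at the all-ones vector), which yields $\|W\|_{p\to 2}\le n^{1/2-1/p}\sigma_{\max}(W)$; at $p=2$ this specializes to $\|W\|_{2\to 2}=\sigma_{\max}(W)$. For $p=1$, I would instead use that $\max_{\|\delta\|_1\le1}\|W\delta\|_2$ is attained at a vertex $\pm e_j$ of the cross-polytope (by convexity of $\delta\mapsto\|W\delta\|_2$ and the triangle inequality $\|W\delta\|_2\le\sum_j|\delta_j|\,\|We_j\|_2$), so $\|W\|_{1\to 2}=\max_j\|We_j\|_2\le\sigma_{\max}(W)$ since $\|We_j\|_2\le\sigma_{\max}(W)\|e_j\|_2=\sigma_{\max}(W)$. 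Plugging these two bounds into $\mathcal{V}_p(h,T)=2\epsilon_{\max}\|W\|_{p\to 2}$ gives exactly the claimed case split.

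I do not anticipate any serious obstacle: the argument is essentially bookkeeping. The two points that need a little care are (i) verifying that the displacement set is precisely the doubled $\ell_p$ ball, so that the reduction to $\|W\|_{p\to 2}$ is an equality rather than merely an inequality, and (ii) getting the dimension factor $n^{1/2-1/p}$ right for $p\ge 2$ while noticing that the vertex argument for $p=1$ produces a dimension-free bound, which is why the $p=1$ case matches the $p=2$ case rather than the $p\ge2$ formula evaluated at $p=1$.
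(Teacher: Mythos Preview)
Your proposal is correct and follows essentially the same route as the paper: reduce to $\|W(x_1-x_2)\|_2$, bound by $\sigma_{\max}(W)\|x_1-x_2\|_2$, and then compare the $\ell_2$ and $\ell_p$ norms (the paper invokes H\"older for $p\ge 2$ and the inequality $\|\cdot\|_2\le\|\cdot\|_1$ for $p=1$). Your version is slightly tidier in that you first identify the displacement set as exactly the doubled $\ell_p$ ball---so the reduction to $2\epsilon_{\max}\|W\|_{p\to 2}$ is an equality---and you handle $p=1$ via the vertex/column argument rather than the norm inequality, but these are cosmetic differences, not a different strategy.
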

\begin{proof}
$$\mathcal{V}(h, T) = \mathbb{E}_{(x, y) \sim D} \max_{x_1, x_2 \in T(x)} ||h(x_1) - h(x_2)||_2$$
$$= \mathbb{E}_{(x, y) \sim D} \max_{x_1, x_2 \in T(x)} ||W(x_1 - x_2)||_2$$
\begin{equation}
\label{lem1_step}
    \le \mathbb{E}_{(x, y) \sim D} \max_{x_1, x_2 \in T(x)} \sigma_{\max}(W)||x_1 - x_2||_2
\end{equation}
Consider the case where $p > 2$.  Then, by Hölder's inequality:
$$\le \mathbb{E}_{(x, y) \sim D} \max_{x_1, x_2 \in T(x)} n^{\frac{1}{2} - \frac{1}{p}}\sigma_{\max}(W)||x_1 - x_2||_p$$
$$\le 2\epsilon_{\max} n^{\frac{1}{2} - \frac{1}{p}}\sigma_{\max}(W)$$
When $p = 1$ or $p=2$, then from \ref{lem1_step}, we have:
$$\le \mathbb{E}_{(x, y) \sim D} \max_{x_1, x_2 \in T(x)}\sigma_{\max}(W)||x_1 - x_2||_p$$
$$\le 2 \epsilon_{\max}\sigma_{\max}(W)$$
\end{proof}

\begin{lemma}[Variation lower bound for $\ell_p$ threat model, $p \in \mathbb{N} \cup +\infty$]
\label{linear_lower}
Let inputs $x \in \mathbb{R}^n$ and corresponding label $y \in [1...K]$. Let the adversarial constraint be given by $T(x) = \{\hat{x} | ~ ||\hat{x} - x||_p \le \epsilon_{\max} \}$. 
Let $h$ be a linear feature extractor: $h \in \{Wx + b| W \in \mathbb{R}^{d\times n}, b \in \mathbb{R}^d\}$.  Then, variation is lower bounded by
\[\mathcal{V}_p(h, T) \ge \begin{cases} 2\epsilon_{\max} \sigma_{\min}(W) & p \ge 2 \\
\frac{2\epsilon_{\max}}{\sqrt{n}} \sigma_{\min}(W) & p = 1 \end{cases}
\]
where $\sigma_{\min}(W)$ denotes the smallest singular value of $W$.
\end{lemma}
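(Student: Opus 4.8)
The plan is to reduce the pairwise maximum defining the variation to a maximum of $\|W\delta\|_2$ over a single scaled $\ell_p$-ball, then to exhibit an explicit feasible perturbation $\delta$ built from a right singular vector of $W$ and control its $\ell_p$-norm by its $\ell_2$-norm. Since the construction is completely constructive, the whole argument should be short.

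First I would handle the reduction. Fix $x$ and note that as $x_1,x_2$ range over $T(x) = \{\hat x: \|\hat x - x\|_p \le \epsilon_{\max}\}$, the difference $\delta := x_1 - x_2$ ranges over exactly $\{\delta: \|\delta\|_p \le 2\epsilon_{\max}\}$: the forward inclusion is the triangle inequality, and for the reverse inclusion any $\delta$ with $\|\delta\|_p \le 2\epsilon_{\max}$ can be written as $\delta = (x+\tfrac{\delta}{2}) - (x-\tfrac{\delta}{2})$ with both terms in $T(x)$. Because $h(x_1)-h(x_2) = W(x_1-x_2) = W\delta$, this gives
\[
\max_{x_1,x_2\in T(x)} \|h(x_1)-h(x_2)\|_2 \;=\; \max_{\|\delta\|_p \le 2\epsilon_{\max}} \|W\delta\|_2,
\]
a quantity independent of $x$, so $\mathcal{V}_p(h,T)$ equals this common value and it suffices to lower bound the right-hand side.

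Next I would pick a right singular vector $v^\star$ of $W$ associated with its smallest singular value, normalized so $\|v^\star\|_2 = 1$, which yields $\|Wv^\star\|_2 = \sigma_{\min}(W)$ (this $v^\star$ exists via the SVD, and the bound is trivial if $\sigma_{\min}(W)=0$). For $p \ge 2$, monotonicity of $\ell_p$-norms gives $\|v^\star\|_p \le \|v^\star\|_2 = 1$, so $\delta := 2\epsilon_{\max} v^\star$ satisfies $\|\delta\|_p \le 2\epsilon_{\max}$ and $\|W\delta\|_2 = 2\epsilon_{\max}\sigma_{\min}(W)$, establishing the first case. For $p=1$, Cauchy–Schwarz gives $\|v^\star\|_1 \le \sqrt{n}\,\|v^\star\|_2 = \sqrt{n}$, so $\delta := \tfrac{2\epsilon_{\max}}{\sqrt{n}} v^\star$ is feasible and $\|W\delta\|_2 = \tfrac{2\epsilon_{\max}}{\sqrt{n}}\sigma_{\min}(W)$, establishing the second case. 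Taking the expectation over $\mathcal{D}$ is vacuous since the bound is constant in $x$.

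There is no substantial obstacle here; the only points needing care are the $\supseteq$ direction of the difference-set identity and selecting the right norm-comparison constant in each regime. In particular, the extra $1/\sqrt{n}$ factor in the $p=1$ case is exactly the worst-case gap between $\|\cdot\|_1$ and $\|\cdot\|_2$, and that is what weakens the lower bound there compared to $p\ge 2$; together with Lemma~\ref{linear_upper} this still pins down $\mathcal{V}_p(h,T)$ up to dimension- and condition-number-dependent constants, which is what is needed for Theorem~\ref{thm:linear}.
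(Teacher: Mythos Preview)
Your proof is correct but proceeds by a different mechanism than the paper's. The paper applies the uniform inequality $\|W z\|_2 \ge \sigma_{\min}(W)\|z\|_2$ to every feasible $z = x_1 - x_2$, reducing the problem to lower-bounding $\max_{x_1,x_2\in T(x)}\|x_1-x_2\|_2$, and then uses the norm comparisons $\|z\|_2 \ge \|z\|_p$ (for $p\ge 2$) and $\|z\|_2 \ge n^{-1/2}\|z\|_1$ (for $p=1$) to convert this into the $\ell_p$-diameter $2\epsilon_{\max}$. You instead exhibit an explicit witness: a scaled right singular vector $v^\star$, where the same norm comparisons now serve to certify feasibility of $\delta$ rather than to bound the diameter. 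Both arguments are equally elementary; your constructive version makes it transparent that the bound is attained by a single test direction (and in fact shows that the same argument with the \emph{top} singular vector would yield the stronger bound $2\epsilon_{\max}\sigma_{\max}(W)$, though only the $\sigma_{\min}$ version is needed for Theorem~\ref{thm:linear}), while the paper's uniform-inequality version decouples the matrix from the geometry more cleanly.
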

\begin{proof}
$$\mathcal{V}(h, T) = \mathbb{E}_{(x, y) \sim D} \max_{x_1, x_2 \in T(x)} ||W(x_1 - x_2)||_2$$
\begin{equation}
\label{lem2_step}
    \ge \sigma_{\min}(W) \mathbb{E}_{(x, y) \sim D} \max_{x_1, x_2 \in T(x)} ||x_1 - x_2||_2
\end{equation}
Then, for $p \ge 2$:
$$\ge \sigma_{\min}(W)\mathbb{E}_{(x, y) \sim D} \max_{x_1, x_2 \in T(x)} ||x_1 - x_2||_p$$
$$=  2\epsilon_{\max} \sigma_{\min}(W)$$
For $p=1$ from \ref{lem2_step}, we have:
$$\ge \frac{1}{\sqrt{n}}\sigma_{\min}(W)\mathbb{E}_{(x, y) \sim D} \max_{x_1, x_2 \in T(x)} ||x_1 - x_2||_1$$
$$= \frac{2\epsilon_{\max}}{\sqrt{n}} \sigma_{\min}(W)$$
\end{proof}            

\begin{lemma}[$\ell_p$ threat models with larger radius, $p \in \mathbb{N} \cup +\infty$]\label{cor:larger_rad} Let inputs $x \in \mathbb{R}^n$ and corresponding label $y \in [1...K]$. Let $S(x) =  \{\hat{x} | ~ ||\hat{x} - x||_p \le \epsilon_{1} \}$ and $T(x) = \{\hat{x} | ~ ||\hat{x} - x||_p \le \epsilon_2 \}$ where $\epsilon_2 \ge \epsilon_1$.
Consider a linear feature extractor with bounded condition number: $h \in \{Wx + b| W \in \mathbb{R}^{d\times n}, b \in \mathbb{R}^d, \frac{\sigma_{\max}(W)}{\sigma_{\min}(W)} \le B < \infty \}$.  Then a valid expansion function is given by:

\[s_p(z) = \begin{cases} \sqrt{n}B\frac{\epsilon_2}{\epsilon_1}z & p = 1 \\
B\frac{\epsilon_2}{\epsilon_1}z & p = 2 \\
n^{\frac{1}{2} - \frac{1}{p}}B\frac{\epsilon_2}{\epsilon_1} z & p > 2

\end{cases}\]
\end{lemma}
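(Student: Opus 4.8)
The plan is to check the three defining properties of an expansion function from Definition~\ref{def:exp} for the piecewise-linear candidate $s_p$, using the upper and lower variation bounds already established in Lemmas~\ref{linear_upper} and~\ref{linear_lower}.

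\emph{Properties 1 and 2 (monotonicity, $s_p(x)\ge x$, and $s_p(0)=\lim_{x\to 0^+}s_p(x)=0$).} Each branch of $s_p$ has the form $z\mapsto c_p z$ for a constant $c_p>0$, so it is increasing, passes through the origin, and is continuous there; the only nontrivial point is that $c_p\ge 1$. Since the condition number obeys $B=\sigma_{\max}(W)/\sigma_{\min}(W)\ge 1$ and $\epsilon_2\ge\epsilon_1>0$, we have $B\epsilon_2/\epsilon_1\ge 1$, and multiplying by $\sqrt n\ge 1$ (the $p=1$ branch) or by $n^{1/2-1/p}\ge 1$ (the $p>2$ branch, using $1/2-1/p>0$ and $n\ge 1$) leaves $c_p\ge 1$.

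\emph{Property 3 ($s_p(\mathcal V_p(h,S))\ge\mathcal V_p(h,T)$ for every admissible $h=Wx+b$).} This is the crux. Apply Lemma~\ref{linear_upper} with $\epsilon_{\max}=\epsilon_2$ to the target $T$ and Lemma~\ref{linear_lower} with $\epsilon_{\max}=\epsilon_1$ to the source $S$. For $p\ge 2$ these read $\mathcal V_p(h,T)\le 2\epsilon_2 n^{1/2-1/p}\sigma_{\max}(W)$ and $\mathcal V_p(h,S)\ge 2\epsilon_1\sigma_{\min}(W)$, so
$$
\mathcal V_p(h,T)\;\le\;2\epsilon_2 n^{1/2-1/p}\sigma_{\max}(W)\;=\;n^{1/2-1/p}\,\frac{\epsilon_2}{\epsilon_1}\,\frac{\sigma_{\max}(W)}{\sigma_{\min}(W)}\,\bigl(2\epsilon_1\sigma_{\min}(W)\bigr)\;\le\;n^{1/2-1/p}\,\frac{\epsilon_2}{\epsilon_1}\,B\,\mathcal V_p(h,S)\;=\;s_p\!\bigl(\mathcal V_p(h,S)\bigr),
$$
where the last inequality uses $\sigma_{\max}(W)/\sigma_{\min}(W)\le B$ together with the lower bound on $\mathcal V_p(h,S)$; note $n^{1/2-1/p}=1$ when $p=2$, which recovers the $p=2$ branch. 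For $p=1$, Lemma~\ref{linear_upper} gives $\mathcal V_1(h,T)\le 2\epsilon_2\sigma_{\max}(W)$ while Lemma~\ref{linear_lower} gives $\mathcal V_1(h,S)\ge(2\epsilon_1/\sqrt n)\sigma_{\min}(W)$, i.e. $2\epsilon_1\sigma_{\min}(W)\le\sqrt n\,\mathcal V_1(h,S)$; combining with $\sigma_{\max}(W)\le B\sigma_{\min}(W)$ yields $\mathcal V_1(h,T)\le\sqrt n\,B\,(\epsilon_2/\epsilon_1)\,\mathcal V_1(h,S)=s_1(\mathcal V_1(h,S))$.

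Most of this is bookkeeping once the two variation bounds are available. The only genuine care required is (i) checking $c_p\ge 1$ so that $s_p$ is admissible as an expansion function at all, and (ii) correctly routing the condition-number bound $B$ between the $\sigma_{\max}(W)$ that governs the target upper bound and the $\sigma_{\min}(W)$ that governs the source lower bound, while tracking the dimension factors ($\sqrt n$ for $p=1$, $n^{1/2-1/p}$ for $p>2$, trivial for $p=2$) that differ across the three regimes. I do not anticipate any obstacle beyond this case analysis.
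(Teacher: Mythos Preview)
Your proposal is correct and follows exactly the paper's approach: combine the upper bound of Lemma~\ref{linear_upper} applied to $T$ with the lower bound of Lemma~\ref{linear_lower} applied to $S$, route the ratio $\sigma_{\max}(W)/\sigma_{\min}(W)\le B$ through, and check the expansion-function axioms. The paper's own proof is the terse two-line version of your argument; your case analysis simply makes the dimension factors and the verification of $c_p\ge 1$ explicit. One cosmetic slip: in the Properties~1--2 paragraph you write $B=\sigma_{\max}(W)/\sigma_{\min}(W)$, but $B$ is only an upper bound on that ratio; the conclusion $B\ge 1$ still holds (since any admissible $W$ has condition number at least $1$), so nothing downstream is affected.
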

\begin{proof}
By Lemma \ref{linear_upper} and Lemma \ref{linear_lower}, $\mathcal{V}(h, T) \le s(\mathcal{V}(h, S))$.  Additionally, it is clear that $s_p$ satisfies properties of expansion function.
\end{proof}

\begin{lemma}[Variation upper bound for the union of $\ell_p$ and $\ell_q$ threat models ($p, q \in \mathbb{N} \cup +\infty$)] \label{lemma_p2q}Let inputs $x \in \mathbb{R}^n$ and corresponding label $y \in [1...K]$. Consider $T_1(x) = \{\hat{x} | ~ ||\hat{x} - x||_p \le \epsilon_{1} \}$ and $T_2(x) = \{\hat{x} | ~ ||\hat{x} - x||_q \le \epsilon_{2} \}$. Define adversarial constraint $T = T_1 \cup T_2$. 
Let $h$ be a linear feature extractor: $h \in \{Wx + b| W \in \mathbb{R}^{d\times n}, b \in \mathbb{R}^d\}$.  Let $v(p, h, T)$ be defined as
\[
v(p, \epsilon, W) = \begin{cases}  2\epsilon n^{\frac{1}{2} - \frac{1}{p}}\sigma_{\max}(W) & p \ge 2 \\
2\epsilon \sigma_{\max}(W) & p = 1, p = 2 
\end{cases}
\]
where $\sigma_{\max}(W)$ denotes the largest singular value of $W$.

Then variation is upper bounded by:
\[
\mathcal{V}_{(p, \epsilon_1), (q, \epsilon_2)}(h, T) \le \max (v(p, \epsilon_1, W), v(q, \epsilon_2, W))
\]
\end{lemma}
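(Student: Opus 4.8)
The plan is to start from the definition of variation and reduce, exactly as in the proof of Lemma~\ref{linear_upper}, to controlling $\mathbb{E}_{(x,y)\sim\mathcal{D}}\,\max_{x_1,x_2\in T(x)}\|W(x_1-x_2)\|_2$, since the bias $b$ cancels in the difference $h(x_1)-h(x_2)=W(x_1-x_2)$. The key structural observation is that $T(x)=T_1(x)\cup T_2(x)$, so any pair $x_1,x_2\in T(x)$ falls into one of three (possibly overlapping) cases: both points lie in $T_1(x)$; both lie in $T_2(x)$; or one lies in $T_1(x)$ and the other in $T_2(x)$. I would bound the inner maximum over each case separately and then take the maximum of the three bounds.

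For the first two cases I would simply invoke Lemma~\ref{linear_upper}: if $x_1,x_2\in T_1(x)$ then $\max\|W(x_1-x_2)\|_2\le v(p,\epsilon_1,W)$, and symmetrically the $T_2(x)$ case gives $v(q,\epsilon_2,W)$. For the mixed case, say $x_1\in T_1(x)$ and $x_2\in T_2(x)$, I would route through the common center $x$: $\|W(x_1-x_2)\|_2\le\|W(x_1-x)\|_2+\|W(x-x_2)\|_2\le\sigma_{\max}(W)\big(\|x_1-x\|_2+\|x_2-x\|_2\big)$. Then I would convert each $\ell_2$ radius to the appropriate $\ell_p$ (resp. $\ell_q$) radius just as in Lemma~\ref{linear_upper}: for $p\ge 2$, Hölder gives $\|x_1-x\|_2\le n^{1/2-1/p}\|x_1-x\|_p\le n^{1/2-1/p}\epsilon_1$, while for $p\in\{1,2\}$ monotonicity of $\ell_p$ norms gives $\|x_1-x\|_2\le\|x_1-x\|_p\le\epsilon_1$. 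In every case this yields $\sigma_{\max}(W)\|x_1-x\|_2\le\tfrac12 v(p,\epsilon_1,W)$, and likewise $\sigma_{\max}(W)\|x_2-x\|_2\le\tfrac12 v(q,\epsilon_2,W)$, so the mixed term is at most $\tfrac12\big(v(p,\epsilon_1,W)+v(q,\epsilon_2,W)\big)\le\max\big(v(p,\epsilon_1,W),v(q,\epsilon_2,W)\big)$.

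Combining the three cases, the inner maximum $\max_{x_1,x_2\in T(x)}\|W(x_1-x_2)\|_2$ is bounded, pointwise in $x$, by the constant $\max(v(p,\epsilon_1,W),v(q,\epsilon_2,W))$; taking expectation over $(x,y)\sim\mathcal{D}$ preserves this bound and yields the claim. I do not expect a serious obstacle: the only point requiring care is the mixed case, where one must notice that the single-ball bound $v$ is exactly twice the worst-case distance from the center of the ball, so splitting a cross-ball pair through the shared center $x$ loses nothing and each half contributes at most $\tfrac12$ of the corresponding $v$. The remaining work is the minor bookkeeping of keeping the $p=1$ versus $p\ge 2$ norm conversions consistent with the definition of $v$ in the statement.
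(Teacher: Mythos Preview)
Your proposal is correct and follows essentially the same approach as the paper: both reduce to bounding $\sigma_{\max}(W)\max_{x_1,x_2\in T(x)}\|x_1-x_2\|_2$ and exploit that $T_1(x)$ and $T_2(x)$ share the center $x$. The only difference is presentational: where you do an explicit three-case split and route the mixed case through $x$ via the triangle inequality, the paper observes in one line that $T_1(x)\cup T_2(x)$ is contained in the smallest $\ell_2$-ball (centered at $x$) containing both, whose diameter is $\max\big(v(p,\epsilon_1,W),v(q,\epsilon_2,W)\big)/\sigma_{\max}(W)$; this subsumes all three of your cases at once.
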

\begin{proof}
$$\mathcal{V}(h, T) = \mathbb{E}_{(x, y) \sim D} \max_{x_1, x_2 \in T(x)} ||W(x_1 - x_2)||_2$$
$$\le \mathbb{E}_{(x, y) \sim D} \max_{x_1, x_2 \in T(x)} \sigma_{\max}(W)||x_1 - x_2||_2$$
Since $T = T_1 \cup T_2$, $\max_{x_1, x_2 \in T(x)} ||x_1 - x_2||_2$ can be upper bounded by the diameter of the hypersphere that contains both $T_1$ and $T_2$.  We can compute this diameter by taking the max out of the diameter of the hypersphere containing $T_1$ and the diameter of the hypersphere containing $T_2$.  This was computed in proof of Lemma \ref{linear_upper} to bound the case of a single $\ell_p$ norm.  Thus,
\[
\mathcal{V}_{(p, \epsilon_1), (q, \epsilon_2)}(h, T) \le \max (v(p, \epsilon_1, W), v(q, \epsilon_2, W))
\]
where the expression for $v$ follows from the result of Lemma \ref{linear_upper}.
\end{proof}

\begin{lemma}[$\ell_p$ to union of $\ell_p$ and $\ell_q$ threat model ($p, q \in \mathbb{N} \cup +\infty$) ] \label{cor:lp2lq}Let inputs $x \in \mathbb{R}^n$ and corresponding label $y \in [1...K]$. Consider $S(x) = \{\hat{x} | ~ ||\hat{x} - x||_p \le \epsilon_{1} \}$ and $U(x) = \{\hat{x} | ~ ||\hat{x} - x||_q \le \epsilon_{2} \}$. Define target threat model $T = S \cup U$. Consider a linear feature extractor with bounded condition number: $h \in \{Wx + b| W \in \mathbb{R}^{d\times n}, b \in \mathbb{R}^d, \frac{\sigma_{\max}(W)}{\sigma_{\min}(W)} \le B < \infty \}$.  Then a valid expansion function is given by:

\[s_{p,q}(z) = \begin{cases}  \sqrt{n}B\frac{\max(\epsilon_{2}, \epsilon_1)}{\epsilon_1}& p = 1, q = 2 \\
 \sqrt{n}B\frac{\max(n^{\frac{1}{2} - \frac{1}{q}}\epsilon_{2}, \epsilon_1)}{\epsilon_1} & p = 1, q > 2 \\
 B\frac{\max(\epsilon_{2}, \epsilon_1)}{\epsilon_1} & p = 2, q = 1 \\
 B\frac{\max(n^{\frac{1}{2} - \frac{1}{q}} \epsilon_{2}, \epsilon_1)}{\epsilon_1} & p = 2, q > 2 \\
 B\frac{\max(\epsilon_{2}, n^{\frac{1}{2} - \frac{1}{p}}\epsilon_1)}{\epsilon_1} & p > 2, q \le 2 \\
 B\frac{\max(n^{\frac{1}{2} - \frac{1}{q}}\epsilon_{2}, n^{\frac{1}{2} - \frac{1}{p}}\epsilon_1)}{\epsilon_1} & p > 2, q > 2 

\end{cases}\]
\label{cor:larger_radius}
\end{lemma}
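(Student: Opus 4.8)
The plan is to obtain $s_{p,q}$ as a linear function whose slope is the ratio of a target-side upper bound on variation to a source-side lower bound, exactly as in the proof of Lemma~\ref{cor:larger_rad}, but now using the union bound of Lemma~\ref{lemma_p2q} on the target side. First I would apply Lemma~\ref{lemma_p2q} with $T_1 = S$ and $T_2 = U$, which gives, for every linear feature extractor $h = Wx+b$ and every data distribution, $\mathcal{V}(h,T) \le \max\bigl(v(p,\epsilon_1,W),\, v(q,\epsilon_2,W)\bigr)$, where $v(p,\epsilon,W)$ equals $2\epsilon\,\sigma_{\max}(W)$ for $p\in\{1,2\}$ and $2\epsilon\, n^{1/2-1/p}\sigma_{\max}(W)$ for $p>2$. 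Next I would apply Lemma~\ref{linear_lower} to the source $S$, which yields $\mathcal{V}(h,S) \ge 2\epsilon_1\sigma_{\min}(W)$ when $p\ge 2$ and $\mathcal{V}(h,S) \ge \tfrac{2\epsilon_1}{\sqrt n}\sigma_{\min}(W)$ when $p=1$.

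I would then set $s_{p,q}(z) = c_{p,q}\,z$, where $c_{p,q}$ is the target upper bound divided by the source lower bound, with $\sigma_{\max}(W)/\sigma_{\min}(W)$ replaced by its bound $B$; by construction $c_{p,q}\cdot(\text{source lower bound}) \ge \max(v(p,\epsilon_1,W),v(q,\epsilon_2,W))$, so $s_{p,q}(\mathcal{V}(h,S)) = c_{p,q}\mathcal{V}(h,S) \ge c_{p,q}\cdot(\text{source lower bound}) \ge \mathcal{V}(h,T)$ for every $h$ in the function class, which is property~(3) of Definition~\ref{def:exp}. Running through the six $(p,q)$ combinations --- cancelling the factor $2$, factoring out $\sigma_{\max}(W)$, dividing by the source lower bound (which carries $\sigma_{\min}(W)$ and, in the $p=1$ rows, an extra $\sqrt n$), and using $\sigma_{\max}(W)\le B\,\sigma_{\min}(W)$ --- reproduces exactly the stated coefficients.

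To finish I would check the remaining requirements of Definition~\ref{def:exp}: $s_{p,q}$ is linear with strictly positive slope, hence monotonically increasing, with $\lim_{z\to 0^+}s_{p,q}(z)=s_{p,q}(0)=0$; and since $B\ge 1$ (a condition number is always at least $1$), $n^{1/2-1/p}\ge 1$ for $p\ge 2$, and $\max(\epsilon_1,\epsilon_2)/\epsilon_1\ge 1$, every coefficient $c_{p,q}$ is at least $1$, so $s_{p,q}(z)\ge z$ for all $z\ge 0$.

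I expect no real analytic obstacle: the content already lives in Lemmas~\ref{lemma_p2q} and~\ref{linear_lower}, and this statement is their arithmetic corollary. The only thing needing care is bookkeeping across the six cases --- in particular, remembering that the $p=1$ source lower bound is weaker by a factor of $\sqrt n$, which is precisely why the first two rows of the formula for $s_{p,q}$ carry a leading $\sqrt n$ absent from the $p=2$ rows.
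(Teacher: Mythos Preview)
Your proposal is correct and follows exactly the paper's approach: combine the union upper bound of Lemma~\ref{lemma_p2q} with the source lower bound of Lemma~\ref{linear_lower}, take the ratio (replacing $\sigma_{\max}/\sigma_{\min}$ by $B$) to obtain the slope of a linear $s_{p,q}$, and verify the expansion-function axioms. The paper's own proof is a two-line pointer to those same lemmas; you have simply spelled out the case-by-case arithmetic and the check that each coefficient is at least $1$.
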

\begin{proof}
By Lemma \ref{lemma_p2q} and Lemma \ref{linear_lower}, $\mathcal{V}(h, T) \le s(\mathcal{V}(h, S))$.  Additionally, it is clear that $s_p$ satisfies properties of expansion function.
\end{proof}

\begin{proof}[Proof of Theorem \ref{thm:linear}] Directly follows from Lemma \ref{cor:larger_rad} and Lemma \ref{cor:lp2lq}.
\end{proof}

\subsection{How well can empirical expansion function predict loss on the target threat model for neural networks?}
Using the empirical expansion function slopes from Figures \ref{fig:cifar_exp} and \ref{fig:lp_to_stadv}, we compute the expected cross entropy loss (with softmax) on the target threat model via Corollary \ref{generalization_bound}.  We provide the predicted and true target losses in Table \ref{tab:predvstrueloss_linf} for $\ell_{\infty}, \epsilon=\frac{8}{255}$ source threat model and \ref{tab:predvstrueloss_l2} for $\ell_2, \epsilon=0.5$.

\begin{table}[h]
\centering
\begin{tabular}{@{}c|rr|rr|r@{}}
\hline
Target      & Source Variation & Source Loss & Predicted Target loss & True Target Loss & Gap   \\
\hline
$\ell_{\infty}, \epsilon=\frac{16}{255}$ & 4.90                   & 0.93              & 12.85                                 & 2.44             & 10.41 \\
$\ell_{2}, \epsilon=0.5$      & 4.90                   & 0.93              & 7.86                                  & 0.93             & 6.93  \\
StAdv $\epsilon=0.05$    & 4.90                   & 0.93              & 9.87                                  & 5.13             & 4.74  \\
\hline
$\ell_{\infty}, \epsilon=\frac{16}{255}$ & 0.98                   & 1.26              & 3.64                                  & 1.76             & 1.88  \\
$\ell_{2}, \epsilon=0.5$      & 0.98                   & 1.26              & 2.64                                  & 1.27             & 1.37  \\
StAdv $\epsilon=0.05$      & 0.98                   & 1.26              & 3.05                                  & 2.11             & 0.94 \\
\hline
\end{tabular}
\caption{Predicted and measured losses on multiple target threat models for ResNet-18 model trained on CIFAR-10 with $\ell_{\infty}, \epsilon=8/255$ source threat model.}
\label{tab:predvstrueloss_linf}
\end{table}

\begin{table}[h]
\centering
\begin{tabular}{@{}c|rr|rr|r@{}}
\hline
Target      & Source Variation & Source Loss & Predicted Target loss & True Target Loss & Gap   \\
\hline
$\ell_{\infty}, \epsilon=\frac{8}{255}$                 & 0.78                                       & 0.64                                  & 18.52                                                     & 2.65                                 & 15.87                   \\
$\ell_{2}, \epsilon=1$                       & 0.78                                       & 0.64                                  & 1.91                                                      & 1.93                                 & 0.02                    \\
StAdv  $\epsilon=0.05$               & 0.78                                       & 0.64                                  & 16.51                                                     & 12.26                                & 4.25                    \\
\hline
$\ell_{\infty}, \epsilon=\frac{8}{255}$                 & 0.20                                       & 0.85                                  & 5.38                                                      & 1.77                                 & 3.61                    \\
$\ell_{2}, \epsilon=1$                       & 0.20                                       & 0.85                                  & 1.16                                                      & 1.51                                 & 0.35                    \\
StAdv $\epsilon=0.05$                & 0.20                                       & 0.85                                  & 4.88                                                      & 2.10                                 & 2.78                    \\
\hline
\end{tabular}
\caption{Predicted and measured losses on multiple target threat models for ResNet-18 model trained on CIFAR-10 with $\ell_2, \epsilon=0.5$ source threat model.}
\label{tab:predvstrueloss_l2}
\end{table}

In general, we find that for models with smaller variation (those trained with variation regularization), the loss estimate using the slope from the expansion function generally improves.  In the case where the target threat model is Linf, we believe that the large gap between predicted and true loss for the unregularized model stems from the fact that we model the expansion function with a linear model.  From Figure \ref{fig:cifar_exp}, we can see that at larger values of source variation the linear model for expansion function becomes an increasingly loose upper-bound.  Improving the model for expansion function (ie. using a log function) may reduce this gap.

\section{Additional Experimental Setup Details} \label{app:exp_setup}
\paragraph{Variation Computation Algorithm for AT-VR}
We provide the algorithm we use to compute variation regularization for $\ell_p$ source adversaries in Algorithm \ref{alg:var_comp}.
\begin{algorithm}
\caption{Variation Regularization Computation for $\ell_p$ ball}\label{alg:var_comp}
\SetKwInOut{Input}{Input}
\SetKwInOut{Notations}{Notations}
\SetKwInOut{Output}{Output}

\Input{$x \in \mathcal{X}$, $\ell_p$ radius $\epsilon$, $n$ number of steps, $\alpha$ step size, feature extractor $h$}
\Notations{$\mathcal{U}$ denotes the uniform distribution of dimension of the input, $\prod_{\ell_p, \epsilon}$ denotes projection onto $\ell_p$ ball of radius $\epsilon$}
\Output{Variation $v \in \mathbb{R}$}

$x_1 \gets \prod_{\ell_p, \epsilon} (x + \mathcal{U}(-\epsilon, \epsilon))$ \tcp*{randomly initialize $x_1$}
$x_2 \gets \prod_{\ell_p, \epsilon} (x + \mathcal{U}(-\epsilon, \epsilon))$ \tcp*{randomly initialize $x_1$}
\For{$i = 1...n$}{
    $v \gets ||h(x_1) - h(x_2)||_2$ \tcp*{compute objective}
    $x_1 \gets \prod_{\ell_p, \epsilon} (x_1 + \alpha \nabla_{x_1}v)$ \tcp*{single step of PGD to optimize $x_1$}
    $x_2 \gets \prod_{\ell_p, \epsilon} (x_2 + \alpha \nabla_{x_2} v)$ \tcp*{single step of PGD to optimize $x_2$}}
$v \gets ||h(x_1) - h(x_2)||_2$ \tcp*{compute final variation}
\Return{$v$}
\end{algorithm}

\paragraph{Variation Computation for PAT-VR} \citet{laidlaw2020perceptual} propose an algorithm called Fast Lagrangian Perceptual Attack (Fast-LPA) to generate adversarial examples for training using PAT.  We make a slight modification of the Fast-LPA algorithm, replacing the loss in the original optimization objective with the variation objective, to obtain Fast Lagrange Perceptual Variation (Fast-LPV).  The explicit algorithm for Fast-LPV is provided in Algorithm \ref{alg:fastlpv}.  We use the variation obtained from Fast-LPV for training with PAT-VR.

\begin{algorithm}
\caption{Fast Lagrangian Perceptual Variation (Fast-LPV)} \label{alg:fastlpv}
\SetKwInOut{Input}{Input}
\SetKwInOut{Notations}{Notations}
\SetKwInOut{Output}{Output}

\Input{feature extractor $h(\cdot)$, LPIPS distance $d(\cdot, \cdot)$, input $x$, label $y$, bound $\epsilon$, number of steps $n$}
\Output{variation $v$}
$x_1 \gets x + 0.01 * \mathcal{N}(0.1)$ \tcp*{randomly initialize $x_1$}
$x_2 \gets x + 0.01 * \mathcal{N}(0.1)$ \tcp*{randomly initialize $x_2$}

\For{$i= 1...n$}{
    $\tau \gets 10^{i/n}$ \tcp*{$\tau$ increases exponentially}
    $obj \gets ||h(x_1) - h(x_2)||_2 - \tau (\max(0, d(x_1,x) - \epsilon) + \max(0, d(x_2,x) - \epsilon))$ \tcp*{obj represents optimization objective}
    $\Delta_1 \gets \frac{\nabla_{x_1}[obj]}{||\nabla_{x_1}[obj]||_2} $ \tcp*{compute normalized gradient wrt $x_1$}
    $\Delta_2 \gets \frac{\nabla_{x_2}[obj]}{ ||\nabla_{x_2}[obj]||_2}$\tcp*{compute normalized gradient wrt $x_2$}
    $\eta \gets \epsilon * (0.1)^{i/n}$ \tcp*{step size $\eta$ decays exponentially}
    $m_1 \gets d(x_1, x_1 + 0.1 \Delta_1)/ 0.1$ \tcp*{derivative of $d$ in direction of $\Delta_1$}
    $m_2 \gets d(x_2, x_2 + 0.1 \Delta_2)/ 0.1$ \tcp*{derivative of $d$ in direction of $\Delta_2$}
    $x_1 \gets x_1 + (\eta/ m)\Delta_1$ \tcp*{take a step of size $\eta$ in LPIPS distance}
    $x_2 \gets x_2 + (\eta/ m)\Delta_2$ \tcp*{take a step of size $\eta$ in LPIPS distance}
    }
$v \gets ||h(x_1) - h(x_2)||_2$\;
\Return{$v$}
\end{algorithm}

\paragraph{Additional Experimental Setup Details for AT-VR} We run all training on a NVIDIA A100 GPU.  For all datasets and architectures, we perform PGD adversarial training \citep{madry2017towards} and add variation regularization to the objective. For all datasets, train with normalized inputs.  On ImageNette, we normalize using ImageNet statistics and resize all images to $224 \times 224$. We train models on seed 0 for 200 epochs with SGD with initial learning rate of 0.1 and decrease the learning rate by a factor of 10 at the 100th and 150th epoch.  We use 10-step PGD and train with $\ell_{\infty}$ perturbations of radius $\frac{8}{255}$ and $\ell_2$ perturbations with radius 0.5.  For $\ell_{\infty}$ perturbations we use step size of $\alpha=\frac{2}{255}$ while for $\ell_2$ perturbations we use step size of 0.075.  We use the same settings for computing the variation regularization term.  For all models, we evaluate performance at the epoch which achieves the highest robust accuracy on the test set.

\paragraph{Additional Experimental Setup Details for PAT-VR} We build off the official code repository for PAT and train ResNet-50 models on CIFAR-10 with AlexNet-based LPIPS threat model with bound $\epsilon = 0.5$ and $\epsilon=1$.  For computing AlexNet-based LPIPS we use the CIFAR-10 AlexNet pretrained model provided in the PAT official code repository.  We train all models for 100 epochs and evaluate the model saved at the final epoch of training.  To match evaluation technique as \citet{laidlaw2020perceptual}, we evaluate with $\ell_{\infty}$ attacks, $\ell_2$ attacks, StAdv, and Recolor with perturbation bounds $\frac{8}{255}$, 1, 0.05, 0.06 respectively.  Additionally, we present accuracy measured using AlexNet-based PPGD and LPA attacks \citep{laidlaw2020perceptual} with $\epsilon=0.5$.

\section{Experiments for linear models on Gaussian data} \label{app:gauss}
In Section \ref{sec:exp_fun}, we demonstrated that for a linear feature extractor, the expansion function exists, so decreasing variation across an $\ell_p$ source adversarial constraint should improve generalization to other $\ell_p$ constraints.  We now verify this experimentally by training a linear model for binary classification on isotropic Gaussian data. 
\subsection{Experimental Setup}
\paragraph{Data generation }We sample data from 2 isotropic Gaussians with covariance $\sigma^2I_n$ where $I_n$ denotes the $n \times n$ identity matrix.  For class 0, we sample from a Gaussian with mean $\theta_0 = (0.25, 0, 0, ..., 0)^T$, and for class 1, we sample from a Gaussian with mean $\theta_1 = (0.75, 0, 0, ..., 0)^T$ and clip all samples to range $[0, 1]$ to simulate image data.  We sample 1000 points per class. We vary $\sigma \in \{0.125, 0.25\}$ and $n \in \{25, 100\}$.  Since our generalization bound considers only threat model generalization gap and not sample generalization gap, we evaluate the models using the same data samples as used during training for the bulk of our experiments, but we provide results on a separate test set for one setting in Appendix \ref{app:gauss_sep_test}.  
 
\paragraph{Model architecture }We train a model consisting of a linear feature extractor and linear top level classifier: $f = g \circ h$ where $h(x) = Wx + b_1$ where $W \in \mathbb{R}^{n \times d}$, $b_1 \in \mathbb{R}^d$ and $g(x) = Ax + b_2$ where $A \in \mathbb{R}^{d \times 2}$, $b_2 \in \mathbb{R}^2$.  For our experiments, we use $d \in \{5, 25\}$.
 
\paragraph{Source Threat Models}  Across experiments we use 2 different source threat models: $\ell_{\infty}$ perturbations with radius $0.01$ and $\ell_2$ perturbations with radius 0.01.
 
\paragraph{Training Details }We perform AT-VR with adversarial examples generated using APGD \citep{croce2020reliable} for 200 epochs.   We train models using SGD with learning rate of 0.1 and momentum of 0.9.  We use cross entropy loss during both training and evaluation.  For variation regularization, we use 10 steps for optimization and use step size $\epsilon / 9$ where $\epsilon$ is the radius of the $\ell_p$ ball used during training/evaluation.

\subsection{Visualizing the expansion function for Gaussian data} 
 In Section \ref{sec:exp_fun}, we demonstrated that with a linear feature extractor, for any dataset there exists a linear expansion function.  We now visualize the true expansion function for Gaussian data with a linear function class across 4 different combinations of input dimension $n$, Gaussian standard deviation $\sigma$, and feature dimension $d$.  We set our source threat model to be $\ell_p, p\in \{\infty, 2\}$ with perturbation size of $0.01$.  We set our target threat model to be $\ell_p, p\in \{\infty, 2\}$ perturbation size of $0.05$ and compute source variation and target variation of 100 randomly sampled $h$.  We sample parameters of $h$ from a standard normal distribution.  We plot the linear expansion function with minimum slope in Figure \ref{fig:gauss_exp}.  We find that across all settings we can find a linear expansion function that is a good fit.  Additionally, we find that the slope of this linear expansion function stays consistent across changes in $\sigma$ and $d$.  We find that input dimension $n$ can influence the expansion function which matches; for example, the slope of the expansion function for source $\ell_2$ to target $\ell_{\infty}$ increases from $\sim 21$ to $39.09$.  This matches our results in Lemma \ref{cor:larger_rad} and Lemma \ref{cor:lp2lq} where our computed expansion function depends on $n$.
 
 \begin{figure}[h]
     \centering
     \includegraphics[width=0.45\textwidth]{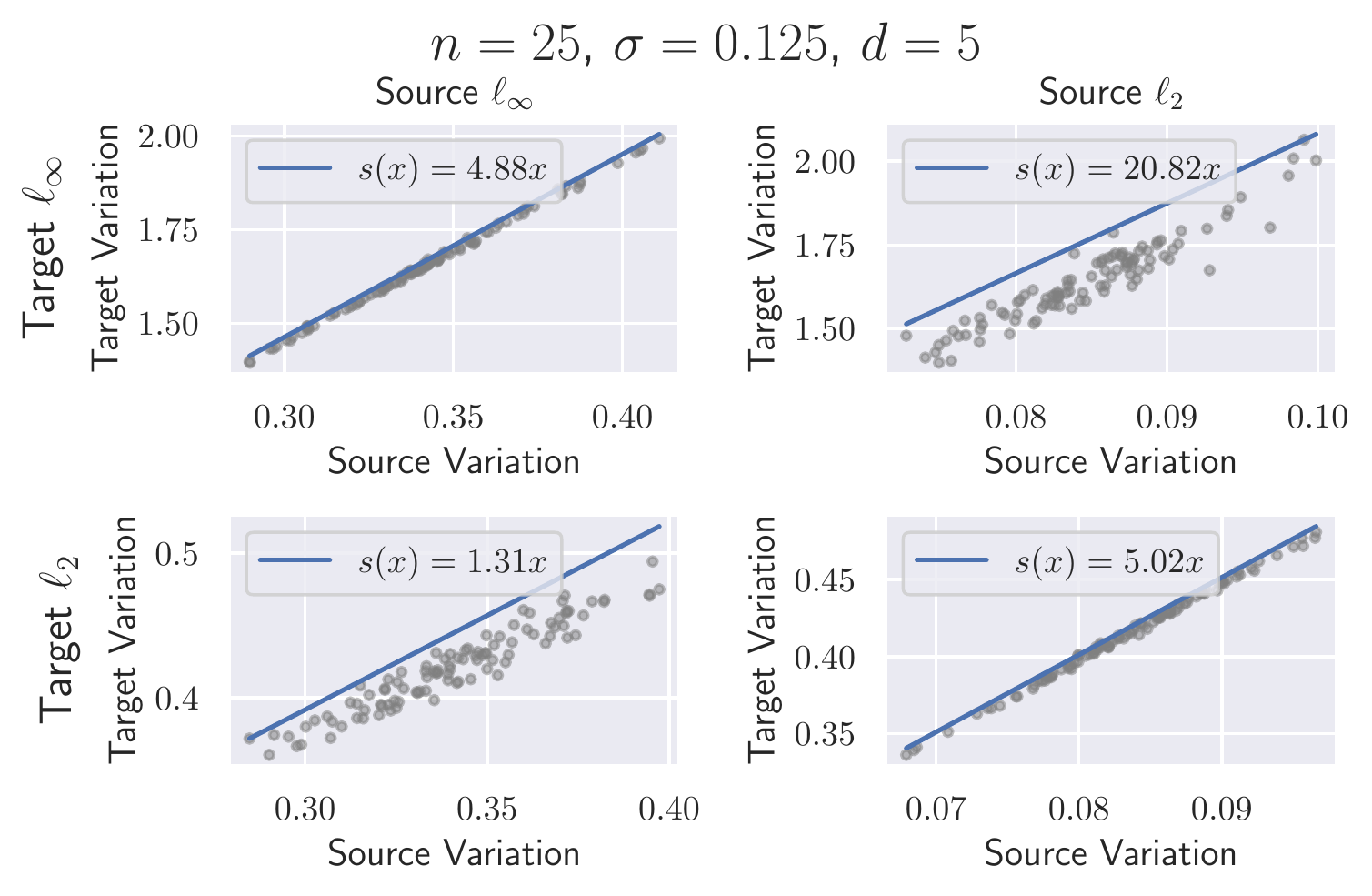}
     \includegraphics[width=0.45\textwidth]{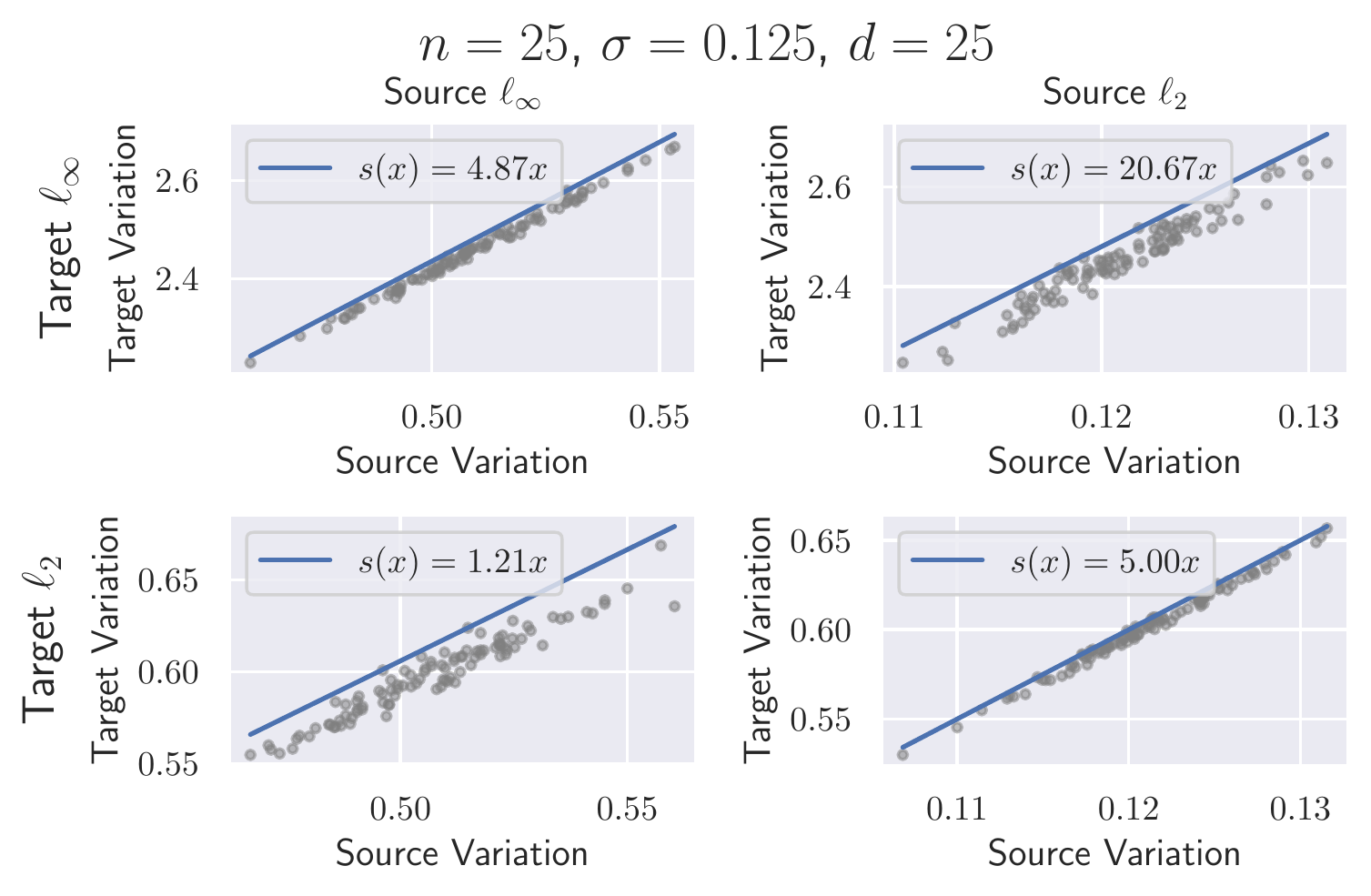}
     
     \includegraphics[width=0.45\textwidth]{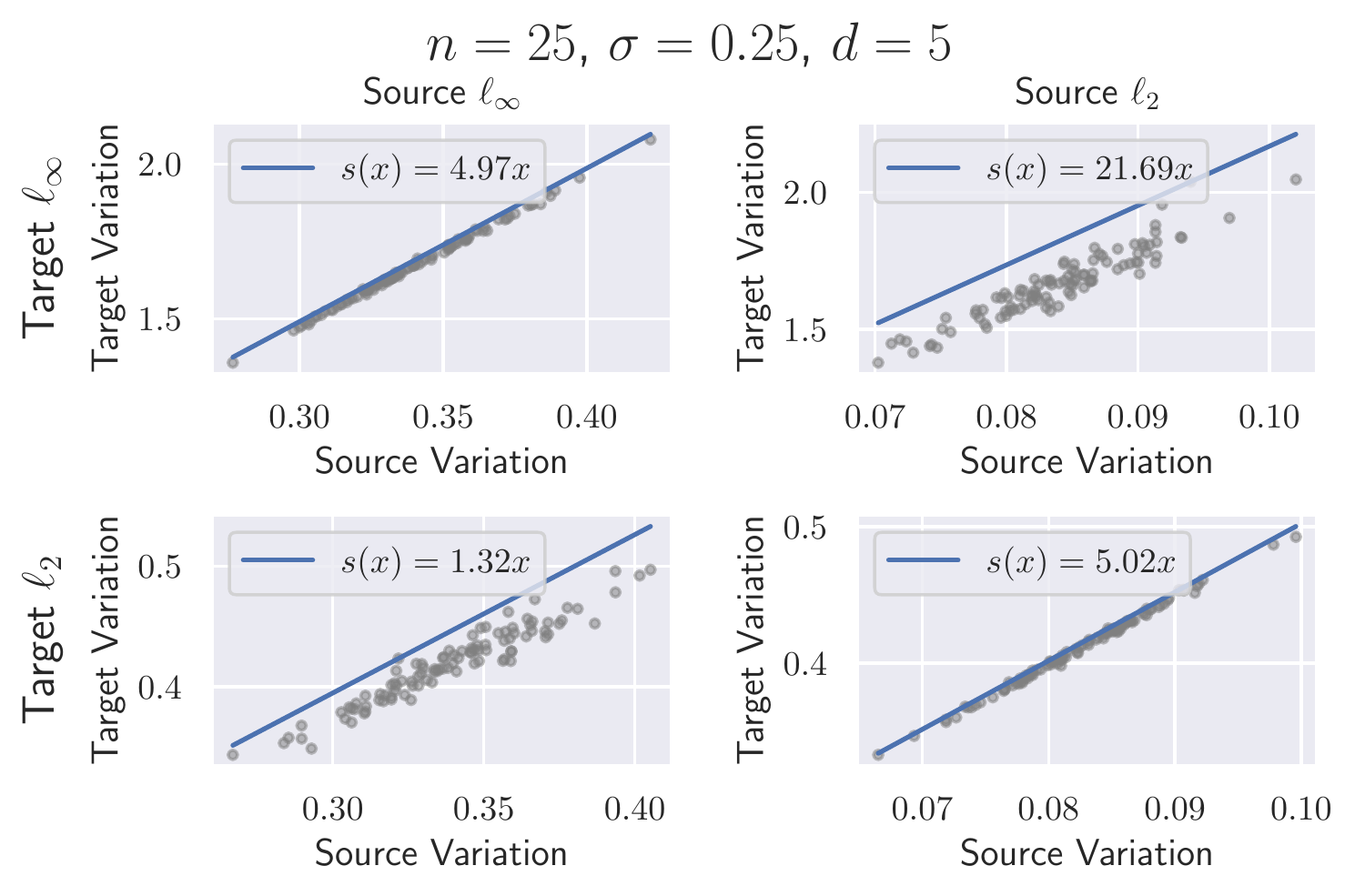}
     \includegraphics[width=0.45\textwidth]{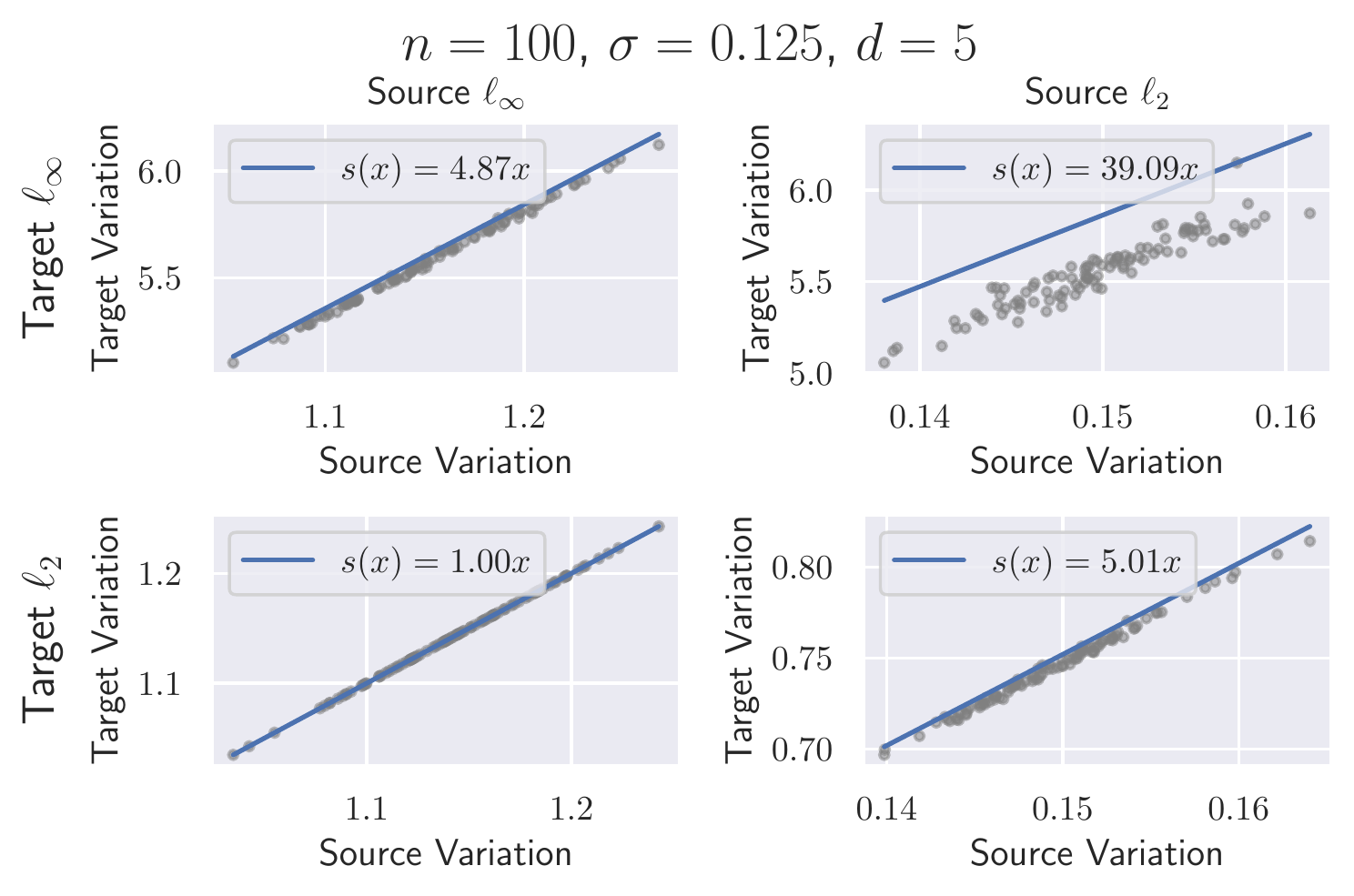}
     \caption{Plots of minimum linear expansion function $s$ shown in blue computed on 100 randomly initialized linear feature extractors across 4 different combinations of input dimension $n$, Gaussian standard deviation $\sigma$, and feature dimension $d$.  Each grey point represents the variation of a single model measured across the source and target.  Columns represent the source threat model ($\ell_{\infty}$ and $\ell_2$ with $\epsilon=0.01$) while rows represent the target threat model ($\ell_{\infty}$ and $\ell_2$ with $\epsilon=0.05$).}
     \label{fig:gauss_exp}
 \end{figure}

\subsection{Generalization curves}
We plot the threat model generalization curves for varied settings of input dimension $n$, feature extractor output dimension $d$, and Gaussian standard deviation $\sigma$ in Figure \ref{fig:gauss_gen_abl}.  We find that across all settings, applying variation regularization leads to smaller generalization gap across values of $\ell_p$ radius $\epsilon$.

\begin{figure}[ht]
    \centering
    \includegraphics[width=0.45\textwidth]{gauss_figs_updated/gauss_generalization_5_0.125_25_source_infty.pdf}
    \includegraphics[width=0.45\textwidth]{gauss_figs_updated/gauss_generalization_5_0.125_25_source_2.pdf}
    \includegraphics[width=0.45\textwidth]{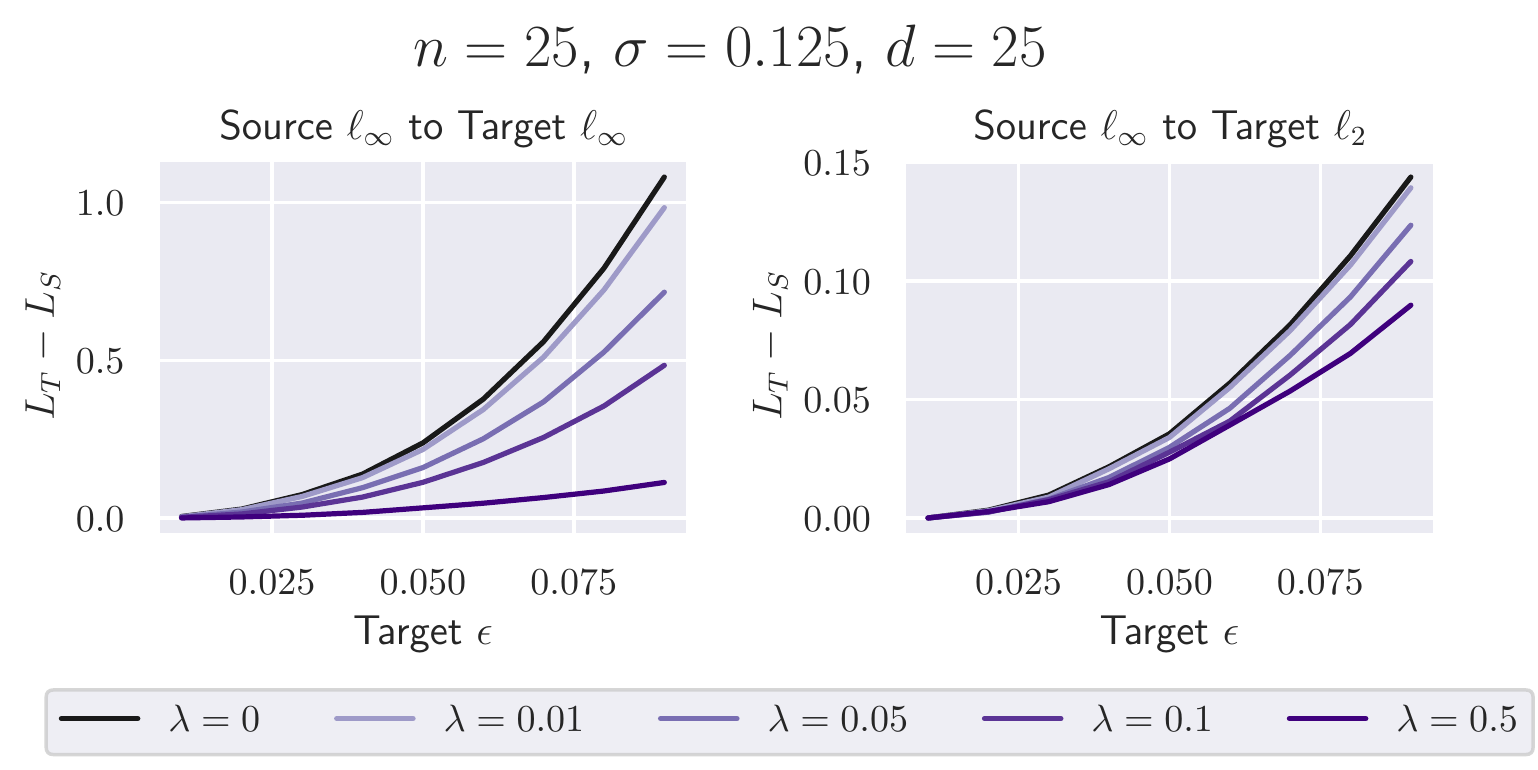}
    \includegraphics[width=0.45\textwidth]{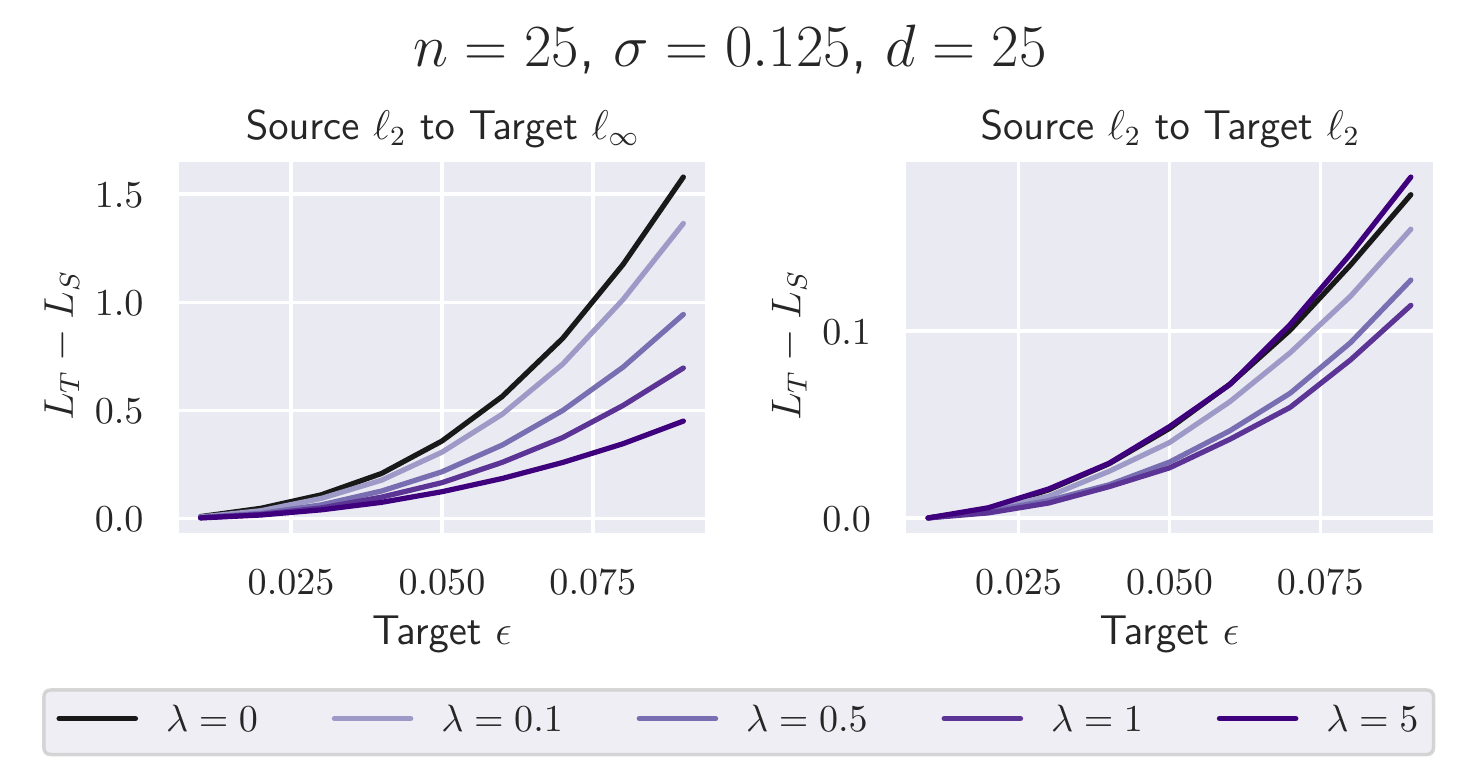}
    
    \includegraphics[width=0.45\textwidth]{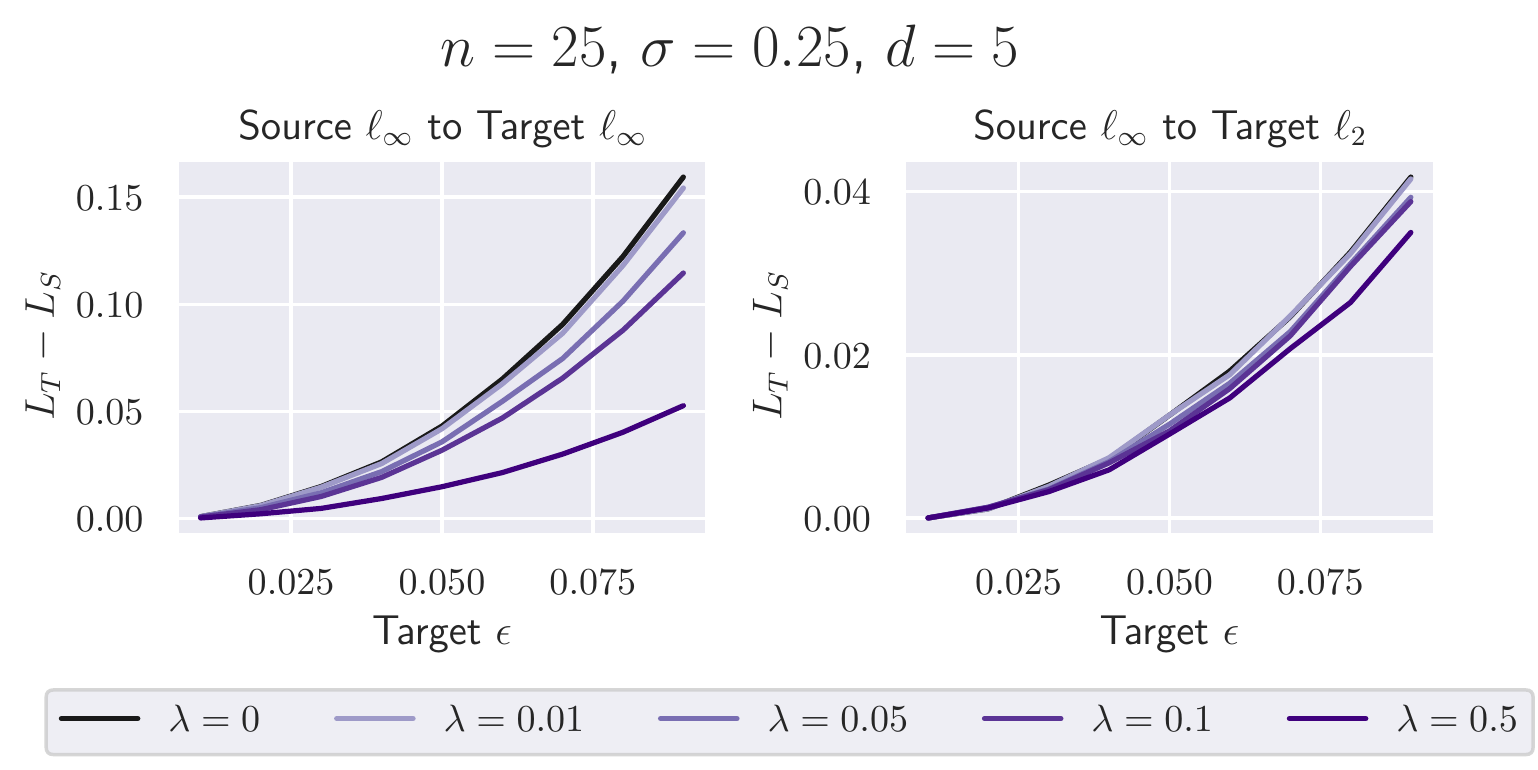}
    \includegraphics[width=0.45\textwidth]{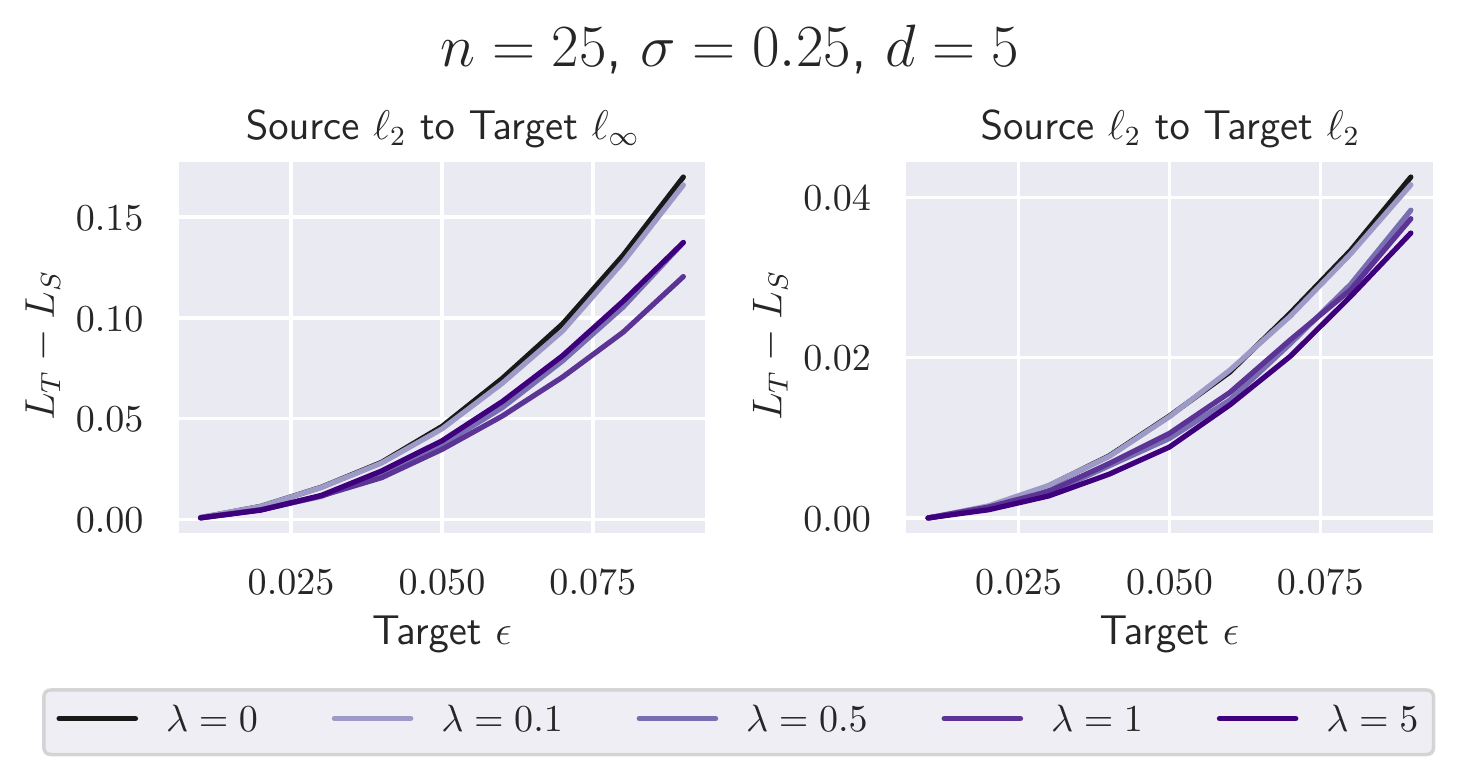}
    \includegraphics[width=0.45\textwidth]{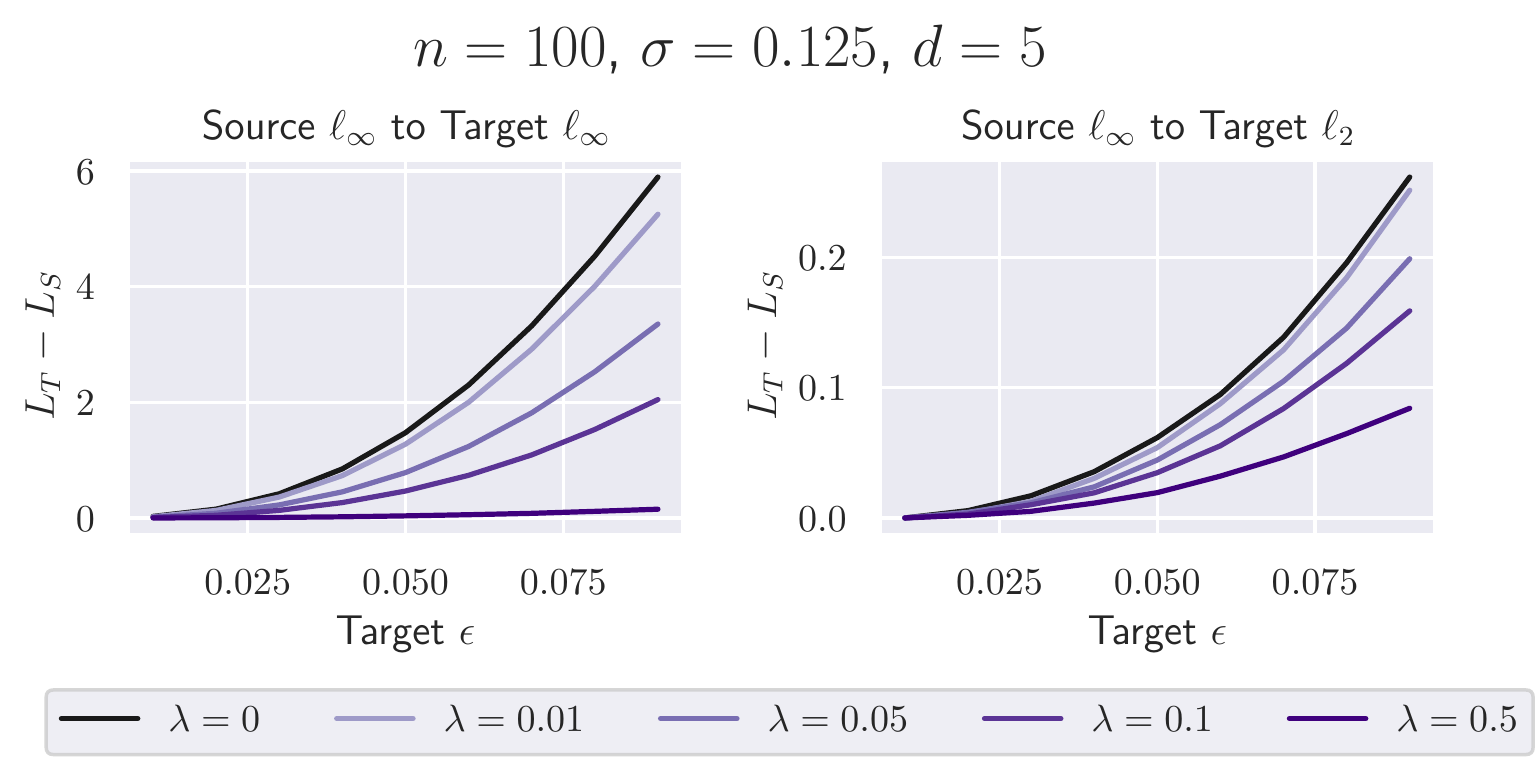}
    \includegraphics[width=0.45\textwidth]{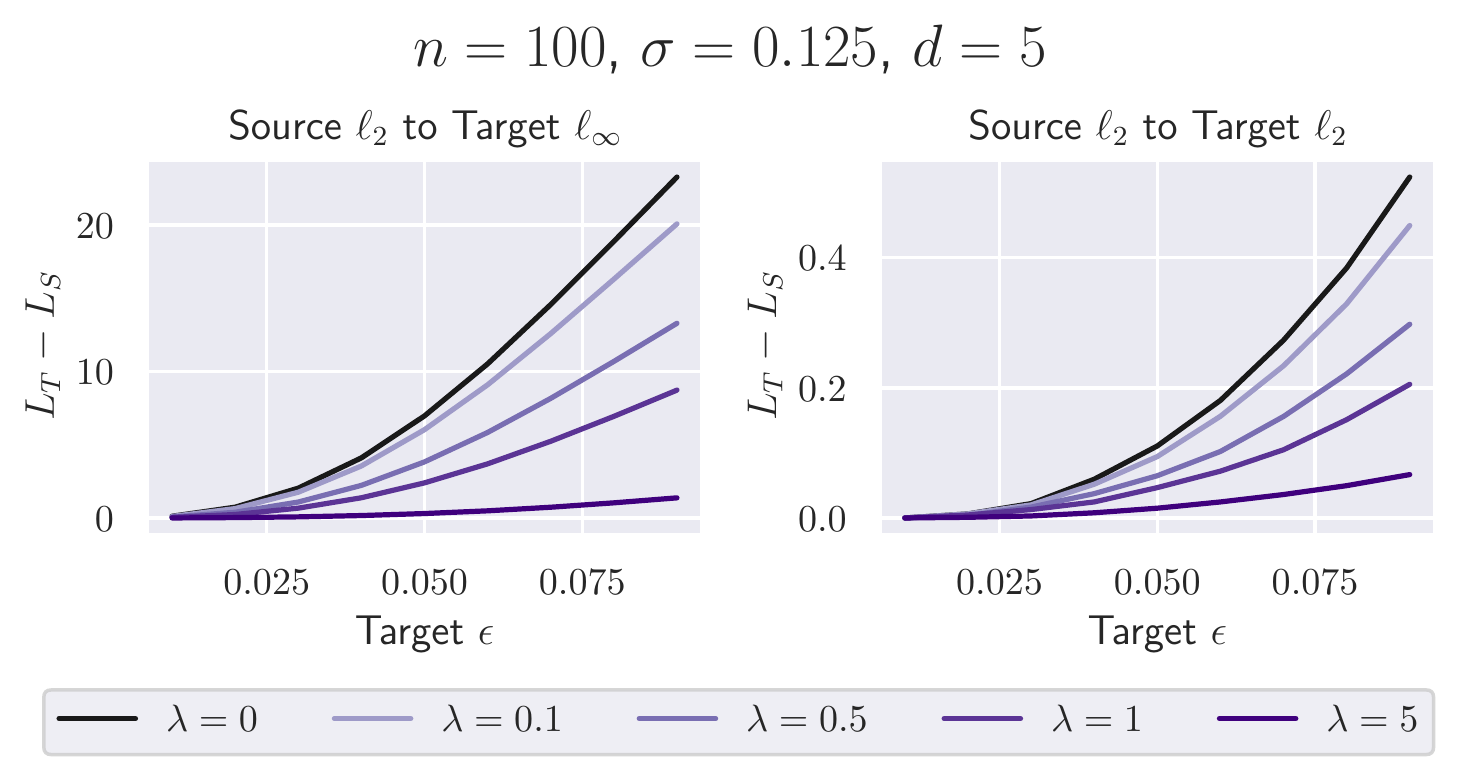}
    \caption{Threat model generalization gap of linear models on Gaussian data trained with varied variation regularization strength $\lambda$ measured on adversarial examples of generated by target $\ell_p, p=\{\infty, 2\}$ perturbations with radius $\epsilon$ at varied input dimension $n$, feature extractor output dimension $d$ and standard deviation $\sigma$.  The generalization gap is measured with respect to cross entropy loss. All models are trained with source $\ell_p, p=\{\infty, 2\}$ radius of 0.01.}
    \label{fig:gauss_gen_abl}
\end{figure}

\begin{figure}[h]
    \centering
    \includegraphics[width=0.47\textwidth]{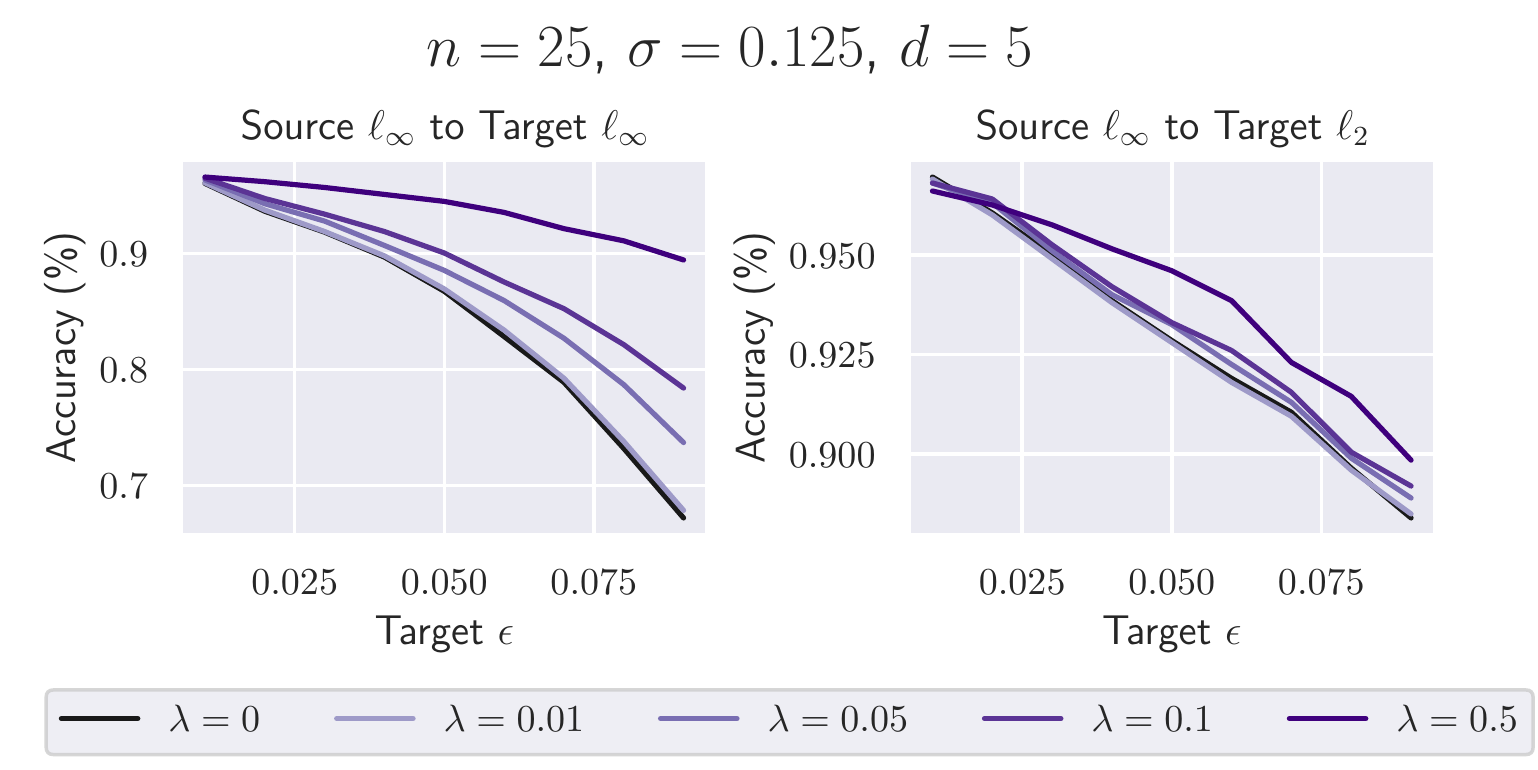}
    \includegraphics[width=0.47\textwidth]{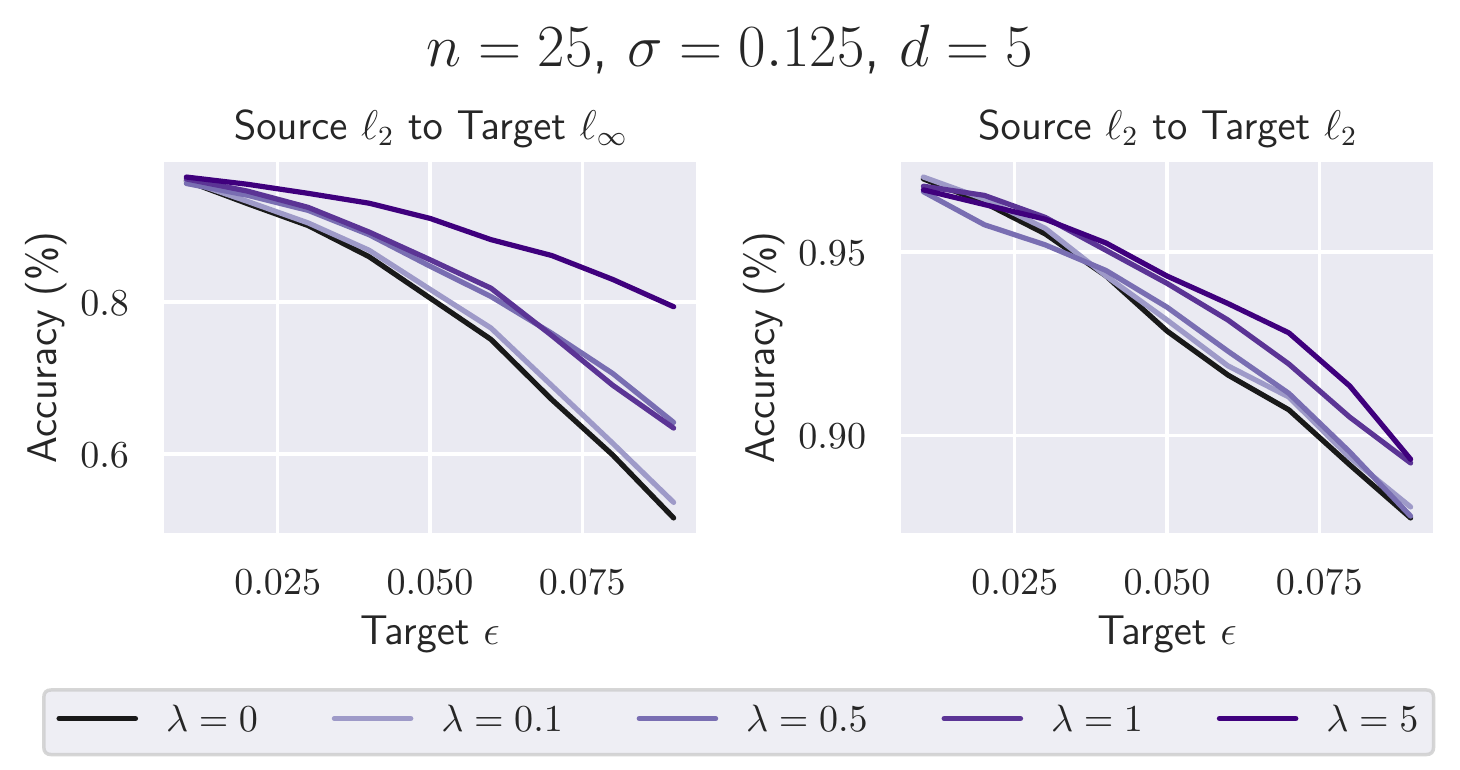}
    \caption{Accuracies of linear models on Gaussian data trained with at varied variation regularization strength $\lambda$ measured on adversarial examples of radius $\epsilon$.  All models are trained with radius 0.01.}
    \label{fig:gauss_acc}
\end{figure}

\subsection{Accuracies over regularization strength}  We plot the accuracies corresponding to the $n=25, \sigma=0.125$ and $d = 5$ setting in Figure \ref{fig:gauss_gen_abl} in Figure \ref{fig:gauss_acc}.  We note that while Figure \ref{fig:gauss_gen_abl} demonstrated that regularization improves decreases the size of the generalization gap, there is trade-off in accuracy which can be seen at small of $\epsilon$.  However, we find that generally variation regularization improves robust accuracy on unforeseen threat models.

\subsection{Evaluations on a separate test set} \label{app:gauss_sep_test} We now plot generalization gap  for the $n=25, \sigma=0.125$ and $d = 5$ setting with cross entropy loss on the target threat model measured on a separate test set of 2000 samples in Figure \ref{fig:gauss_sep_test}.  We find that the trends we observed on the train set are consistent with those observed when evaluating on the train set shown in Figure \ref{fig:gauss_gen_abl}.

\begin{figure}[h]
    \centering
    \includegraphics[width=0.47\textwidth]{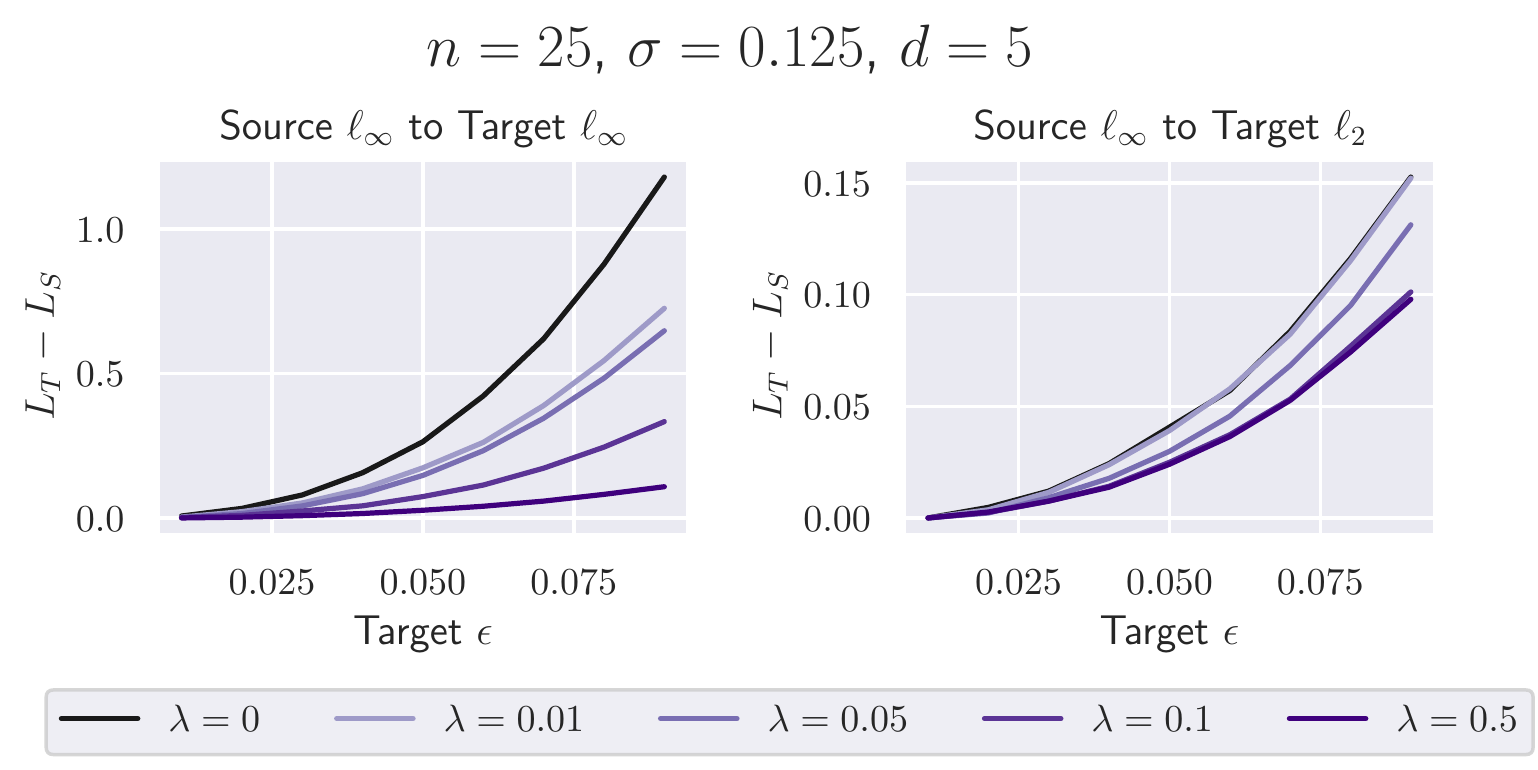}
    \includegraphics[width=0.47\textwidth]{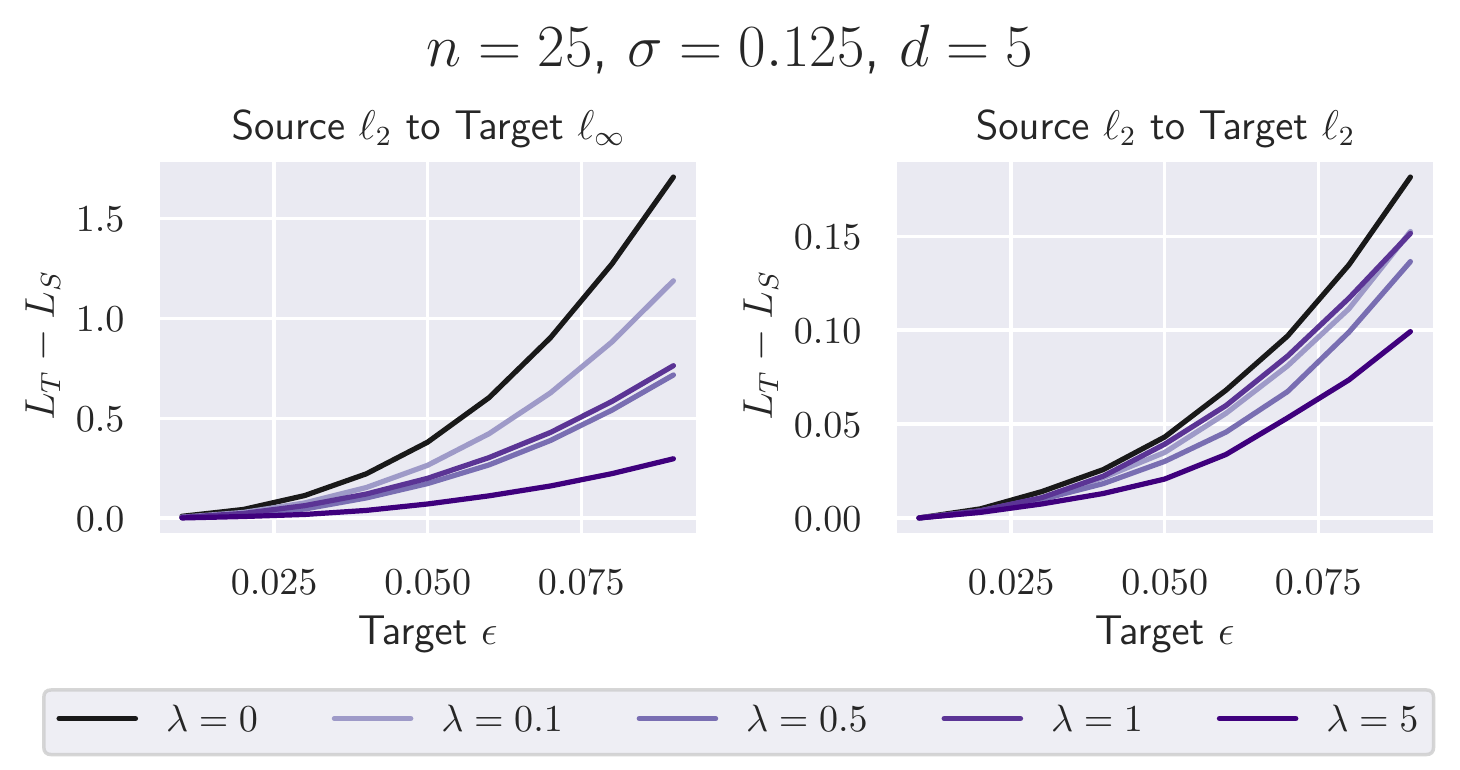}
    \caption{Cross entropy loss of linear models on Gaussian data trained with regularization strength $\lambda$ measured on adversarial examples of radius $\epsilon$. Cross entropy loss is measured with respect to a separate test set.  All models are trained with source $\ell_p$ radius of 0.01.}
    \label{fig:gauss_sep_test}
\end{figure}

\section{Additional Results for Logit Level AT-VR}
In this section, we present additional results for AT-VR when considering the logits to be the output of the feature extractor $h$.
\label{app:add_im_logits}

\subsection{Results on Additional Seeds}\label{app:seeds} 
In Table \ref{tab:seed} we present the mean and standard deviations for robust accuracy of CIFAR-10 ResNet-18 models over 3 trials seeded at 0, 1, and 2.  We find that the improvement observed through variation regularization on unforeseen target threat models is significant for both $\ell_2$ and $\ell_{\infty}$ attacks for ResNet-18 CIFAR-10 models; for bolded accuracies on target threat modes (with the exception of $\ell_{\infty}$ source to $\ell_2$ target which we did not report as an improvement in the main text) we find that the range of the error bars do not overlap with the results for standard adversarial training ($\lambda = 0$).  Additionally, we find that the trade-off with clean accuracy observed when using variation regularization is also significant.
\begin{table}[ht]
    \centering
    \fontsize{6}{9}\selectfont
    \begin{tabular}{@{}ccc|cc|cccc|c@{}}
    \hline
         & & & & & \multicolumn{4}{c|}{Union with Source} &\\
         \cline{6-9}
         Dataset  & Source & $\lambda$ & Clean & Source & $\ell_{\infty}$ & $\ell_2$ & StAdv & Re- & Union \\
         & & &  acc & acc &$\epsilon=\frac{12}{255}$& $\epsilon=1$ & &color & all \\
         \hline
         
         CIFAR-10 & $\ell_{2}$ & 0 & \textbf{88.29} $\pm$ 0.51 & 67.15 $\pm$ 0.45 & 6.77 $\pm$ 0.31 & 35.40 $\pm$ 0.65 & 1.21 $\pm$ 0.46 & 66.99 $\pm$ 0.43 & 0.52 $\pm$ 0.25 \\
         CIFAR-10 & $\ell_{2}$ & 0.25 & 87.39 $\pm$ 0.36& 68.75 $\pm$ 0.08 & 11.56 $\pm$ 0.57   & 39.57 $\pm$ 1.08 & 10.31 $\pm$ 0.41 & 68.60 $\pm$ 0.10 &5.93 $\pm$ 0.38\\
         CIFAR-10 &  $\ell_{2}$ & 0.5 & 86.07 $\pm$ 0.50& \textbf{68.78} $\pm$ 0.17 & 13.13 $\pm$ 1.33 & 41.52 $\pm$ 1.19 & 18.71 $\pm$ 2.79  & 68.59 $\pm$ 0.12 & 8.70 $\pm$ 0.50\\
         CIFAR-10 & $\ell_{2}$ &0.75 & 84.54 $\pm$ 0.61 & 67.97 $\pm$ 0.15 & \textbf{14.61} $\pm$ 0.59 & \textbf{42.13} $\pm$ 0.59 & 23.24 $\pm$ 0.94 & \textbf{67.83} $\pm$ 0.12 & 10.69 $\pm$ 0.15 \\
         CIFAR-10 &  $\ell_{2}$ & 1& 84.71 $\pm$ 1.32& 67.39 $\pm$ 0.29 & 13.62 $\pm$ 0.81 & 41.14 $\pm$ 1.56 & \textbf{33.70} $\pm$ 5.27 & 67.29 $\pm$ 0.31 & \textbf{11.58} $\pm$ 0.37\\
         
         \hline
         CIFAR-10 & $\ell_{\infty}$ & 0 & \textbf{83.01} $\pm$ 0.29& 47.44 $\pm$ 0.08 & 27.79 $\pm$ 0.44  & 24.67 $\pm$ 0.70 &  4.17 $\pm$ 0.26 & 47.44 $\pm$ 0.08 & 2.16 $\pm$ 0.28 \\
         CIFAR-10 &  $\ell_{\infty}$ & 0.05  & 82.70 $\pm$ 0.56  & 48.38 $\pm$ 0.39  & 29.55 $\pm$ 0.43 & 25.25 $\pm$ 0.97 & 4.87 $\pm$ 0.66 & 48.38$\pm$ 0.39 & 2.61 $\pm$ 0.43 \\
         CIFAR-10 &  $\ell_{\infty}$ & 0.1  & 81.79 $\pm$ 0.29 & 48.65 $\pm$ 0.25 & 29.89 $\pm$ 0.53 & 24.99 $\pm$ 0.73 & 6.33 $\pm$ 0.83 & 48.65 $\pm$ 0.25 & 3.47 $\pm$ 0.50 \\
         CIFAR-10 & $\ell_{\infty}$ & 0.3 & 78.87 $\pm$ 0.72 & \textbf{49.16} $\pm$ 0.12 & 31.89 $\pm$ 0.13 & \textbf{24.95} $\pm$ 0.26 & 12.96 $\pm$ 0.81 & \textbf{49.16} $\pm$ 0.12 & 8.53 $\pm$ 0.70 \\
         CIFAR-10 &  $\ell_{\infty}$ & 0.5& 74.24 $\pm$ 2.04 & 48.62 $\pm$ 0.20 & \textbf{33.07} $\pm$ 1.09 & 24.59 $\pm$ 1.32 & \textbf{19.91} $\pm$ 1.12 & 48.62 $\pm$ 0.20 & \textbf{13.35} $\pm$ 0.83 \\
         
    \end{tabular}
    \caption{Mean and standard deviation across 3 trials for robust accuracy of various models trained at different strengths of variation regularization applied on logits on various threat models.  $\lambda = 0$ corresponds to standard adversarial training.  Models are trained with either source threat model $\ell_{\infty}$ with radius $\frac{8}{255}$ or $\ell_2$ with radius $0.5$.   The ``source acc" column reports the accuracy on the source attack.  For each individual threat model, we evaluate accuracy on a union with the source threat model.  The union all column reports the accuracy obtained on the union across all listed threat models.}
    \label{tab:seed}
\end{table}

\subsection{Expansion function on random features}
In Section \ref{sec:cif_exp_fun}, we plotted the expansion function across models trained with adversarial training at various levels of variation regularization.  We now visualize the expansion function for random feature extractors to investigate to what extent the learning algorithm influences expansion function.  We plot the source and target variations (corresponding to the same setup as in Section \ref{sec:cif_exp_fun}) of 300 randomly initialized feature extractors of ResNet-18 on CIFAR-10 in Figure \ref{fig:cif_random_exp_logits}.  For random initialization, we use Xavier normal initialization \citep{GlorotB10} for weights and standard normal initialization for biases.

\begin{figure}[ht]
    \centering
    \includegraphics[width=0.5\textwidth]{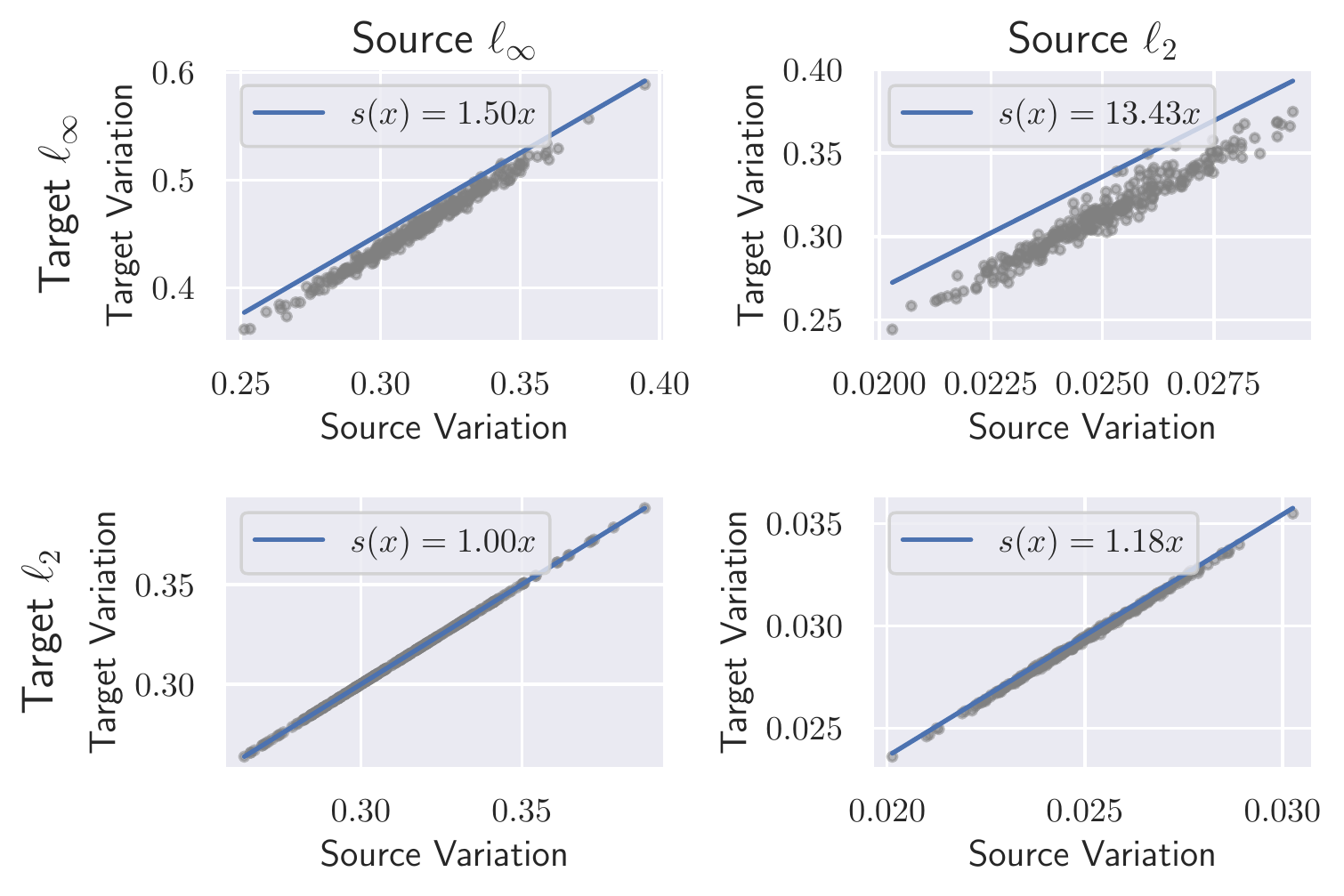}
    \caption{Plots of minimum linear expansion function $s$ shown in blue computed on 300 randomly initialized ResNet-18 models.  Each grey point represents the variation measured on the source and target attack.  Variation is computed on the logits.  The two columns represent the source adversary ($\ell_{\infty}$ and $\ell_2$ respectively).  The two rows represent the target adversary ($\ell_{\infty}$ and $\ell_2$ respectively).}
    \label{fig:cif_random_exp_logits}
\end{figure}
We find that with randomly initialized models, we can also find a linear expansion function with small slope.  In comparison to expansion functions from adversarially trained models (Figure \ref{fig:cif_random_exp_logits}), we find that using randomly initialized models leads to minimum linear expansion function $s$ with smaller slope for $\ell_{\infty}$ target threat model.  This suggests that learning algorithm can have an impact on expansion function.

\subsection{Expansion function between $\ell_p$ and StAdv threat model} While we have shown that an expansion function with small slope exists between $\ell_2$ and $\ell_{\infty}$ threat models, it is unclear whether this also holds for non-$\ell_p$ threat models.  However, we do observe from Table \ref{tab:logits_acc} that AT-VR with $\ell_2$ and $\ell_{\infty}$ sources leads to significant improvements in robust accuracy on StAdv, which is a non-$\ell_p$ threat model, suggesting that an expansion function between these threat models exists.  Using the same models for generating Figure \ref{fig:cifar_exp}, we plot the expansion function from $\ell_{\infty}$ and $\ell_2$ sources to StAdv ($\epsilon=0.05$) in Figure \ref{fig:lp_to_stadv}.  We observe that for both source threat models a linear expansion function exists to StAdv.

\begin{figure}[ht]
    \centering
    \includegraphics[width=0.5\textwidth]{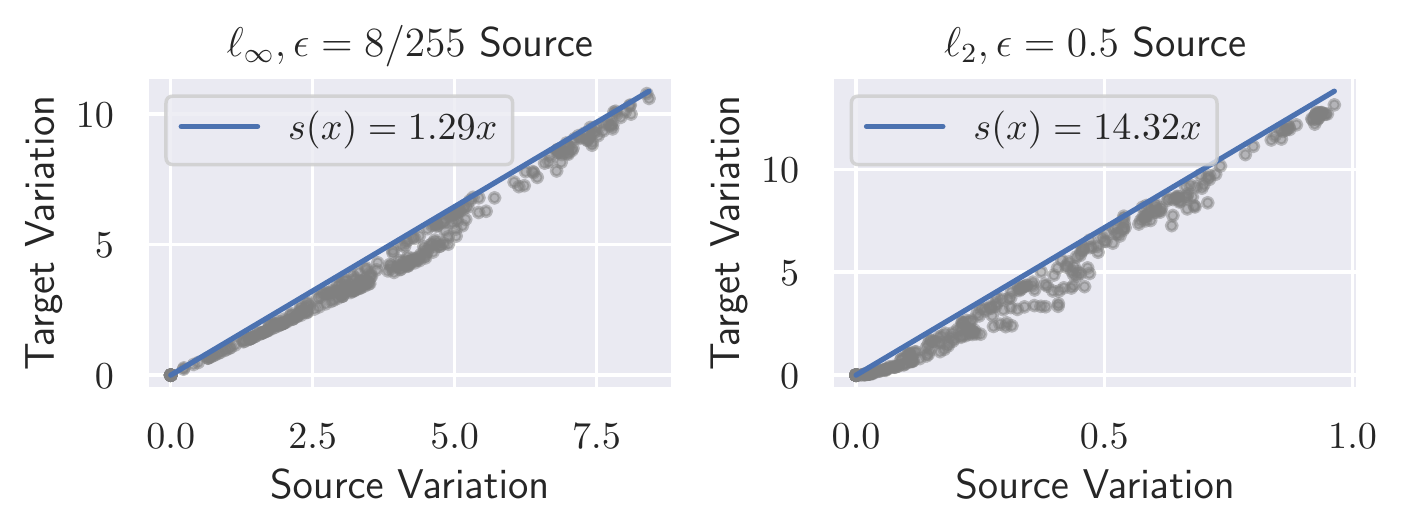}
    \caption{Expansion function from $\ell_{\infty}, \epsilon = 8/255$ and $\ell_2, \epsilon=0.5$ source threat models to StAdv ($\epsilon=0.05$) target threat model computed over 315 adversarially trained ResNet-18 models.  The linear expansion function with minimum slope is plotted in blue.}
    \label{fig:lp_to_stadv}
\end{figure}

We now visualize the expansion function from StAdv ($\epsilon=0.03$) source to $\ell_{\infty}$ ($\epsilon=8/255$), $\ell_2$ ($\epsilon=0.5$), and StAdv ($\epsilon=0.05$) target threat models.  We present plots in Figure \ref{fig:stadv_exp}.  Unlike our plots of expansion function for $\ell_2$ and $\ell_\infty$ source threat models, we find that for StAdv a linear expansion function is not a tight upper bound on the true trend in source vs target variation.  A better model for expansion function would be piecewise linear function with 2 slopes, one for variation values near 0 and one for larger variation values since the slopes at points where source variation is closer to 0 is much larger than the slopes computed at points further from 0.

\begin{figure}[ht]
    \centering
    \includegraphics[width=0.75\textwidth]{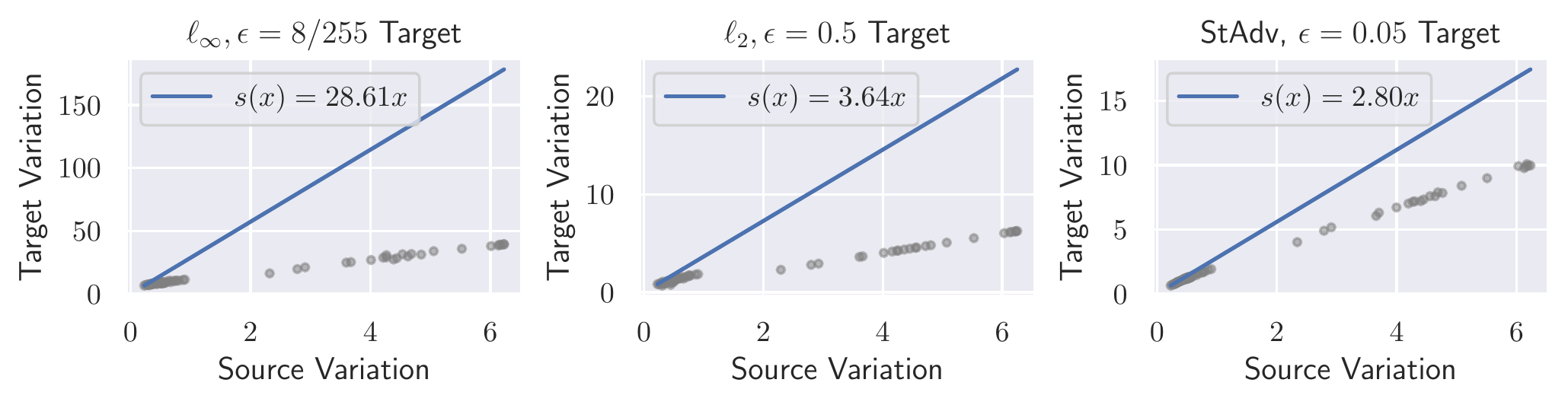}
    \caption{Expansion function from StAdv ($\epsilon=0.03$) source threat model to $\ell_{\infty}$ ($\epsilon=8/255$), $\ell_2$ ($\epsilon=0.5$), and StAdv ($\epsilon=0.05$) target threat models.  Expansion function is computed using 60 ResNet-18 models adversarially trained on CIFAR-10 with adversarial examples generated via StAdv ($\epsilon=0.03$).  The linear expansion function with minimum slope is plotted in blue.}
    \label{fig:stadv_exp}
\end{figure}

\subsection{Additional Results with $\ell_p$ Target Threat Models}\label{app:add_lp} In Figure \ref{fig:cif_gen} of the main text, we plotted the unforeseen generalization gap and robust accuracies for ResNet-18 models trained on CIFAR-10 with AT-VR at various perturbation size $\epsilon$ with $\ell_{\infty}$ source.  We plot the robust accuracy across $\ell_p$ threat models for the models trained with standard adversarial training ($\lambda = 0$) and with highest variation regularization strength used ($\lambda = 0.5$) in Figure \ref{fig:cif_acc_logits}.  We find that at large values of $\ell_p$ perturbation size, the model using variation regularization achieves higher robust accuracy than the model trained using standard adversarial training.  This improvement is most clear for $\ell_{\infty}$ targets.

\begin{figure}
    \centering
    \includegraphics[width=\textwidth]{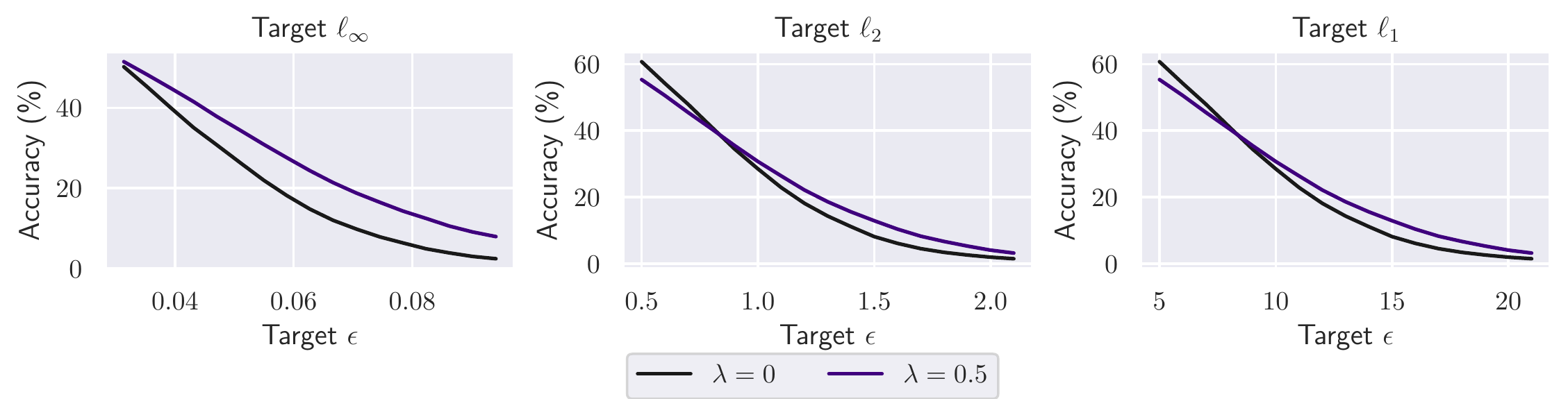}
    \caption{Robust accuracy on the CIFAR-10 test set of ResNet-18 models trained using AT-VR at varied regularization strength $\lambda$ measured on adversarial examples of generated by target $\ell_p, p=\{\infty, 2, 1\}$ perturbations with radius $\epsilon$.  The generalization gap is measured with respect to cross entropy loss. Variation regularization is applied on the logits.  All models are trained with source $\ell_\infty$ perturbations of radius $\frac{8}{255}$.}
    \label{fig:cif_acc_logits}
\end{figure}

We provide corresponding plots of unforeseen generalization gap and robust accuracy on CIFAR-10 on $\ell_p$ target threat models for an $\ell_2$ source in Figure \ref{fig:cif_gen_l2_logits}.  Similar to trends with $\ell_{\infty}$ source threat model, we find that increasing the strength of variation regularization decreases the unforeseen generalization gap.  We find that with an $\ell_2$ source threat model, the robust accuracy for the model trained with variation regularization also has consistently higher accuracy across target $\ell_p$ threat models compared to the model trained with standard adversarial training ($\lambda = 0$).

\begin{figure}[ht]
    \centering
    \includegraphics[width=\textwidth]{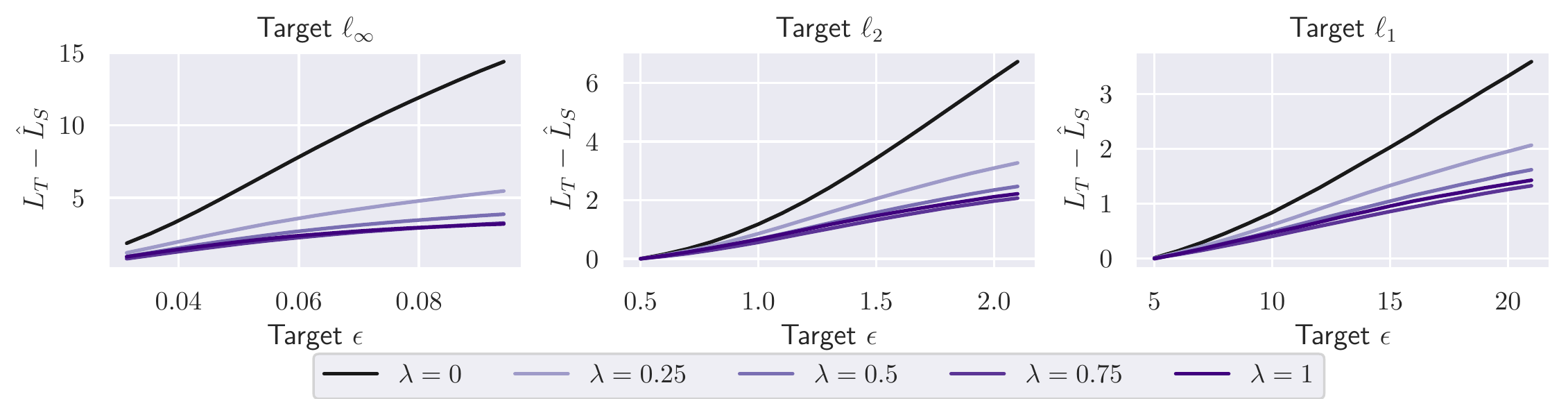}
    \includegraphics[width=1\textwidth]{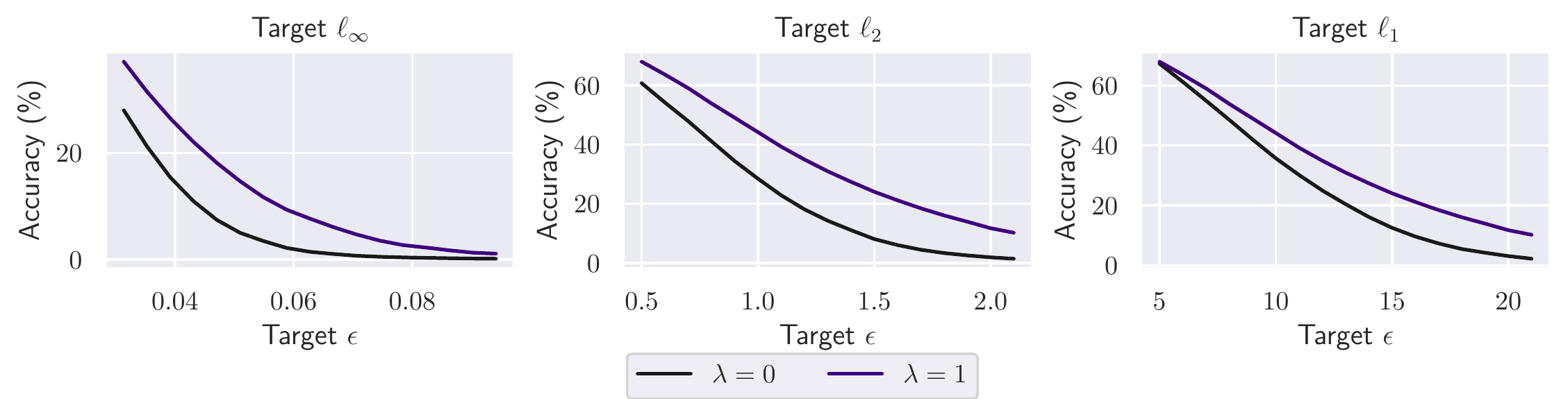}
    \caption{
    \textbf{Top row:} Generalization gap of on the CIFAR-10 test set for ResNet-18 models trained using AT-VR at varied regularization strength $\lambda$ measured on adversarial examples of generated by target $\ell_p, p=\{\infty, 2, 1\}$ perturbations with radius $\epsilon$.  Variation regularization is applied on the logits.  The generalization gap is measured with respect to cross entropy loss. All models are trained with source $\ell_2$ perturbations of radius $0.5$.
    \textbf{Bottom row:} Corresponding robust accuracy of $\lambda=0$ and $\lambda=1$ models displayed in top row.}
    \label{fig:cif_gen_l2_logits}
\end{figure}

\subsection{Additional strengths of variation regularization} \label{app:additional_lam} In Table \ref{tab:logits_acc_full} we present additional results for models trained at AT-VR at different strengths $\lambda$ of variation regularization.  Overall, we find that models using variation regularization improve robustness on unforeseen attacks.  Generally, we find that union accuracies are also larger with higher strengths of variation regularization.

\begin{table}[ht]
    \centering
    \fontsize{8}{9}\selectfont
    \begin{tabular}{@{}cccc|cc|cccc|c@{}}
    \hline
         & & & & & & \multicolumn{4}{c|}{Union with Source} &\\
         \cline{7-10}
         Dataset & Architecture & Source & $\lambda$ & Clean & Source & $\ell_{\infty}$ & $\ell_2$ & StAdv & Re- & Union \\
         & & & & acc & acc &$\epsilon=\frac{12}{255}$& $\epsilon=1$ & &color & all \\
         
         \hline
         
         CIFAR-10 & ResNet-18 & $\ell_{2}$ & 0 & \textbf{88.49}& 66.65 & 6.44 & 34.72 & 0.76 & 66.52 & 0.33\\
         CIFAR-10 & ResNet-18 & $\ell_{2}$ & 0.25 & 87.49 & 68.66 & 11.59 & 39.06 & 10.78 &  68.52 & 6.11 \\
         CIFAR-10 & ResNet-18 & $\ell_{2}$ & 0.5 & 85.75 & \textbf{68.93} & 13.42 & \textbf{41.90} & 20.13 &  \textbf{68.66} & 9.25 \\
         CIFAR-10 & ResNet-18 & $\ell_{2}$ & 0.75 & 84.78 & 67.86 & \textbf{14.27} & 41.68 & 23.45 & 67.75 &  10.58\\
         CIFAR-10 & ResNet-18 & $\ell_{2}$ & 1& 85.21 & 67.38 & 13.43 & 40.74 & \textbf{34.40} & 67.30 & \textbf{11.77} \\
         
         \hline
         CIFAR-10 & ResNet-18 & $\ell_{\infty}$ & 0 & 82.83 & 47.47 & 28.09 & 24.94 & 4.38 &  47.47 &  2.48 \\
         CIFAR-10 & ResNet-18 & $\ell_{\infty}$ & 0.05 & \textbf{83.34} & 48.04 & 29.28 & 24.34 & 4.32 & 48.04 & 2.29 \\
         CIFAR-10 & ResNet-18 & $\ell_{\infty}$ & 0.1  & 81.94 & 48.64 & 29.50 & 24.15 & 6.01 & 48.64 & 3.13 \\
         CIFAR-10 & ResNet-18 & $\ell_{\infty}$ & 0.3 & 79.36& \textbf{49.28} & 31.94 & \textbf{25.08} & 12.75 & \textbf{49.28} & 8.28\\
         CIFAR-10 & ResNet-18 & $\ell_{\infty}$ & 0.5& 72.91 & 48.84 & \textbf{33.69} & 24.38 & \textbf{18.62} & 48.84 & \textbf{12.59}\\
         \hline
         CIFAR-10 & WRN-28-10 & $\ell_{\infty}$ & 0 &  \textbf{85.93}& 49.86 & 28.73 & 20.89 & 2.28 & 49.86 & 1.10 \\
         CIFAR-10 & WRN-28-10 & $\ell_{\infty}$ & 0.1 & 84.82 & 50.42 & 30.10 & 20.85 & 5.54 & 50.42 & 3.42 \\
         CIFAR-10 & WRN-28-10 & $\ell_{\infty}$ & 0.3 & 83.47 & \textbf{51.19} & 31.71 & 20.01 & 15.00 & \textbf{51.19} & 9.81 \\
         CIFAR-10 & WRN-28-10 & $\ell_{\infty}$ & 0.5& 80.43 & 51.16& 33.18 & 20.76 & 21.05 & 51.16 & 12.06 \\
         CIFAR-10 & WRN-28-10 & $\ell_{\infty}$ & 0.7 & 72.73 & 49.94 & \textbf{35.11} & \textbf{22.30} & \textbf{25.33} & 49.94 & \textbf{14.72} \\
         \hline
         CIFAR-10 & VGG-16 & $\ell_{\infty}$ & 0 & \textbf{79.67} & 44.36 & 26.14 & 30.82 & 7.31 & 44.36 & 4.35\\
         CIFAR-10 & VGG-16 & $\ell_{\infty}$ & 0.01 & 78.12 & 45.13& 27.22& \textbf{33.34} & 8.09 & 45.13  & 5.24 \\
         CIFAR-10 & VGG-16 & $\ell_{\infty}$ & 0.05 & 79.24 & 44.73 & 27.07 & 30.99 & 7.94 &  44.73 & 4.65 \\
         CIFAR-10 & VGG-16 & $\ell_{\infty}$ & 0.1 & 77.80 & \textbf{45.42} & \textbf{28.41} & 32.08 & 10.57 & \textbf{45.42} & 6.83\\
         CIFAR-10 & VGG-16 & $\ell_{\infty}$ & 0.15 & 76.19 & 44.65 & 27.44 & 29.65 & \textbf{11.99} & 44.65 & \textbf{7.48} \\

                 \hline
         ImageNette & ResNet-18 & $\ell_2$ &  0 &\textbf{88.94} & \textbf{84.99} & 0.00 & 79.08 & 1.27 & 72.15  & 0.00 \\
         ImageNette & ResNet-18 & $\ell_{2}$ & 0.25 & 86.83 & 84.28 & 1.58 & 80.66 & 8.38 &  74.17 & 0.94 \\
         ImageNette & ResNet-18 & $\ell_{2}$ & 0.5& 86.80 & 84.00 & 4.25 & 80.23 & 12.82 &  74.17 & 2.88 \\
         ImageNette & ResNet-18 & $\ell_{2}$ & 0.75 &85.83 & 83.92 & 5.81 & \textbf{80.94} & 15.57 &  74.73 & 4.28 \\
         ImageNette & ResNet-18 & $\ell_2$ & 1 & 85.22 & 83.08 & \textbf{9.53}& 80.43 & \textbf{18.04} & \textbf{75.26} & \textbf{6.80}\\
         
          \hline

         ImageNette & ResNet-18 & $\ell_{\infty}$ & 0 & \textbf{80.56} & 49.63 & 32.38 & 49.63 & 34.27 & 49.63 & 25.68\\
         ImageNette & ResNet-18 & $\ell_{\infty}$ & 0.05 & 79.64 & 50.50 & 33.27 & 50.50 & 39.01 & 50.50 & 28.48 \\
         ImageNette & ResNet-18 & $\ell_{\infty}$ & 0.1 & 78.01 & \textbf{50.80} & 35.57 & \textbf{50.80} & 42.37 & \textbf{50.80} &31.82  \\
         ImageNette & ResNet-18 & $\ell_{\infty}$ & 0.15 & 75.62 & 49.94 &36.15 & 49.94 & 43.26 & 49.94 & 33.22 \\
         ImageNette & ResNet-18 & $\ell_{\infty}$ & 0.2 & 73.50 & 49.17 & \textbf{36.28} & 49.15 & \textbf{44.00} & 49.17 &  \textbf{34.22} \\

         \hline
         CIFAR-100 & ResNet-18 & $\ell_{2}$ & 0 & \textbf{60.92} & 36.01 & 3.98 & 16.90 & 1.80 & 34.87 & 0.40  \\
         CIFAR-100 & ResNet-18 & $\ell_{2}$  & 0.25 & 56.20 & 38.26 & 7.85 &  23.43 & 2.88 & 36.63 & 1.34 \\
         CIFAR-100 & ResNet-18 & $\ell_{2}$  & 0.5 & 54.23 & \textbf{38.51} & 9.70 & 24.33 & 4.03 & 36.87 & 2.29 \\
         CIFAR-100 & ResNet-18 & $\ell_{2}$  & 0.75 & 51.53 & 38.26 & \textbf{11.47} & \textbf{25.65} & \textbf{5.12} & \textbf{36.96} & \textbf{3.11}\\
         CIFAR-100 & ResNet-18 & $\ell_{2}$ & 1 & 50.85 & 37.00 & 10.53 & 24.89& 5.09&35.82 &3.05\\
         
         \hline
         CIFAR-100 & ResNet-18 & $\ell_{\infty}$ & 0 & \textbf{54.94} & 22.74 & 12.61 & 14.40 & 3.99 & 22.71 &  2.42 \\
         CIFAR-100 & ResNet-18 & $\ell_{\infty}$  & 0.05 & 53.59 & 24.24 & 14.05 & 14.93 & 4.26  & 24.19 & 2.65 \\
         CIFAR-100 & ResNet-18 & $\ell_{\infty}$ & 0.1 & 52.32 & 24.78 & 15.09 & 15.57 & 4.25& 24.74 & 2.91 \\
         CIFAR-100 & ResNet-18 & $\ell_{\infty}$  & 0.2 & 48.97 & \textbf{25.04} &\textbf{16.48}  & \textbf{15.82}  & 4.96 & \textbf{24.95} & \textbf{3.48} \\
         CIFAR-100 & ResNet-18 & $\ell_{\infty}$  & 0.3 & 46.39 & 24.82 & 16.45 & 14.54 & \textbf{5.04} & 24.75 & 3.07 \\
         
         \hline
    \end{tabular}
    \caption{Robust accuracy of various models trained at different strengths of variation regularization applied on logits on various threat models.  Models are trained with either source threat model $\ell_{\infty}$ with radius $\frac{8}{255}$ or $\ell_2$ with radius $0.5$.   The ``source acc" column reports the accuracy on the source attack.  For each individual threat model, we evaluate accuracy on a union with the source threat model.  The union all column reports the accuracy obtained on the union across all listed threat models.}
    \label{tab:logits_acc_full}
\end{table}

\subsection{Full AutoAttack results on CIFAR-10} \label{app:full_aa}
In Table \ref{tab:full_aa}, we report the full AutoAttack evaluation of the ResNet-18 models trained on CIFAR-10 with AT-VR with highest regularization strength displayed in in Table \ref{tab:unseen_acc}.  We find that robust accuracy is consistent across attack types, suggesting that variation regularization is not obfuscating gradients.

\begin{table}[h]
    \centering
    \fontsize{8}{9}\selectfont
    \begin{tabular}{@{}l|lccc@{}}
    \hline
    & & Source &$\ell_{\infty}$ & $\ell_2$ \\
      & & & ($\epsilon=\frac{12}{255}$)& ($\epsilon=1$)\\ 
    \hline
         \multirow{4}{1.45cm}{$\ell_{\infty}$ source \\ $\lambda=0.5$}&APGD-CE & 51.56 &37.84 & 30.56 \\
         &APGD-T & 48.84 &33.72 & 24.93 \\
         &FAB-T & 48.84 &33.69 & 24.38 \\
         & Square & 48.84 &33.69 & 24.38\\
         \hline
         
         \multirow{4}{1.45cm}{$\ell_{2}$ source\\ $\lambda=1$} & APGD-CE & 68.00 &18.23 & 44.16 \\
          & APGD-T & 67.39 &14.73 & 41.67 \\
         &FAB-T & 67.38 &13.43 & 40.74 \\
         & Square & 67.38 &13.43 & 40.74 \\
\end{tabular}
    \caption{Full AutoAttack evaluations for the ResNet-18 models trained with AT-VR with variation regularization strength on $\ell_{\infty}$ ($\epsilon=\frac{8}{255}$) and $\ell_2$ ($\epsilon = 0.5)$ source adversaries.}
    \label{tab:full_aa}
\end{table}

\subsection{Evaluations on other adversaries} \label{app:additional_adv}
In Table \ref{tab:additional_acc}, we present additional evaluations for ResNet-18 CIFAR-10 models on additional adversaries including Wasserstein adversarial attacks, JPEG attacks, elastic attacks, and perceptual attacks.  For Wasserstein adversarial attacks, we use PGD with dual projection \citep{wu2020stronger}.  We use $\ell_{\infty}$ JPEG, $\ell_1$ JPEG and elastic attacks from \citep{kang2019robustness} and AlexNet LPIPS-based attacks (PPGD and LPA) from \citep{laidlaw2020perceptual} with perturbation size $\epsilon$ specified in Table \ref{tab:additional_acc}.  Overall, we find that using variation regularization with $\ell_p$ sources also improves performance on these additional adversaries.  This suggests that an expansion function exists between variation on $\ell_p$ sources to more complicated threat models, including more perceptually-aligned threat models such as the bounded AlexNet LPIPS distance used by PPGD and LPA attacks.

\begin{table}[ht]
    \centering
    \fontsize{8}{9}\selectfont
    \begin{tabular}{@{}cc|cc|ccccccc@{}}
    \hline
          & & & & \multicolumn{7}{c}{Union with Source} \\
         \cline{5-11}
          Source & $\lambda$ & Clean & Source & \multicolumn{2}{c}{Wasserstein }  & $\ell_{\infty}$ JPEG & $\ell_{1}$ JPEG & Elastic & PPGD &  LPA \\
          & & acc & acc &$\epsilon=0.007$& $\epsilon=0.01$ & $\epsilon = 0.125$ & $\epsilon = 64$ & $\epsilon=0.25$ &  $\epsilon=0.5$ &  $\epsilon=0.5$\\
         \hline
          $\ell_{2}$ & 0 & \textbf{88.49}& 66.65 & 45.80 & 31.75 & 48.73 & 6.42 & 13.49 & 3.65 &  0.49 \\
          $\ell_{2}$ & 1& 85.21 & \textbf{67.38} & \textbf{48.32} & \textbf{47.58} & \textbf{56.99} & \textbf{21.75} & \textbf{25.67} &  \textbf{24.21}  & \textbf{4.12} \\
          
          \hline
          $\ell_{\infty}$ & 0 & \textbf{82.83} & 47.47 & 26.03 & 16.62 & 34.45 & 2.32 & 25.23  & 2.44  & 0.28 \\
         $\ell_{\infty}$ & 0.5& 72.91 & \textbf{48.84} & \textbf{29.50} & \textbf{20.89} & \textbf{36.63} & \textbf{7.25} & \textbf{29.62}  & \textbf{4.96} & \textbf{2.18} \\
         \hline
    \end{tabular}
    \caption{Robust accuracy of ResNet-18 models on CIFAR-10 evaluated on additional adversaries including Wasserstein adversaries \citep{wu2020stronger}, JPEG compression adversaries and elastic adversary \citep{kang2019robustness}, and AlexNet LPIPS perceptual adversaries \citep{laidlaw2020perceptual}.  Perturbation size $\epsilon$ is specified for each threat model.  Accuracies on each target adversary reported are given with a union computed on the source.}
    \label{tab:additional_acc}
\end{table}

\subsection{Comparison to training with all attacks}
In this section, we compare to the MAX adversarial training approach from \citet{TB19} which trains directly on all target threat models of interest.  Since MAX training uses information about the target threat model for training while our approach does not, the union accuracies achieved via MAX training should be viewed as an upper bound on performance.  For models using MAX training, we train for a total of 100 epochs and evaluate performance on the model saved at the final epoch.  We compare robust accuracies for MAX training to robust accuracies of models trained with variation regularization from Table \ref{tab:logits_acc}.  For fair comparison, we report robust accuracies of models trained with variation regularization measured through evaluating on the target threat model (instead of the union of target with source threat model as in Table \ref{tab:logits_acc}.  We provide results in Table \ref{tab:max_train}.  We note that in general, training with the union of all attacks achieves more balanced accuracies across threat models.  The only exception is with WRN-28-10 for which MAX training achieves lower union accuracy and StAdv accuracy; however, this may be due to using only 100 epochs of training.

\begin{table}[h]
    \centering
    %\footnotesize
\fontsize{8}{9}\selectfont
    \begin{tabular}{@{}cccc|c|cccc|c@{}}
    \hline
         Dataset & Architecture & Source & $\lambda$ & Clean  & $\ell_{\infty}$ & $\ell_2$ & StAdv & Re- & Union \\
         & & & & acc  &$\epsilon=\frac{12}{255}$& $\epsilon=1$ & &color & all \\
         \hline
         
         CIFAR-10 & ResNet-18 & MAX & 0 & 77.81 & 41.76 & 36.74 & 38.90 & 63.99 & 27.23\\
         CIFAR-10 & ResNet-18 & $\ell_{2}$ & 1& 85.21 & 13.39 & 40.79 & 32.15 & 60.07 & 11.27 \\
         
         CIFAR-10 & ResNet-18 & $\ell_{\infty}$ & 0.5& 72.91& 33.66 & 24.34 & 16.09 & 47.38 & 10.58\\
        
         \hline
         
         \hline
         CIFAR-10 & WRN-28-10 & MAX & 0 & 80.98 & 29.78 & 38.25 & 9.83 & 57.60 & 4.98 \\
         CIFAR-10 & WRN-28-10 & $\ell_{\infty}$ & 0.7 & 72.73 & 35.08 & 22.35 & 23.15 & 49.01 & 13.26\\
         \hline
         CIFAR-10 & VGG-16 & MAX & 0 & 67.26 & 29.00 & 41.48 & 42.98 & 59.11 & 26.12 \\
         CIFAR-10 & VGG-16 & $\ell_{\infty}$ & 0.1 & 77.80  & 28.38 & 32.08 & 11.96 & 62.24 & 5.38 \\
                 \hline
         ImageNette & ResNet-18 & MAX & 0 & 69.32 & 38.57 & 64.36 & 57.04 & 59.34 & 37.25 \\
         ImageNette & ResNet-18 & $\ell_2$ & 1 & 85.22 & 9.50 & 80.43 & 16.25 & 44.99 & 5.83 \\
         ImageNette & ResNet-18 & $\ell_{\infty}$ & 0.1 & 78.01 & 35.54 & 72.94 & 51.75 & 64.15 & 31.03 \\
         
         \hline
         CIFAR-100 & ResNet-18 & MAX & 0 & 52.06 & 13.29 & 19.35 & 8.01 & 26.42 & 5.04 \\
         CIFAR-100 & ResNet-18 & $\ell_{2}$  & 0.75 & 51.53 & 11.46 & 25.64 & 4.17 & 18.98 & 2.26 \\
         CIFAR-100 & ResNet-18 & $\ell_{\infty}$  & 0.2 & 48.97 & 16.51 & 15.81 & 4.70 & 24.01 & 2.59 \\
         \hline
    \end{tabular}
    \caption{Robust accuracy of various models trained at different strengths of VR applied on logits on various threat models.  Source of MAX represents the accuracies obtained by directly training on all target threat models. The union all column reports the accuracy on the union across all listed threat models.}
    \vspace{-5pt}
    \label{tab:max_train}
\end{table}

\subsection{Combining variation regularization with TRADES}
\label{app:trades-VR}
In the main paper, we provided experiments with variation regularization combined with PGD adversarial training \citep{madry2017towards} on $\ell_{\infty}$ and $\ell_2$ source threat models.  We note that variation regularization is not exclusive to PGD adversarial training and can be applied in conjunction with other adversarial training based techniques including TRADES \citep{zhang2019theoretically}.  We present results for TRADES-VR with TRADES hyperparameter $\beta = 6.0$ in Table \ref{tab:trades_vr}. Interestingly, we find that compared to PGD adversarial training results in Table \ref{tab:logits_acc}, TRADES generally has better performance on larger $\ell_{\infty}$ and $\ell_2$ threat models (on par with AT-VR with PGD).  We find that across architectures and datasets applying variation regularization over TRADES adversarial training generally improves robustness on unforeseen $\ell_{\infty}$ and StAdv threat models, but trades off clean accuracy, source accuracy, and unforeseen $\ell_2$ target accuracy.  Despite this trade-off, we find that TRADES-VR consistently improves on the accuracy across the union of all threat models in comparison to standard TRADES.

\begin{table}[ht]
    \centering
    \fontsize{8}{9}\selectfont
    \begin{tabular}{@{}cccc|cc|cccc|c@{}}
    \hline
         & & & & & & \multicolumn{4}{c|}{Union with Source} &\\
         \cline{7-10}
         Dataset & Architecture & Source & $\lambda$ & Clean & Source & $\ell_{\infty}$ & $\ell_2$ & StAdv & Re- & Union \\
         & & & & acc & acc &$\epsilon=\frac{12}{255}$& $\epsilon=1$ & &color & all \\
         \hline

         CIFAR-10 & ResNet-18 & $\ell_{2}$ & 0 & \textbf{86.79} & \textbf{68.99} & 13.13 & \textbf{43.58} & 1.89 & \textbf{68.85} & 1.00\\
         %CIFAR-10 & ResNet-18 & $\ell_{2}$ & 1& 85.35 & 67.87 & 12.82 & 41.20 & 26.31 & 67.78 & 10.21 \\
         CIFAR-10 & ResNet-18 & $\ell_{2}$ & 2& 79.81 & 64.84 & \textbf{16.04} & 42.46 & \textbf{42.05} & 64.76 & \textbf{14.67}\\
         
         \hline
         
         CIFAR-10 & ResNet-18 & $\ell_{\infty}$ & 0 & \textbf{82.67} & \textbf{49.15} & 31.00 & \textbf{28.04} & 5.32 &   \textbf{49.15} & 3.64 \\
         CIFAR-10 & ResNet-18 & $\ell_{\infty}$ & 0.5& 79.11 & 48.98 & \textbf{32.08} & 26.54 & \textbf{16.21} &  48.98 & \textbf{11.78} \\
         %CIFAR-10 & ResNet-18 & $\ell_{\infty}$ & 1.0 & 76.02 & 47.40 & 30.81 & 22.54 & \textbf{25.39} & 47.40 & \textbf{16.13} \\
         
         \hline
         
         CIFAR-10 & WRN-28-10 & $\ell_{\infty}$ & 0 & \textbf{84.73} & \textbf{52.09} & 32.74 &\textbf{24.68 }& 4.54 & \textbf{52.09} & 2.98 \\
         %CIFAR-10 & WRN-28-10 & $\ell_{\infty}$ & 0.5 & 84.55 & 50.98 & 31.97 & 21.57 & \textbf{14.79} & 50.98 & \textbf{10.20} \\
         CIFAR-10 & WRN-28-10 & $\ell_{\infty}$ & 1 & 75.99 & 50.09 & \textbf{33.96} & 21.76 & \textbf{25.09} & 50.09 & \textbf{15.08} \\
         
         %ImageNette & ResNet-18 & $\ell_{\infty}$ & 0.5 \\
         \hline
         ImageNette & ResNet-18 & $\ell_{2}$ & 0 & \textbf{88.66} & \textbf{85.55} & 0.08 & \textbf{80.71} & 1.94 & 72.74 & 0.03 \\
         %ImageNette & ResNet-18 & $\ell_{2}$ & 1 & 86.01 & 83.75 & \textbf{8.61} & \textbf{80.79} & \textbf{19.49} & \textbf{75.59}  & \textbf{6.24}\\
         ImageNette & ResNet-18 & $\ell_{2}$ & 2 & 83.80 & 82.27 & \textbf{14.37} & 80.10 & \textbf{27.08} & \textbf{75.03} & \textbf{11.75} \\
         
         \hline
         ImageNette & ResNet-18 & $\ell_{\infty}$ & 0 & \textbf{78.32} & \textbf{50.62} & 35.54 & \textbf{50.62} & 44.03 & \textbf{50.62} & 33.32\\
         ImageNette & ResNet-18 & $\ell_{\infty}$ & 0.2 & 73.83 & 48.79 & \textbf{36.15} & 48.79 & \textbf{44.97}  &  48.79 &  \textbf{34.62}\\
         
         \hline
         CIFAR-100 & ResNet-18 & $\ell_{2}$ & 0 & \textbf{58.71} & \textbf{37.79} & 6.73 & 21.40 & 2.72 & \textbf{36.74} & 1.21 \\
         CIFAR-100 & ResNet-18 & $\ell_{2}$ & 1 & 53.44 & 37.71 & \textbf{10.09} & \textbf{24.55} & \textbf{4.05}& 36.59 & \textbf{2.40}\\
         
         \hline
         CIFAR-100 & ResNet-18 & $\ell_{\infty}$ & 0 & \textbf{53.80} & 23.02 & 13.77 & \textbf{15.21} & \textbf{4.94} & 22.99 & 3.33\\
         % CIFAR-100 & ResNet-18 & $\ell_{\infty}$ & 0.3 & 52.06 & 23.70 & 14.82 & 14.92 & 4.83 & 23.65 & 3.36\\
         CIFAR-100 & ResNet-18 & $\ell_{\infty}$ & 0.5 & 51.16 & \textbf{24.39} & \textbf{15.93} & 14.33 & 4.87 & \textbf{24.34} & \textbf{3.40} \\

         \hline
         
    \end{tabular}
    \caption{Robust accuracy of various models trained with TRADES-VR at different strengths of variation regularization applied on logits on various threat models.  Models are trained with either source threat model $\ell_{\infty}$ with radius $\frac{8}{255}$ or $\ell_2$ with radius $0.5$.   The ``source acc" column reports the accuracy on the source attack.  For each individual threat model, we evaluate accuracy on a union with the source threat model.  The union all column reports the accuracy obtained on the union across all listed threat models.}
    \label{tab:trades_vr}
\end{table}

\subsection{Combining variation regularization with other sources}
\label{app:additional_sources}

\begin{table}[ht]
    \centering
    \fontsize{8}{9}\selectfont
    \begin{tabular}{@{}cc|cc|cccc|c@{}}
    \hline
          & & & & \multicolumn{4}{c|}{Union with Source} &\\
         \cline{5-8}
         Source & $\lambda$ & Clean & Source & $\ell_{\infty}$ & $\ell_2$ & StAdv & Re- & Union \\
          & &  acc & acc &$\epsilon=\frac{4}{255}$& $\epsilon=0.5$ &  &color & all \\
         \hline
         StAdv & 0 & \textbf{86.94} & 54.04 & 3.57 & 5.63 & 12.62 & 3.27 & 0.96 \\
         StAdv & 0.5 & 83.88 & 60.11 & 3.29 & 5.66 & 24.64 & 8.60 & 2.39 \\
         StAdv & 1 & 81.24& \textbf{62.78} & \textbf{5.83} & \textbf{9.97}& \textbf{31.09} & \textbf{13.10} & \textbf{4.44} \\
         \hline
         Recolor & 0 & \textbf{94.88} & 39.11 & 5.38 & 3.07 & 0.00 & 21.75 & 0.00 \\
         Recolor & 0.5& 94.18 & 71.36 & 25.39 & 19.22 & 0.06 & 64.39 & 0.03 \\
         Recolor & 1& 94.13 & \textbf{72.92} & \textbf{26.81} & \textbf{20.03} &  \textbf{0.20} & \textbf{66.02} & \textbf{0.17} \\
         \hline
    \end{tabular}
    \caption{Robust accuracy of ResNet-18 models trained using AT-VR with StAdv and Recolor source threat models with variation regularization applied on logits on various threat models. During training we use 0.03 and 0.04 as the perturbation bounds for StAdv and Recolor respectively.  During testing we use 0.05 for StAdv and 0.06 for Recolor.  The ``source acc" column reports the accuracy on the source attack.  For each individual threat model, we evaluate accuracy on a union with the source threat model.  The union all column reports the accuracy obtained on the union across all listed threat models.}
    \label{tab:stadv_Recolor}
\end{table}

To demonstrate that variation regularization can be applied with source threat models outside of $\ell_p$ balls, we evaluate the performance of AT-VR with other sources including StAdv and Recolor.

\subsubsection{Computing variation with StAdv and Recolor sources} In StAdv \citep{XiaoZ0HLS18}, adversarial examples are generated by optimizing for a per pixel flow field $f$, where $f_i$ corresponds to the displacement vector of the $i$th pixel of the image.  This flow field is obtained by solving:
\begin{equation}
\label{eq:functional}
  \argmin_f \ell_{\text{adv}}(x, f) + \tau \ell_{\text{flow}}(f)  
\end{equation}
where $\ell_{\text{adv}}$ is the CW objective \citep{carlini2017towards} and $\ell_{flow}$ is a regularization term minimizing spatial transformation distance to ensure that the perturbation is imperceptible. $\tau$ controls the strength of this regularization.

We adapt this objective to compute variation, replacing $\ell_{\text{adv}}$ with the variation objective.  Rather than solving for a single flow field, we solve
$$\mathcal{V}(h, \text{StAdv}_{\tau}) = \max_{f_1, f_2} ||h(f_1(x)) - h(f_2(x))||_2 - \tau( \ell_{\text{flow}}(f_1) + \ell_{\text{flow}}(f_2))$$
Here $f_1(x)$ and $f_2(x)$ denote the perturbed input after applying $f_1$ and $f_2$ respectively and $h$ denotes our feature extractor.  The second term ensures that both $f_1$ and $f_2$ have small spatial transformation distance.  We solve the optimization problem using PGD.

In Recolor attacks \citep{LaidlawF19}, the objective function takes the same form as Equation \ref{eq:functional} where $f$ is now a color perturbation function.  We optimize for adversarial examples for Recolor in the same way as for StAdv, but incorporate an additional clipping step to ensure that perturbations of each color are within the specified bounds.

\subsubsection{Experimental setup details for StAdv and Recolor sources} We train ResNet-18 models using AT-VR with StAdv and Recolor sources.  For these models, we train with source perturbation bound of 0.03 for StAdv and 0.04 for Recolor attacks.  We use 10 iterations for both StAdv and Recolor during training.  We train for a maximum of 100 epochs and evaluate on the model saved at epoch achieving the highest source accuracy.  For evaluation, we use StAdv perturbation bound of 0.05 and Recolor perturbation bound of 0.06 and use 100 iterations to generate adversarial examples for both attacks.  Additionally, we evaluate on $\ell_{\infty}$ and $\ell_2$ attacks with radius $\frac{4}{255}$ and 0.5 respectively using AutoAttack.

\subsubsection{Experimental Results for StAdv and Recolor sources}
We report results in Table \ref{tab:stadv_Recolor}. Similar to trends observed with $\ell_p$ sources, we find that using variation regularization improves unforeseen robustness when using StAdv and Recolor sources.  For example, with StAdv source, using variation regularization with $\lambda = 1$ increases robust accuracy on unforeseen $\ell_2$ attacks from 3.57\% to 5.83\%, and with Recolor source, using variation regularization with $\lambda=1$ increases robust accuracy on unforeseen $\ell_2$ attacks from 3.07\% to 20.03\%.  The largest increase is for attacks of the same perturbation type but larger radius; for example, for StAdv source ($\epsilon=0.03$) on unforeseen StAdv target ($\epsilon=0.05$), the robust accuracy increases from 12.62\% to 31.09\%.  Similarily, for Recolor source ($\epsilon=0.04$) on unforeseen Recolor target ($\epsilon=0.06$), the robust accuracy increases from 21.75\% to 66.02\%.  Interestingly, we also find that using variation regularization with StAdv and Recolor sources leads to a significant increase in source accuracy as well. For StAdv, source accuracy increases from 54.04\% without variation regularization to 62.78\% with variation regularization at $\lambda=1$. For Recolor source, this increase is even larger; source accuracy increases from 39.11\% without variation regularization to 72.92\% with variation regularization.

Further inspecting the source accuracy of the models trained Recolor source, we find that without variation regularization, the model overfits to the 10-iteration attack used during training.  When evaluated with 10-iteration Recolor, the standard adversarial training ($\lambda=0$) model achieves 86.62\% robust accuracy, but when 100 iterations of the attack is used during testing, the model's accuracy drops to 39.11\%.  Interestingly, variation regularization helps prevent adversarial training from overfitting to the 10-iteration attack, causing the resulting source accuracy on the models with variation regularization to be significantly higher.

\subsection{Computational complexity of AT-VR}
One limitation of AT-VR is that it can be 3x as expensive compared to adversarial training.  This is because the computation for variation also requires gradient based optimization.  We note that this computational expense occurs when we use the same number of iterations of PGD for variation computation as standard adversarial training.  In this section, we study whether we can use fewer iterations of PGD for generating the adversarial example and computing variation.  We present our results for training ResNet-18 on CIFAR-10 with source threat model $\ell_2, \epsilon=0.5$ using AT-VR and standard AT ($\lambda=0$) in Table \ref{tab:fewer_iters}. We find that even with a single iteration, AT-VR is able to significantly improve union accuracy over standard AT.
\begin{table}[ht]
\centering
\begin{tabular}{@{}cc|cc|cccc|c@{}}
\hline
& &  & & \multicolumn{4}{c|}{Union with Source} &\\
         \cline{5-8}
         $\lambda$ & PGD & Clean & Source & $\ell_{\infty}$ & $\ell_2$ & StAdv & Recolor & Union all \\
          & iters & acc & acc &($\epsilon=\frac{12}{255}$)& ($\epsilon=1$) & & &  \\
\hline
0      & 1              & \textbf{89.00} & 66.53          & 5.54            & 31.55          & 0.26             & 33.43              & 0.05           \\
0      & 3              & 88.72          & \textbf{67.58} & 7.07            & 35.47          & 0.55             & 36.41              & 0.18           \\
0      & 10             & 88.49          & 66.65          & 6.44            & 34.72          & 0.76             & \textbf{66.52}     & 0.33           \\
1      & 1              & 86.88          & 67.00          & \textbf{11.52}  & \textbf{37.24} & \textbf{38.34}   & 64.44              & \textbf{10.09}\\
\hline
\end{tabular}
\caption{Robust accuracy of ResNet-18 models trained on CIFAR-10 using AT-VR with a single PGD iteration ($\lambda=1$, PGD iters=1) in comparison to standard adversarial training ($\lambda=0$) with various numbers of PGD iterations.}
\label{tab:fewer_iters}
\end{table}
For $\ell_{\infty}$ adversarial training, we find that a single iteration of training does not work well due to the poor performance of adversarial training with FGSM.

\section{Additional Results for Feature Level AT-VR}
\label{app:add_im_features}
In the main paper, we provide results for AT-VR with variation regularization applied at the layer of the logits.  In terms of our theory, this would correspond to considering the identity function to be the top level classifier. In this section, we consider the top level classifier to be all fully connected layers at the end of the NN architectures used and evaluate variation regularization applied at the input into the fully connected classifier.

\subsection{Expansion function for variation on features} In Figure \ref{fig:cif_feature_exp}, we plot the minimum linear expansion function computed on 315 adversarially trained feature extractors (analogous to Figure \ref{fig:cifar_exp} in the main text).  Additionally, we plot the minimum linear expansion function on 300 randomly initialized feature extractors.  For random initialization, we use Xavier normal initialization \citep{GlorotB10} for weights and standard normal initialization for biases.  We find that we can find a linear expansion function with small slope across $\ell_{\infty}$ and $\ell_2$ source and target pairs.  In comparison to expansion function for variation computed at the logits, we find that the slope of the expansion function $s$ found is similar.

\begin{figure}[ht]
    \centering
    \includegraphics[width=\textwidth]{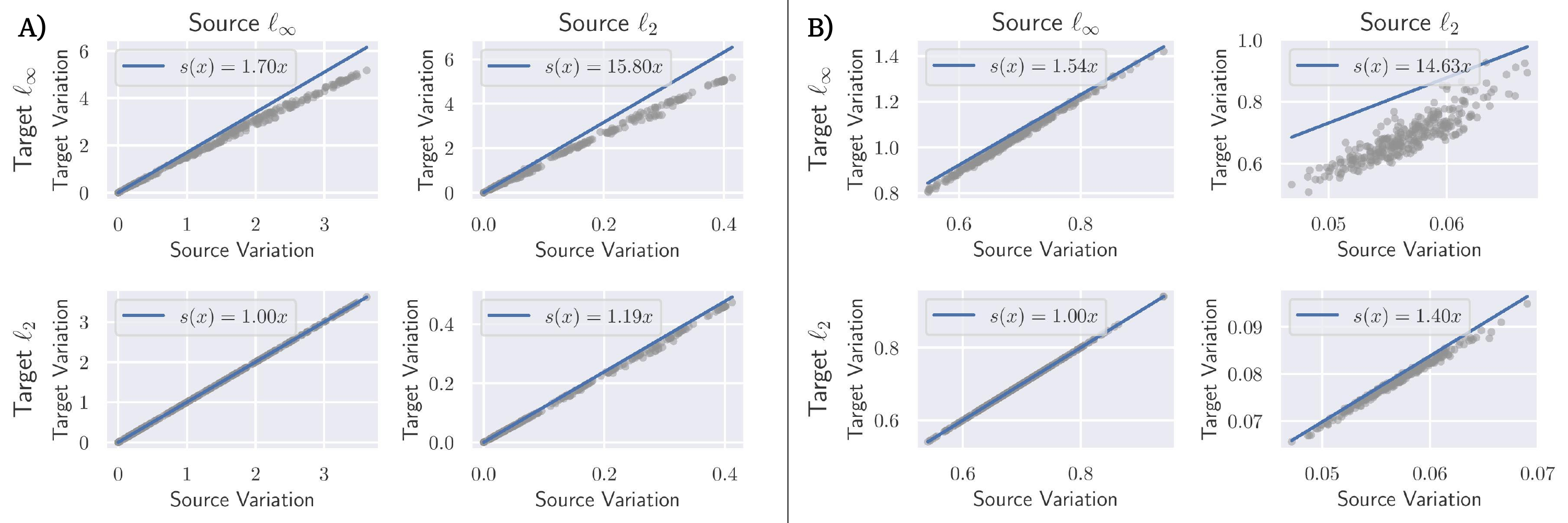}
    \caption{Plots of minimum linear expansion function $s$ shown in blue computed on A) 315 adversarially trained feature extractors and B) 300 randomly initialized feature extractors.  Each grey point represents the variation measured on the source and target attack.  The two columns represent the source adversary ($\ell_{\infty}$ and $\ell_2$ respectively).  The two rows represent the target adversary ($\ell_{\infty}$ and $\ell_2$ respectively).}
    \label{fig:cif_feature_exp}
\end{figure}

\subsection{Additional results with $\ell_p$ target threat models}
We present the unforeseen generalization gap for ResNet-18 models on CIFAR-10 trained with source $\ell_{\infty}$ threat model with radius $\frac{8}{255}$ at various strengths of variation regularization and the corresponding robust accuracy of the model trained with standard AT ($\lambda=0$) and highest variation regularization ($\lambda=2$) in Figure \ref{fig:cif_feature_gen}. We find at large values of unforeseen $\epsilon$, the model trained with variation regularization achieve both smaller unforeseen generalization gap and higher robust accuracy.
\begin{figure*}[ht]
    \centering
    \includegraphics[width=\textwidth]{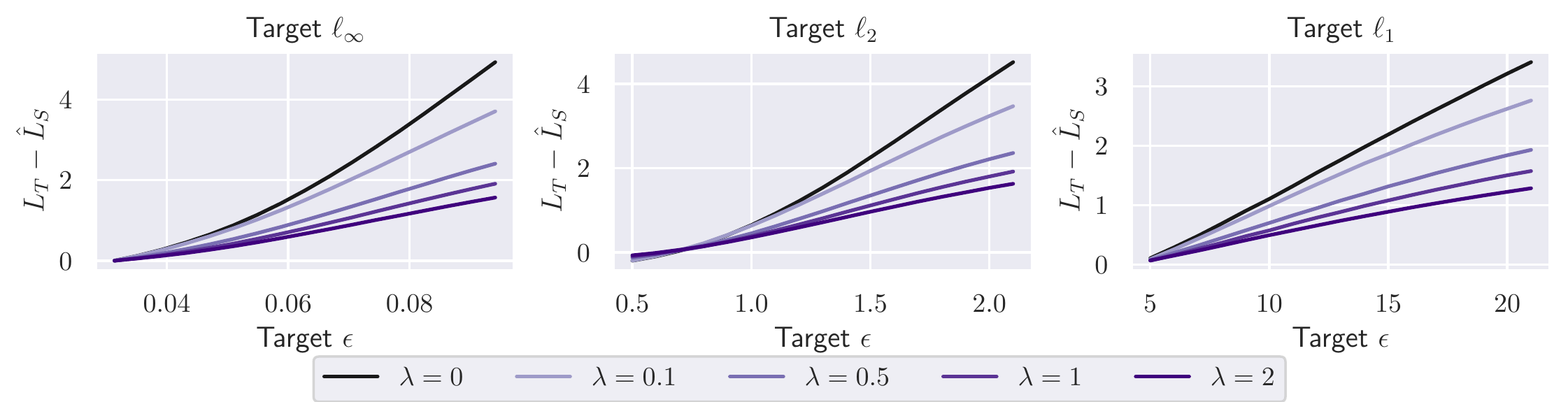}
    \includegraphics[width=1\textwidth]{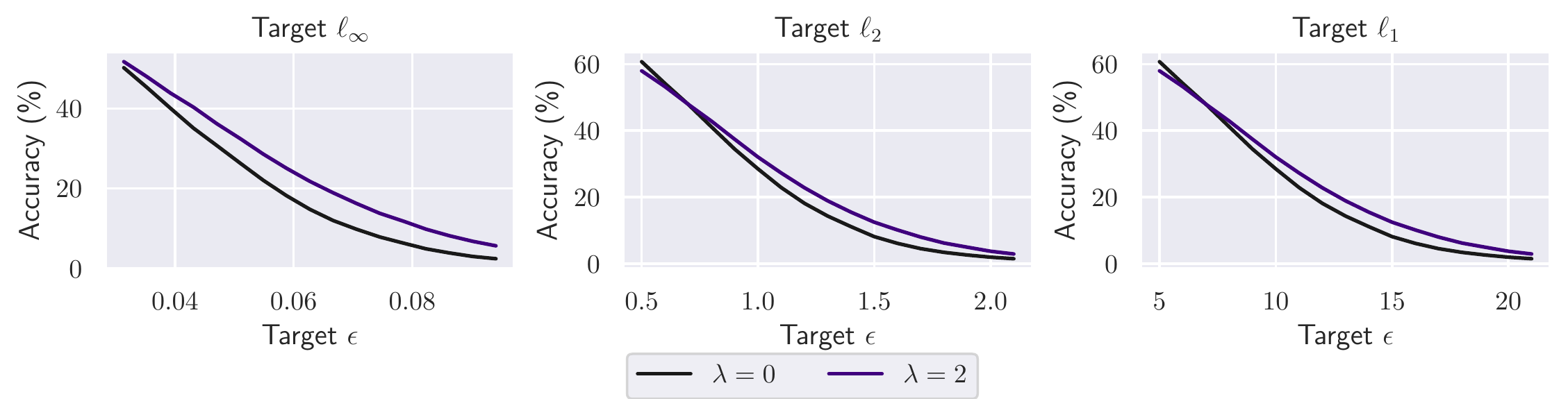}
    \vspace{-20pt}
    \caption{
    Top row: Unforeseen generalization gap of on the CIFAR-10 test set for ResNet-18 models trained using AT-VR at varied regularization strength $\lambda$ measured on adversarial examples of generated by target $\ell_p, p=\{\infty, 2, 1\}$ perturbations with radius $\epsilon$. The generalization gap is measured with respect to cross entropy loss. All models are trained with source $\ell_\infty$ perturbations of radius $\frac{8}{255}$.
    Bottom row: Corresponding robust accuracy of $\lambda=0$ and $\lambda=2$ models displayed in top row.}
    \label{fig:cif_feature_gen}
\end{figure*}

We repeat experiments with ResNet-18 models on CIFAR-10 trained with source $\ell_2$ threat model with radius of 0.5.  We report the measured unforeseen generalization gap to $\ell_{\infty}, \ell_2,$ and $\ell_1$ target threat models at different radii (measured via cross entropy loss on adversarial examples generated with APGD) along with corresponding robust accuracy of the no regularization and maximum regularization strength models in Figure \ref{fig:cif_feature_gen_l2}.  We find that trends observed when the source threat model was $\ell_{\infty}$ are consistent with the trends for $\ell_2$ source threat model: increasing the strength of variation regularization decreases the size of the unforeseen generalization gap and increases robust accuracy across various $\ell_p$ target threat models.
\begin{figure*}[ht]
    \centering
    \includegraphics[width=\textwidth]{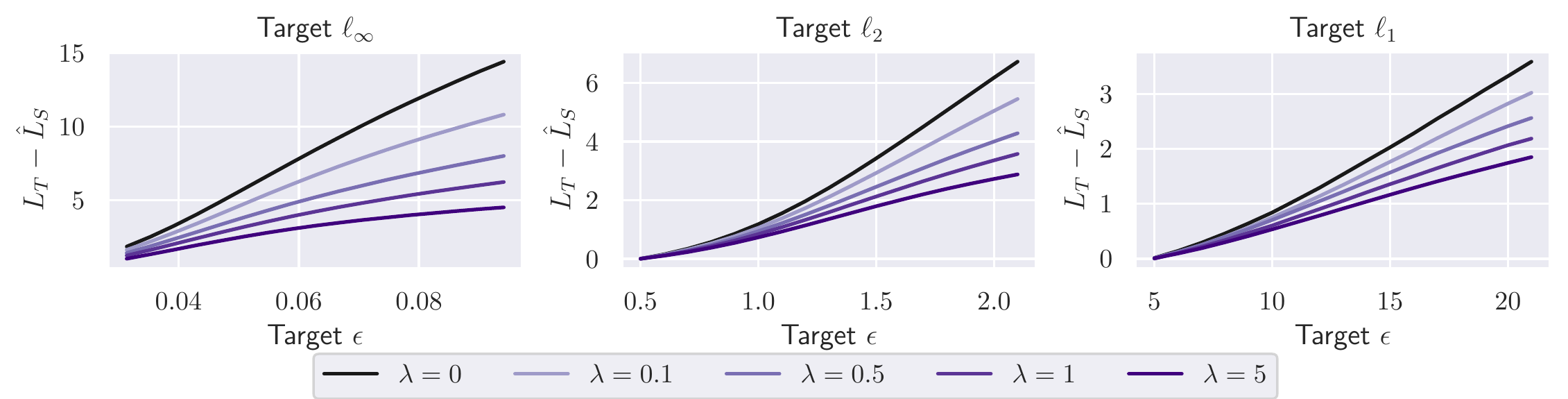}
    \includegraphics[width=1\textwidth]{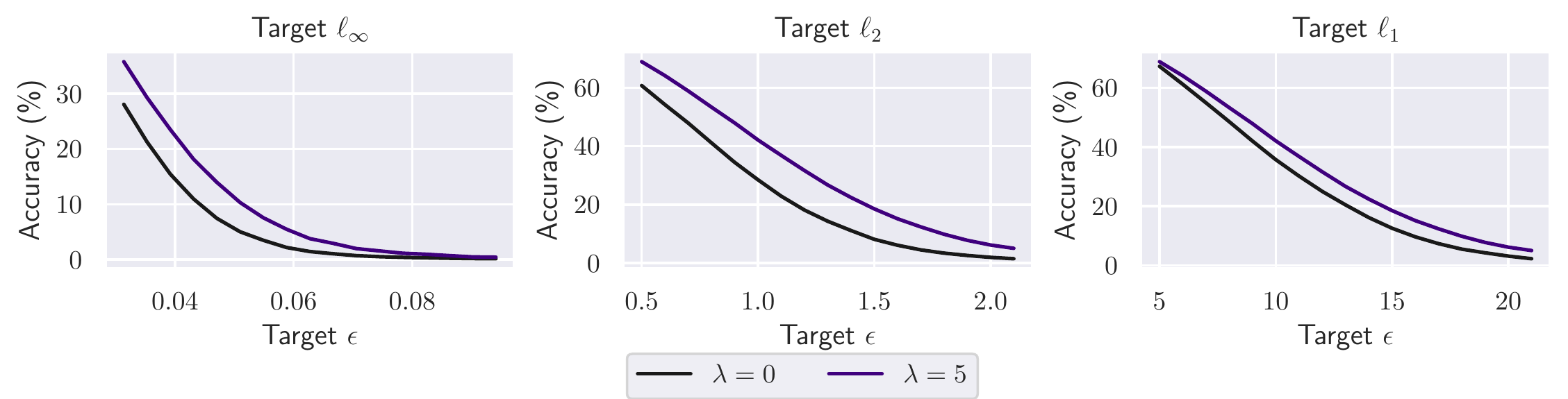}
    \caption{
    Top row: Unforeseen generalization gap of on the CIFAR-10 test set for ResNet-18 models trained using AT-VR at varied regularization strength $\lambda$ measured on adversarial examples of generated by target $\ell_p, p=\{\infty, 2, 1\}$ perturbations with radius $\epsilon$.  The generalization gap is measured with respect to cross entropy loss. All models are trained with source $\ell_2$ perturbations of radius $0.5$.
    Bottom row: Corresponding robust accuracy of $\lambda=0$ and $\lambda=5$ models displayed in top row.}
    \label{fig:cif_feature_gen_l2}
\end{figure*}

\subsection{Robust accuracies with feature level AT-VR}
We repeat experiments corresponding to Table \ref{tab:logits_acc} in the main paper for models trained with feature level AT-VR in Table \ref{tab:unseen_acc}.  We observe similar trends with variation regularization applied at the features instead of logits: variation regularization improves unforeseen accuracy and generally improves source accuracy, but trades off performance on clean images.

\begin{table}[ht]
    \centering
    \fontsize{8}{9}\selectfont
    \begin{tabular}{@{}cccc|cc|cccc|c@{}}
    \hline
         & & & & & & \multicolumn{4}{c|}{Union with Source} &\\
         \cline{7-10}
         Dataset & Architecture & Source & $\lambda$ & Clean & Source & $\ell_{\infty}$ & $\ell_2$ & StAdv & Recolor & Union all \\
         & & & & acc & acc &($\epsilon=\frac{12}{255}$)& ($\epsilon=1$) & & &\\
         
         \hline
         CIFAR-10 & ResNet-18 & $\ell_{2}$ & 0 & \textbf{88.49}& 66.65 & 6.44 & 34.72 & 0.76 & 66.52 & 0.33\\
         CIFAR-10 & ResNet-18 & $\ell_{2}$ & 2.0 &  86.87 & \textbf{68.24}  & 12.66 & 41.05 & 10.01 & \textbf{68.06} & 6.16\\
         CIFAR-10 & ResNet-18 & $\ell_{2}$ & 5.0 & 84.75 & 66.93 & \textbf{13.29 }& \textbf{40.71}& \textbf{29.20}& 66.84 & \textbf{10.83} \\
         
         \hline
         CIFAR-10 & ResNet-18 & $\ell_{\infty}$ & 0 & \textbf{82.83} & 47.47 & 28.09 & 24.94 & 4.38 &  47.47 &  2.48 \\
         CIFAR-10 & ResNet-18 & $\ell_{\infty}$ & 1.0 & 79.72 & \textbf{49.43} & \textbf{32.57} & 26.64 & 11.38 & \textbf{49.43} & 7.09 \\
         CIFAR-10 & ResNet-18 & $\ell_{\infty}$ & 2.0 & 75.58 &48.35 & 32.19& \textbf{26.89}& \textbf{16.56} & 48.35 & \textbf{11.44} \\
         
         \hline
         CIFAR-10 & WRN-28-10 & $\ell_{\infty}$ & 0 &  \textbf{85.93}& 49.86 & 28.73 & 20.89 & 2.28 & 49.86 & 1.10 \\
         CIFAR-10 & WRN-28-10 & $\ell_{\infty}$ & 0.5 & 85.86 & 50.13 & 30.04 & 21.62 & 5.36 & 50.13 & 4.14 \\
         CIFAR-10 & WRN-28-10 & $\ell_{\infty}$ & 1 & 84.27 & \textbf{51.01} & \textbf{31.47} & \textbf{22.86} & \textbf{9.71} & \textbf{51.01} & \textbf{7.59} \\
         \hline
         CIFAR-10 & VGG-16 & $\ell_{\infty}$ & 0 & \textbf{79.67} & 44.36 & 26.14 & 30.82 & 7.31 & 44.36 & 4.35\\
         CIFAR-10 & VGG-16 & $\ell_{\infty}$ & 0.01 & 76.38 & \textbf{44.87} & \textbf{27.35} & \textbf{32.59}& 9.14 & \textbf{44.87}  & 5.69\\
         CIFAR-10 & VGG-16 & $\ell_{\infty}$ & 0.05 & 72.27 & 42.14 & 26.80 & 32.41 & \textbf{12.18} & 42.14  & \textbf{8.02}\\
         
         %ImageNette & ResNet-18 & $\ell_{\infty}$ & 5.0 & 69.94 & 46.04 & 34.50 & 46.01 & 41.48 &  46.04 & 32.82 \\
                 \hline
         ImageNette & ResNet-18 & $\ell_2$ &  0 &\textbf{88.94} & \textbf{84.99} & 0.00 & 79.08 & 1.27 & 72.15  & 0.00 \\
         ImageNette & ResNet-18 & $\ell_2$ & 1.0 & 86.29 & 83.62 & 2.55 & \textbf{80.20} & 8.66 & 73.25 & 1.45 \\
         ImageNette & ResNet-18 & $\ell_2$ & 5.0 &  83.06 & 80.89 &\textbf{ 10.11} & 78.60 & \textbf{22.98} & \textbf{74.22 }&\textbf{7.75} \\
         
          \hline
         
         ImageNette & ResNet-18 & $\ell_{\infty}$ & 0 & \textbf{80.56} & 49.63 & 32.38 & 49.63 & 34.27 & 49.63 & 25.68\\
         ImageNette & ResNet-18 & $\ell_{\infty}$ & 0.05 & 79.06 & \textbf{50.47} & 34.06 & \textbf{50.47} & 37.40 & \textbf{50.47} & 28.89 \\
         ImageNette & ResNet-18 & $\ell_{\infty}$ & 0.1 & 78.09 & 50.01 & \textbf{34.11} & 50.01& \textbf{38.32} & 50.01 &\textbf{ 29.30}  \\
         
          \hline
         CIFAR-100 & ResNet-18 & $\ell_{2}$ & 0 & \textbf{60.92} & 36.01 & 3.98 & 16.90 & 1.80 & 34.87 & 0.40  \\
         CIFAR-100 & ResNet-18 & $\ell_{2}$ & 1 & 56.37 & \textbf{38.66} & 8.65 & \textbf{23.41} &4.81 & \textbf{37.52} & 2.10\\
         CIFAR-100 & ResNet-18 & $\ell_{2}$ & 2 & 52.73 & 36.15 & \textbf{8.76} & 22.33 & \textbf{7.46} & 35.28 & \textbf{3.14}\\
         
         \hline
         CIFAR-100 & ResNet-18 & $\ell_{\infty}$ & 0 & \textbf{54.94} & 22.74 & 12.61 & 14.40 & 3.99 & 22.71 &  2.42 \\
         CIFAR-100 & ResNet-18 & $\ell_{\infty}$ & 0.1 & 54.21 & 23.52 & 13.61 & 15.10& 4.10 & 23.48 & 2.54 \\
         CIFAR-100 & ResNet-18 & $\ell_{\infty}$ & 0.5 & 49.29 & \textbf{24.66} & \textbf{16.02} & \textbf{15.62} & \textbf{5.74 }& \textbf{24.58} & \textbf{3.70}\\
        \hline

    \end{tabular}
    \caption{Robust accuracy of various models trained at different strengths of variation regularization on various threat models.  Models are trained with either source threat model $\ell_{\infty}$ with radius $\frac{8}{255}$ or $\ell_2$ with radius $0.5$.   The ``source acc" column reports the accuracy on the source attack.  For each individual threat model, we evaluate accuracy on a union with the source threat model.  The union all column reports the accuracy obtained on the union across all listed threat models.}
    \label{tab:unseen_acc}
\end{table}

%\textcolor{red}{\input{sections/new_exp}}

\end{document}